\newtheorem{theorem}{Theorem}[section]
\newtheorem{proposition}{Proposition}[section]
\newtheorem{assumption}{Assumption}[section]
\newtheorem{definition}{Definition}[section]
\newtheorem{corollary}{Corollary}[section]
\newtheorem{lemma}{Lemma}[section]
\crefname{theorem}{theorem}{theorems}
\Crefname{theorem}{Theorem}{Theorems}
\crefname{proposition}{proposition}{propositions}
\Crefname{proposition}{Proposition}{Propositions}
\crefname{assumption}{assumption}{assumptions}
\Crefname{assumption}{Assumption}{Assumptions}
\crefname{definition}{definition}{definitions}
\Crefname{definition}{Definition}{Definitions}
\crefname{corollary}{corollary}{corollaries}
\Crefname{corollary}{Corollary}{Corollaries}
\crefname{lemma}{lemma}{lemmas}
\Crefname{lemma}{Lemma}{Lemmas}
\crefname{remark}{remark}{remarks}
\Crefname{remark}{Remark}{Remarks}
\newcommand{\rev}[1]{#1}
\journal{Neural Networks}
\begin{document}

\begin{frontmatter}



\title{Toward Theoretical Insights into Diffusion Trajectory Distillation via Operator Merging}

\author[1,2]{Weiguo Gao}
\ead{wggao@fudan.edu.cn}
\author[1]{Ming Li\corref{cor1}}
\cortext[cor1]{Corresponding author.}
\ead{mingli23@m.fudan.edu.cn}

\affiliation[1]{organization={School of Mathematical Sciences, Fudan University},
city={Shanghai},
postcode={200433},
country={China}}

\affiliation[2]{organization={Shanghai Key Laboratory of Contemporary Applied Mathematics},
city={Shanghai},
postcode={200433},
country={China}}

\begin{abstract}
Diffusion trajectory distillation accelerates sampling by training a student model to approximate the multi-step denoising trajectories of a pretrained teacher model using far fewer steps. Despite strong empirical results, the trade-off between distillation strategy and generative quality remains poorly understood. We provide a theoretical characterization by reinterpreting trajectory distillation as an operator merging problem, differentiating our analysis between two distinct regimes. In the linear Gaussian regime, where approximation error is zero, we isolate optimization error, specifically signal shrinkage driven by finite training time, as the primary bottleneck. This characterization allows us to derive the theoretically optimal merging strategy, which exhibits a variance-driven phase transition and is computable via a Pareto dynamic programming algorithm. In the nonlinear Gaussian mixture regime, we prove that distilling composite steps incurs unavoidable approximation error due to the exponential growth of mixture components, and we quantify how these errors amplify across merges. Together, these results clarify the distinct theoretical mechanisms governing each regime and provide principled guidance for method selection.
\end{abstract}

\begin{keyword}
Diffusion models; Diffusion trajectory distillation; Operator merging; Dynamic programming; Error propagation
\end{keyword}
\end{frontmatter}

\section{Introduction}
\label{sec:introduction}

Diffusion models have become a cornerstone of modern generative modeling, delivering state-of-the-art results across a wide range of domains~\citep{ho2020denoising,karras2022elucidating,nichol2021improved,ren2025zeroth,song2021denoising,song2019generative,song2021score,yu2025tree}. Their success stems from a multi-step denoising process that progressively refines a Gaussian noise into a structured data sample. While this iterative refinement enables high sample fidelity, it also imposes a critical bottleneck: generating a single output often requires hundreds or thousands of steps, rendering diffusion models impractical for real-time applications~\citep{ho2020denoising,song2019generative}.

To overcome this limitation, recent work has turned to \emph{diffusion distillation}~\citep{berthelot2023tract,gu2023boot,li2024reward,lin2024sdxl,luhman2021knowledge,luo2023diff,meng2023distillation,salimans2022progressive,sauer2024adversarial,song2023consistency,xu2025one,yin2024improved,yin2024one,zhou2024score,zhou2025adversarial}, which trains a student model to mimic the behavior of a full diffusion trajectory (i.e., the teacher model) using only a small number of steps. These methods, including trajectory distillation, distribution matching distillation, and adversarial distillation, have demonstrated striking empirical success, with some achieving high-quality synthesis in just one step~\citep{luhman2021knowledge,sauer2024adversarial,yin2024one}. Despite their growing adoption, the mechanisms behind their effectiveness remain poorly understood. In particular, it is unclear how design choices such as merge order, trajectory length, or student model initialization influence generation quality and efficiency. This lack of theoretical understanding makes it difficult to anticipate the impact of these factors and often forces practitioners to rely on trial-and-error or heuristics that may not generalize across datasets or noise schedules. More importantly, without clear guiding principles, it becomes challenging to systematically improve distillation strategies.

In this work, we conduct a detailed analysis of \emph{diffusion trajectory distillation}~\citep{berthelot2023tract,gu2023boot,li2024reward,lin2024sdxl,luhman2021knowledge,meng2023distillation,salimans2022progressive,song2023consistency}, a class of methods that directly compress the sampling trajectory of a teacher model into fewer student steps. This approach avoids the complications associated with auxiliary score networks (as in distribution matching distillation~\citep{luo2023diff,xu2025one,yin2024improved,yin2024one,zhou2024score}) or discriminators (as in adversarial distillation~\citep{lin2024sdxl,sauer2024adversarial,yin2024improved,zhou2025adversarial}), while capturing the core idea that underlies many practical distillation techniques.

To better understand trajectory distillation, we study a simplified setting that permits clean theoretical analysis. As demonstrated by~\citet{li2024understanding}, diffusion models exhibit an inductive bias toward learning \emph{Gaussian structures} under limited capacity. We therefore begin with the unimodal Gaussian case, where both the teacher and the student models can be represented as linear operators applied to noisy observations. In this setting, trajectory distillation reduces to an \emph{operator merging} problem, where a multi-step teacher trajectory is compressed into fewer student steps. We show that each merge causes \emph{signal shrinkage} through a convex-combination effect driven by discretization and the student model's finite optimization time, making explicit how the merge order affects signal preservation.

\begin{table*}[htb]
\centering
\caption{Comparative analysis of theoretical frameworks for diffusion trajectory distillation in linear (Gaussian) versus nonlinear (Gaussian mixtures) regimes. Note that the symbol ``--'' in the optimization error row indicates analytical intractability, and both this analysis and the derivation of optimal strategies in the nonlinear regime are left as future work.}
\smallskip
\label{tab:theoretical_comparison}
\footnotesize
\renewcommand{\arraystretch}{1.15}
\begin{tabular}{@{}p{2.35cm}p{6.7cm}p{8.4cm}@{}}
\toprule
Aspect & Linear regime (Gaussian) & Nonlinear regime (Gaussian mixture) \\
\midrule
Distribution & 
Gaussian \(\mathcal{N}(\bm 0, \bm \Lambda)\) with diagonal \(\bm \Lambda\) (\Cref{as:real_data_distribution}) & 
Gaussian mixtures \(\sum_{k=1}^K \pi_k \mathcal{N}\bigl(\bm{x}_0; \bm{\mu}_k, \bm{\Lambda}_k\bigr)\) (\cref{eq:gmm_assumption}) \\
Optimal denoiser & 
Linear operator \(\alpha_t\bm{\Lambda}(\alpha_t^2 \bm{\Lambda} + \sigma_t^2 \bm{I})^{-1}\bm{z}_t\) (\Cref{prop:optimal_denoising_estimator}) & \(K\)-component affine mixture of experts (MoE) operator
\(\sum_{k=1}^K
\gamma_{k,t}(\bm{z}_t)
\bigl(
\bm{\mu}_k
+
\alpha_t \bm{\Lambda}_k
\bigl(\alpha_t^2 \bm{\Lambda}_k + \sigma_t^2 \bm{I}\bigr)^{-1}
(\bm{z}_t - \alpha_t \bm{\mu}_k)
\bigr)\) (\Cref{prop:optimal_mixture_denoising_estimator})\\
Model param. & 
Time-dependent linear operator \(\bm A_t^{\mathrm{st}}\bm z_t\) (\cref{eq:student_model,eq:teacher_model}) & 
\(K\)-component affine MoE operator \(\sum_{k=1}^K \gamma_{k,t}^{\mathrm{st}}(\bm{z}_t)\bigl(\bm{A}_{k,t}^{\mathrm{st}} \bm{z}_t + \bm{b}_{k,t}^{\mathrm{st}}\bigr)\) (\cref{eq:gmm_teacher_moe,eq:gmm_student_moe}) \\
\midrule
Approximation err. & 
Zero (linear closure holds) & 
Compressing \(K^k\) teacher components into \(K\) student experts incurs nonzero error (\Cref{subsec:gmm_approximation_error} and \Cref{thm:two_step_merging_error}) \\
Optimization err. & 
Finite optimization time forces interpolation between target and initialization (\Cref{subsec:optimization_dynamics_of_the_student} and \Cref{prop:student_gradient_flow}) & 
-- \\
Error propagation & 
Compounding errors accumulate differently based on variance (\Cref{prop:student_gradient_flow}) & 
Early errors are amplified by the teacher model's Lipschitz constant (\Cref{subsec:error_propagation_across_merges} and \Cref{thm:error_propagation_two_stage_merge}) \\
\midrule
Resulting strategy & 
Optimal merge plans exhibit a variance-driven phase transition, while mixed-variance conflicts are resolved via Pareto dynamic programming (\Cref{sec:variance_driven_optimal_merge_plans_and_pareto_dp_in_the_gaussian_regime}) & 
-- \\

\bottomrule
\end{tabular}
\end{table*}

Building on this foundation, we study which merge plans are optimal when the goal is to match the teacher model's output distribution. Using the \(2\)-Wasserstein distance for Gaussian outputs, we prove a sharp \emph{phase transition} in the optimal strategy as a function of data variance, identifying regimes in which standard baselines such as vanilla distillation~\citep{luhman2021knowledge} and sequential BOOT~\citep{gu2023boot} are optimal. Guided by these theoretical characterizations, we present a Pareto dynamic programming algorithm that computes globally optimal merge plans while efficiently pruning inferior intermediate candidates, which is crucial when different dimensions in the data have different variances.

To address multimodality, we extend the analysis to Gaussian mixture data. In this regime, each teacher operator can be written as a \(K\)-component affine mixture-of-experts map, and composing \(k\) steps expands to \(K^k\) effective components. We characterize the unavoidable approximation error when a \(K\)-component student operator distills such a composite teacher, and we show how these errors accumulate across merges, explaining why long-horizon merging becomes substantially harder for multimodal data. We provide a comparative summary of our theoretical frameworks for the linear and nonlinear regimes in~\Cref{tab:theoretical_comparison}.

The main contributions of this paper are as follows:
\begin{itemize}
\item \textbf{Operator-merging foundations and signal shrinkage in the unimodal Gaussian case.}
In the linear Gaussian regime, we derive closed-form teacher operators and formulate trajectory distillation as an operator-merging problem (\Cref{subsec:gaussian_assuption_and_the_optimal_denoising_estimator}). This view isolates a concrete degradation mechanism by characterizing how merging induces signal shrinkage under finite student optimization time (\Cref{subsec:optimization_dynamics_of_the_student}).
\item \textbf{A variance-driven phase transition in optimal merge plans.}
Using the \(2\)-Wasserstein distance between Gaussian output distributions as the objective (\Cref{subsec:w2_metric_and_surrogate_target}), we prove a sharp phase transition in the optimal merging strategy as a function of data variance \(\lambda\) (\Cref{subsec:phase_transitions_in_the_optimal_strategy}). The result recovers sequential BOOT when \(\lambda \leq 1\) and vanilla trajectory distillation when \(\lambda \gg 1\), providing a principled explanation for when each baseline is expected to work well.
\item \textbf{A Pareto dynamic programming algorithm for optimal merge planning.}
Motivated by the optimality characterization, we propose a Pareto dynamic programming algorithm that computes optimal merge plans in high-dimensional settings with mixed variances, while efficiently pruning inferior intermediate candidates (\Cref{subsec:pareto_dynamic_programming_for_optimal_merging}).
\item \textbf{Gaussian-mixture extension with approximation and error propagation bounds.}
We extend the analysis to Gaussian mixtures by modeling each teacher step as a \(K\)-component affine mixture-of-experts operator and showing that \(k\)-step composition expands to \(K^k\) components (\Cref{subsec:optimal_denoising_for_gmm}). In this setting, we quantify the intrinsic approximation error of a \(K\)-component student when distilling a composite teacher (\Cref{subsec:gmm_approximation_error}), and we characterize how this error accumulates across merges (\Cref{subsec:error_propagation_across_merges}).
\end{itemize}

\section{Related work}
\label{sec:related_work}

While diffusion models~\citep{ho2020denoising, karras2022elucidating, song2021score} offer superior generative quality, their reliance on long iterative chains necessitates acceleration. Early efforts focused on fast samplers, reinterpreting diffusion as continuous-time differential equations~\citep{song2021score}. Advanced ODE/SDE solvers~\citep{karras2022elucidating,liu2022pseudo,lu2022dpm,lu2022dpmpp,zhao2023unipc} can reduce inference to \(20\)--\(50\) steps. However, to achieve extreme acceleration (e.g., single-step generation), recent research has pivoted toward diffusion distillation, broadly categorized into trajectory, distribution matching, and adversarial approaches. While a separate line of work pursues one-step generation via formulations that do not fit the distillation paradigm (e.g., \rev{Shortcut models~\citep{frans2025one}}, MeanFlow~\citep{geng2025mean}, AlphaFlow~\citep{zhang2025alphaflow}, SoFlow~\citep{luo2025soflow}), we focus on distillation because it allows the student to inherit the generative power of a high-quality, multi-step teacher. By adopting this teacher--student framework, our analysis can formally characterize which trajectory-merging strategies are optimal for capturing the teacher model's pretrained output distribution.

\paragraph{Diffusion trajectory distillation}
These methods accelerate diffusion sampling by compressing the sampling trajectory of a teacher model into fewer student steps. This includes vanilla distillation~\citep{luhman2021knowledge}, which learns a single-step approximation of the full trajectory; progressive distillation~\citep{lin2024sdxl,meng2023distillation,salimans2022progressive}, which halves the number of steps iteratively; consistency models~\citep{li2024reward,song2023consistency}, which aim to map noisy samples at arbitrary timesteps to a shared clean sample via a consistency objective; and BOOT~\citep{gu2023boot}, which maps the noisiest sample step-by-step to lower-noise states. Under the teacher model's supervision, these methods can reduce inference to under \(10\) steps without significant loss in generative quality.

Despite empirical success, the theoretical understanding of trajectory distillation remains underexplored. While~\citet{xu2024local} demonstrated that partitioning the diffusion process into smaller intervals can reduce the distribution gap between sub-models, the comparative advantage of different trajectory merging strategies remains uncharacterized. This lack of theoretical clarity motivates our analysis. Our work investigates both unimodal and multimodal Gaussian settings, where we recast trajectory distillation as an operator merging problem and prove a sharp phase transition in the optimal merging strategy as a function of data variance.

\paragraph{Distribution matching and adversarial distillation}
Alternative strategies focus on matching probability distributions rather than specific trajectories. Distribution matching distillation minimizes divergences (e.g., Kullback--Leibler divergence~\citep{yin2024improved,yin2024one}, Fisher divergence~\citep{zhou2024score}, \(f\)-divergence~\citep{xu2025one}, or integral probability metrics like the integral KL~\citep{luo2023diff}) between student and teacher distributions. This typically requires training an auxiliary student score network to approximate the gradient of the log-density, complicating the optimization landscape. Adversarial distillation~\citep{lin2024sdxl,sauer2024adversarial,yin2024improved,zhou2025adversarial} employs discriminators to distinguish real from generated samples. While powerful, adversarial training is often unstable at high noise levels where both real and generated samples are heavily corrupted, and is often hybridized with trajectory distillation to ensure convergence~\citep{yin2024one}.

\section{Background}
\label{sec:background}

We begin with a brief overview of diffusion models in~\Cref{subsec:diffusion_models} and trajectory distillation in~\Cref{subsec:diffusion_trajectory_distillation}, unifying several distillation methods under a shared notation. Then, we review the \(2\)-Wasserstein distance for Gaussian distributions in~\Cref{subsec:wasserstein_distance_for_gaussian_outputs}.

\subsection{Diffusion models}
\label{subsec:diffusion_models}

Diffusion models are a class of generative models that generate data samples by gradually transforming Gaussian noise into structured data. In the forward process, a clean data point \(\bm{x}_0\sim p_0\) is progressively corrupted by adding Gaussian noise according to a prescribed schedule. More precisely, for discrete time steps \(t \in \{0, 1, \dotsc, T\}\), the forward process is defined by
\begin{equation}
\label{eq:forward_process}
\bm{z}_t = \alpha_t\bm{x}_0 + \sigma_t\bm{\varepsilon}, \quad \bm{\varepsilon} \sim \mathcal{N}(\bm{0}, \bm{I}),
\end{equation}
where the noise schedule \(\{\alpha_t\}\), \(\{\sigma_t\}\) balances signal preservation and noise injection. We assume that the noise schedule satisfies~\Cref{as:noise_schedule}.

\begin{assumption}[Noise schedule]
\label{as:noise_schedule}
The noise schedule \(\{\alpha_t\}\) and \(\{\sigma_t\}\) satisfies \(\alpha_t^2 + \sigma_t^2 = 1\) for all \(t\), with \(\alpha_t\) decreasing. The boundary conditions are \(\alpha_0 = 1\), \(\sigma_0 = 0\), and \(\alpha_T = 0\), \(\sigma_T = 1\).
\end{assumption}

In the forward process, a neural network parameterized by \(\bm{\eta}\) is trained to estimate the clean sample \(\bm{x}_0\) from a noisy observation \(\bm{z}_t\). It learns an estimator \(\hat{\bm{x}}_{\bm{\eta}}(\bm{z}_t, t)\) by minimizing the expected denoising loss
\begin{equation}
\label{eq:denoising_loss_function}
\mathcal{L}_{\text{denoise}}(\bm{\eta}) = \mathbb{E}_{t \sim \mathcal{U}(0,T)} \bigl[ w_t \cdot \mathbb{E}_{\bm{x}_0 \sim p_0, \bm{\varepsilon} \sim \mathcal{N}(\bm{0}, \bm{I})} \lVert \hat{\bm{x}}_{\bm{\eta}}(\bm{z}_t, t) - \bm{x}_0 \rVert_2^2 \bigr],
\end{equation}
where \(\bm{z}_t = \alpha_t \bm{x}_0 + \sigma_t \bm{\varepsilon}\), and \(w_t\) controls the importance of each timestep \(t\) during training. During sampling, the model reconstructs \(\bm{x}_0\) by iteratively denoising \(\bm{z}_T\). The Denoising Diffusion Implicit Model (DDIM)~\citep{song2021denoising} defines this reverse process deterministically as
\begin{equation}
\label{eq:denosing_process}
\bm{z}_{t-1} = \alpha_{t-1}\hat{\bm{x}}_{\bm{\eta}}(\bm{z}_t, t) + \sigma_{t-1} \cdot \biggl( \frac{\bm{z}_t - \alpha_t \hat{\bm{x}}_{\bm{\eta}}(\bm{z}_t, t)}{\sigma_t} \biggr) \coloneqq \bm{f}_{\bm{\eta}}(\bm{z}_t, t)
\end{equation}
with \(t=T\dotsc,1\). Here, \(\bm{f}_{\bm{\eta}}(\bm{z}_t, t)\) denotes the deterministic update that maps \(\bm{z}_t\) to a less noisy sample \(\bm{z}_{t-1}\). We will adopt this function as the primary parameterization throughout this work.

\subsection{Diffusion trajectory distillation}
\label{subsec:diffusion_trajectory_distillation}

Diffusion trajectory distillation~\citep{berthelot2023tract,gu2023boot,li2024reward,lin2024sdxl,luhman2021knowledge,meng2023distillation,salimans2022progressive,song2023consistency} accelerates sampling by training a student model to approximate multiple denoising steps of a teacher model in \emph{a single step}, aiming to minimize degradation in sample quality. This leads to the definition of a composite operator over \(k\) denoising steps in~\Cref{def:composite_teacher_operator}.

\begin{definition}[Composite operator of multiple steps]
\label{def:composite_teacher_operator}
Given a noisy sample \(\bm{z}_t\), the composite operator \(\mathcal{T}_k(\bm{z}_t)\) denotes the sequential application of \(k\) teacher denoising steps, i.e.,
\begin{equation}
\mathcal{T}_k(\bm{z}_t) \coloneqq \bm{f}_{\bm{\eta}}\bigl(\bm{f}_{\bm{\eta}}\bigl(\dotsb \bm{f}_{\bm{\eta}}(\bm{z}_t, t) \dotsb, t-k+2\bigr), t-k+1\bigr),
\end{equation}
which maps \(\bm{z}_t\) to a cleaner sample \(\bm{z}_{t-k}\). Here, \(\bm{f}_{\bm{\eta}}\) is a single-step operator defined in~\eqref{eq:denosing_process}.
\end{definition}

Several trajectory distillation strategies can be unified using this notation, including \emph{vanilla distillation}~\citep{luhman2021knowledge}, \emph{progressive distillation}~\citep{lin2024sdxl,meng2023distillation,salimans2022progressive}, as well as two variants we propose: \emph{sequential consistency}, based on~\citep{li2024reward,song2023consistency}, and \emph{sequential BOOT}, adapted from~\citep{gu2023boot}. We summarize them in~\Cref{tab:distillation_methods}, and provide detailed descriptions and illustrations in~\ref{app:diffusion_trajectory_distillation_methods}.


\begin{table*}[htb]
\centering
\caption{Overview of the four representative diffusion trajectory distillation strategies. Each method trains a student \( \bm{A}^{\mathrm{st}} \) to match a multi-step teacher operator \( \mathcal{T}_k \).}
\label{tab:distillation_methods}
\smallskip
\small
\begin{tabular}{@{}lll@{}}
\toprule
Method & Target & Brief description \\
\midrule
Vanilla distillation & \( \bm{A}^{\mathrm{st}}(\bm{z}_T)\approx\mathcal{T}_T(\bm{z}_T) \) & Approximates the entire trajectory in one step \\
Progressive distillation & \( \bm{A}^{\mathrm{st}}(\bm{z}_t)\approx\mathcal{T}_2(\bm{z}_t) \) & Merges two steps at a time and reduces step iteratively \\
Sequential consistency & \( \bm{A}^{\mathrm{st}}(\bm{z}_t)\approx\mathcal{T}_t(\bm{z}_t) \) & Sequentially maps any timestep \(\bm{z}_t\) directly to \(\bm{z}_0\) \\
Sequential BOOT & \( \bm{A}^{\mathrm{st}}(\bm{z}_T)\approx\mathcal{T}_k(\bm{z}_T) \) & Fixes the input \(\bm{z}_T\) and sequentially extends the target to \(\bm{z}_0\) \\
\bottomrule
\end{tabular}
\end{table*}

\subsection{Wasserstein distance for Gaussian outputs}
\label{subsec:wasserstein_distance_for_gaussian_outputs}

We quantify the discrepancy between student and teacher output distributions using the squared \(2\)-Wasserstein distance, see, e.g.,~\citep{santambrogio2015optimal}. For probability measures \(\mu\) and \(\nu\) on \(\mathbb{R}^d\), this is defined as
\begin{equation}
\label{eq:w2_definition}
W_2^2(\mu,\nu)
\coloneqq
\inf_{\pi \in \Pi(\mu,\nu)}
\mathbb{E}_{(\bm{x},\bm{y})\sim \pi}\bigl[\lVert\bm{x}-\bm{y}\rVert_2^2\bigr],
\end{equation}
where \(\Pi(\mu,\nu)\) is the set of couplings with marginals \(\mu\) and \(\nu\).

Our analysis focuses on centered Gaussian distributions, \(\mathcal{N}(\bm{0},\bm{\Sigma})\) and \(\mathcal{N}(\bm{0},\bm{\Sigma}')\), for which the distance admits a closed-form solution
\begin{equation}
\label{eq:w2_gaussian_closed_form}
W_2^2\bigl(\mathcal{N}(\bm{0},\bm{\Sigma}),\mathcal{N}(\bm{0},\bm{\Sigma}')\bigr)
=
\mathrm{tr}\bigl(\bm{\Sigma} + \bm{\Sigma}' - 2\bigl(\bm{\Sigma}^{1/2}\bm{\Sigma}'\bm{\Sigma}^{1/2}\bigr)^{1/2}\bigr).
\end{equation}
In the specific case where the covariance matrices are diagonal, i.e., \(\bm{\Sigma}=\mathrm{Diag}(\sigma_1^2,\dotsc,\sigma_d^2)\) and \(\bm{\Sigma}'=\mathrm{Diag}((\sigma_1')^2,\dotsc,(\sigma_d')^2)\), this simplifies to the Euclidean distance between the vectors of standard deviations, i.e.,
\begin{equation}
\label{eq:w2_diag_linear_maps}
W_2^2\bigl(\mathcal{N}(\bm{0},\bm{\Sigma}),\mathcal{N}(\bm{0},\bm{\Sigma}')\bigr)
=
\sum_{i=1}^d (\sigma_i-\sigma_i')^2.
\end{equation}

\section{Operator-merging foundations in the Gaussian regime}
\label{sec:operator_merging_foundations_in_the_gaussian_regime}

This section lays the theoretical foundations for trajectory distillation in a linear regime. In~\Cref{subsec:gaussian_assuption_and_the_optimal_denoising_estimator}, we first characterize the optimal denoising estimator under diagonal Gaussian data, and show that DDIM updates correspond to coordinate-wise linear operators. We then analyze the training dynamics of a student model that learns to approximate the multi-step teacher operator via gradient flow in~\Cref{subsec:optimization_dynamics_of_the_student}. Finally, in~\Cref{subsec:w2_metric_and_surrogate_target}, we introduce a variance-corrected Wasserstein metric to evaluate the distilled student model.

\subsection{Gaussian assumption and the optimal denoising estimator}
\label{subsec:gaussian_assuption_and_the_optimal_denoising_estimator}

As diffusion models have an implicit bias toward learning Gaussian structures in data under limited capacity~\citep{li2024understanding}, we assume in~\Cref{as:real_data_distribution} that the real data distribution \(p_0\) is a centered Gaussian with diagonal covariance. This simplification, commonly used in prior work~\citep{li2024understanding,wang2023diffusion,wang2023hidden}, permits analytical tractability. While we restrict our focus to the single Gaussian case in this section to establish core principles, we will extend our analysis to Gaussian mixture in~\Cref{sec:approximation_error_and_error_propagation_in_the_gaussian_mixture_regime}.

\begin{assumption}[Gaussian distribution]
\label{as:real_data_distribution}
We assume that the real data distribution \(p_0\) is a \(d\)-dimensional centered Gaussian distribution with diagonal covariance matrix, i.e., \(p_0 = \mathcal{N}(\bm{0}, \bm{\Lambda})\), where \(\bm{\Lambda} = \mathrm{Diag}(\lambda_1, \dotsc, \lambda_d)\), and the eigenvalues satisfy \(\lambda_1\geq \lambda_2\geq\dotsb\geq\lambda_d\geq 0\).
\end{assumption}

The diagonality of the covariance captures key second-order structure while simplifying the analysis. It is justified when working in a transformed basis (e.g., via PCA~\citep{mackiewicz1993principal} or learned encoders), where dominant correlations are disentangled and the remaining components are approximately uncorrelated.

Under~\Cref{as:real_data_distribution}, we can derive a closed-form expression for the optimal denoiser \(\hat{\bm{x}}_0^\star(\bm{z}_t, t)=\mathbb{E}[\bm{x}_0|\bm{z}_t]\) which minimizes the denoising loss~\eqref{eq:denoising_loss_function}. \Cref{prop:optimal_denoising_estimator} shows that it takes the form of a \emph{linear} function in \(\bm{z}_t\).

\begin{proposition}[Optimal denoising estimator for Gaussian]
\label{prop:optimal_denoising_estimator}
Assume the real data distribution \(p_0\) is given by~\Cref{as:real_data_distribution}, the forward process follows~\eqref{eq:forward_process}, and the denoising estimator minimizes the denoising loss~\eqref{eq:denoising_loss_function}. Then the optimal denoising estimator for \(\bm{z}_t\) is
\begin{equation}
\hat{\bm{x}}_0^\star(\bm{z}_t, t) = \mathbb{E}[\bm{x}_0|\bm{z}_t] = \alpha_t\bm{\Lambda}(\alpha_t^2 \bm{\Lambda} + \sigma_t^2 \bm{I})^{-1}\bm{z}_t.
\end{equation}
\end{proposition}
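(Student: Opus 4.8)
The plan is to proceed in two stages: first reduce the loss-minimization problem to the computation of a conditional expectation, and then evaluate that conditional expectation explicitly using the joint Gaussianity of \((\bm{x}_0, \bm{z}_t)\). For the first stage, I would observe that in the denoising loss~\eqref{eq:denoising_loss_function} the outer expectation over \(t\) weights each timestep by \(w_t > 0\), and the integrand \(\mathbb{E}_{\bm{x}_0,\bm{\varepsilon}}\lVert \hat{\bm{x}}(\bm{z}_t,t) - \bm{x}_0\rVert_2^2\) is nonnegative and may be minimized independently for each fixed \(t\). For a single timestep, the orthogonality principle for the squared \(L^2\) loss shows that over all measurable functions of \(\bm{z}_t\) the unique minimizer of \(\mathbb{E}\lVert \hat{\bm{x}}(\bm{z}_t) - \bm{x}_0\rVert_2^2\) is the conditional expectation \(\mathbb{E}[\bm{x}_0\mid\bm{z}_t]\); this follows from the bias-variance decomposition \(\mathbb{E}\lVert\hat{\bm{x}} - \bm{x}_0\rVert_2^2 = \mathbb{E}\lVert\hat{\bm{x}} - \mathbb{E}[\bm{x}_0\mid\bm{z}_t]\rVert_2^2 + \mathbb{E}\lVert\mathbb{E}[\bm{x}_0\mid\bm{z}_t] - \bm{x}_0\rVert_2^2\), whose second term is independent of \(\hat{\bm{x}}\). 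Since the resulting minimizer turns out to be linear in \(\bm{z}_t\), it lies within the model's expressive class, so the Bayes-optimal unconstrained predictor is attained and we may identify \(\hat{\bm{x}}_0^\star(\bm{z}_t,t)\) with \(\mathbb{E}[\bm{x}_0\mid\bm{z}_t]\).

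For the second stage, I would exploit that under~\Cref{as:real_data_distribution} and the forward model~\eqref{eq:forward_process}, the pair \((\bm{x}_0,\bm{z}_t)\) is jointly Gaussian and centered, with \(\mathrm{Cov}(\bm{x}_0) = \bm{\Lambda}\), \(\mathrm{Cov}(\bm{z}_t) = \alpha_t^2\bm{\Lambda} + \sigma_t^2\bm{I}\), and cross-covariance \(\mathrm{Cov}(\bm{x}_0,\bm{z}_t) = \alpha_t\bm{\Lambda}\). The Gaussian conditioning formula then yields \(\mathbb{E}[\bm{x}_0\mid\bm{z}_t] = \mathrm{Cov}(\bm{x}_0,\bm{z}_t)\,\mathrm{Cov}(\bm{z}_t)^{-1}\bm{z}_t = \alpha_t\bm{\Lambda}(\alpha_t^2\bm{\Lambda} + \sigma_t^2\bm{I})^{-1}\bm{z}_t\), which is exactly the claimed form. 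Because \(\bm{\Lambda}\) is diagonal, the covariance of \(\bm{z}_t\) is diagonal with entries \(\alpha_t^2\lambda_i + \sigma_t^2 \geq \sigma_t^2 > 0\) for \(t>0\), so it is invertible and the whole computation decouples coordinate-wise, reducing to the scalar posterior-mean formula \(\alpha_t\lambda_i/(\alpha_t^2\lambda_i + \sigma_t^2)\); deriving this scalar version directly by completing the square in the product of the prior \(\mathcal{N}(0,\lambda_i)\) and the likelihood \(\mathcal{N}(\alpha_t x_0,\sigma_t^2)\) would sidestep any appeal to the general matrix identity.

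I expect no deep obstacle here, as the result is a clean instance of Gaussian conditioning. The only point requiring genuine care is the conceptual identification in the first stage, namely arguing that minimizing the parametric denoising loss recovers the unconstrained Bayes-optimal predictor; this hinges on the linear conditional mean lying inside the model class, which I would state as the operating assumption on model capacity. The remaining algebra is routine given the diagonal structure, and I would present it coordinate-wise to keep the inversion transparent.
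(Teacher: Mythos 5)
Your proposal is correct and takes essentially the same route as the paper's proof: both reduce the problem to computing \(\mathbb{E}[\bm{x}_0\mid\bm{z}_t]\), establish the joint Gaussianity of \((\bm{x}_0,\bm{z}_t)\), and apply the standard Gaussian conditioning formula to obtain \(\alpha_t\bm{\Lambda}(\alpha_t^2\bm{\Lambda}+\sigma_t^2\bm{I})^{-1}\bm{z}_t\). The only cosmetic differences are that you make the orthogonality-principle step explicit (the paper takes the MMSE-equals-conditional-mean fact as standard) and work directly in the centered case, whereas the paper first proves the general affine case \(\mathcal{N}(\bm{\mu},\bm{\Sigma})\) and then specializes to \(\bm{\mu}=\bm{0}\), \(\bm{\Sigma}=\bm{\Lambda}\).
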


\begin{proof}[Proof of~\Cref{prop:optimal_denoising_estimator}]
We prove a more general result. Suppose \(\bm{x}_0 \sim \mathcal{N}(\bm{\mu}, \bm{\Sigma})\), and the forward process is defined by~\eqref{eq:forward_process}, i.e.,
\begin{equation}
\bm{z}_t = \alpha_t \bm{x}_0 + \sigma_t \bm{\epsilon}, \quad \bm{\epsilon} \sim \mathcal{N}(\bm{0}, \bm{I}).
\end{equation}
Then the conditional distribution \(p(\bm{x}_0 | \bm{z}_t)\) is Gaussian, and the optimal estimator minimizing~\eqref{eq:denoising_loss_function} is its conditional expectation:
\begin{equation}
\mathbb{E}[\bm{x}_0 | \bm{z}_t] = \bm{A}_t \bm{z}_t + \bm{b}_t,
\end{equation}
where
\begin{equation}
\bm{A}_t = \alpha_t \bm{\Sigma}( \alpha_t^2 \bm{\Sigma} + \sigma_t^2 \bm{I})^{-1},
\quad
\bm{b}_t = ( \bm{I} - \alpha_t\bm{A}_t) \bm{\mu}.
\end{equation}
To derive this, we note that \(\bm{z}_t | \bm{x}_0 \sim \mathcal{N}(\alpha_t \bm{x}_0, \sigma_t^2 \bm{I})\), and hence the joint distribution is
\begin{equation}
\begin{bmatrix}
\bm{x}_0 \\
\bm{z}_t
\end{bmatrix}
\sim \mathcal{N}\Biggl(
\begin{bmatrix}
\bm{\mu} \\
\alpha_t \bm{\mu}
\end{bmatrix},
\begin{bmatrix}
\bm{\Sigma} & \alpha_t \bm{\Sigma} \\
\alpha_t \bm{\Sigma} & \alpha_t^2 \bm{\Sigma} + \sigma_t^2 \bm{I}
\end{bmatrix}
\Biggr).
\end{equation}
The conditional mean is given by the standard Gaussian conditioning formula
\begin{equation}
\mathbb{E}[\bm{x}_0 | \bm{z}_t] = \bm{\mu} + \alpha_t \bm{\Sigma} ( \alpha_t^2 \bm{\Sigma} + \sigma_t^2 \bm{I} )^{-1} (\bm{z}_t - \alpha_t \bm{\mu}).
\end{equation}
Rewriting this as an affine function of \(\bm{z}_t\), we obtain
\begin{equation}
\mathbb{E}[\bm{x}_0 | \bm{z}_t]
= \underbrace{\alpha_t \bm{\Sigma} ( \alpha_t^2 \bm{\Sigma} + \sigma_t^2 \bm{I})^{-1}}_{\bm{A}_t} \bm{z}_t
+ \underbrace{(\bm{I} - \alpha_t\bm{A}_t) \bm{\mu}}_{\bm{b}_t}.
\end{equation}
Now consider the case where \(\bm{\mu} = \bm{0}\) and \(\bm{\Sigma}\) is diagonal. More generally, suppose \(\bm{\Sigma} = \bm{U} \bm{\Lambda} \bm{U}^\top\) is the eigendecomposition of the covariance. Then
\begin{equation}
\bm{A}_t = \alpha_t \bm{U} \bm{\Lambda} ( \alpha_t^2 \bm{\Lambda} + \sigma_t^2 \bm{I} )^{-1} \bm{U}^\top,
\end{equation}
so the estimator becomes
\begin{equation}
\mathbb{E}[\bm{x}_0 | \bm{z}_t] = \bm{U} \bigl( \alpha_t \bm{\Lambda}( \alpha_t^2 \bm{\Lambda} + \sigma_t^2 \bm{I})^{-1} \bigr) \bm{U}^\top \bm{z}_t.
\end{equation}
In particular, if \(\bm{\Sigma} = \bm{\Lambda}\) is diagonal, then \(\bm{U} = \bm{I}\), and we recover the expression in the proposition:
\begin{equation}
\mathbb{E}[\bm{x}_0 | \bm{z}_t] = \alpha_t \bm{\Lambda} ( \alpha_t^2 \bm{\Lambda} + \sigma_t^2 \bm{I} )^{-1} \bm{z}_t.
\end{equation}
This completes the proof.
\end{proof}

Although~\Cref{prop:optimal_denoising_estimator} assumes a diagonal covariance matrix, the result generalizes. As shown in the proof, for \(p_0 = \mathcal{N}(\bm{\mu}, \bm{\Sigma})\), the optimal denoising estimator is \emph{affine} in \(\bm{z}_t\). This reduces to the linear form in~\Cref{prop:optimal_denoising_estimator} via an affine transformation that centers the mean and diagonalizes the covariance. As a corollary, the relationship between two consecutive steps in DDIM sampling is also captured by a linear operator, as shown in~\Cref{cor:optimal_single_operator}. The proof follows straightforwardly from substituting the optimal denoising estimator into the DDIM update~\eqref{eq:denosing_process}, so we omit it.

\begin{restatable}[Relationship between consecutive steps is linear]{corollary}{OptimalSingleOperator}
\label{cor:optimal_single_operator}
Following the assumptions in~\Cref{prop:optimal_denoising_estimator} and the DDIM sampling process~\eqref{eq:denosing_process}, the relationship between two consecutive steps can be captured by a linear operator:
\begin{equation}
\begin{aligned}
\bm{z}_{t-1} &= \alpha_{t-1}\hat{\bm{x}}_{0}^\star(\bm{z}_t, t) + \sigma_{t-1} \cdot \biggl( \frac{\bm{z}_t - \alpha_t \hat{\bm{x}}_0^\star(\bm{z}_t, t)}{\sigma_t} \biggr)\\
&= (\alpha_{t-1}\alpha_t\bm{\Lambda}+\sigma_{t-1}\sigma_t\bm{I})(\alpha_t^2\bm{\Lambda}+\sigma_t^2\bm{I})^{-1}\bm{z}_t.
\label{eq:ddim_step_linear}
\end{aligned}
\end{equation}
\end{restatable}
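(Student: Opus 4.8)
The plan is to prove this by direct substitution, exploiting the fact that every matrix in sight is a function of the single diagonal matrix $\bm{\Lambda}$, so all the operators commute and the algebra reduces to a scalar-style manipulation applied coordinate-wise. First I would abbreviate the resolvent $\bm{M}_t \coloneqq (\alpha_t^2 \bm{\Lambda} + \sigma_t^2 \bm{I})^{-1}$, so that \Cref{prop:optimal_denoising_estimator} reads $\hat{\bm{x}}_0^\star(\bm{z}_t, t) = \alpha_t \bm{\Lambda} \bm{M}_t \bm{z}_t$. Plugging this into the DDIM update~\eqref{eq:denosing_process} and collecting the coefficient of $\bm{z}_t$ yields
\begin{equation}
\bm{z}_{t-1} = \Bigl[\alpha_{t-1}\alpha_t \bm{\Lambda} \bm{M}_t + \frac{\sigma_{t-1}}{\sigma_t}\bigl(\bm{I} - \alpha_t^2 \bm{\Lambda} \bm{M}_t\bigr)\Bigr]\bm{z}_t.
\end{equation}

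The one nonroutine observation, and the only real step worth highlighting, is the identity that simplifies the noise-direction term. Since $\bm{M}_t$ is the inverse of $\alpha_t^2 \bm{\Lambda} + \sigma_t^2 \bm{I}$, we have $(\alpha_t^2 \bm{\Lambda} + \sigma_t^2 \bm{I})\bm{M}_t = \bm{I}$, hence $\bm{I} - \alpha_t^2 \bm{\Lambda} \bm{M}_t = \sigma_t^2 \bm{M}_t$. Substituting this into the bracket and factoring $\bm{M}_t$ out on the right, the two remaining terms combine as $\alpha_{t-1}\alpha_t \bm{\Lambda} \bm{M}_t + \sigma_{t-1}\sigma_t \bm{M}_t = (\alpha_{t-1}\alpha_t \bm{\Lambda} + \sigma_{t-1}\sigma_t \bm{I})\bm{M}_t$, which is exactly the claimed operator $(\alpha_{t-1}\alpha_t\bm{\Lambda}+\sigma_{t-1}\sigma_t\bm{I})(\alpha_t^2\bm{\Lambda}+\sigma_t^2\bm{I})^{-1}$.

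There is no genuine obstacle here: linearity in $\bm{z}_t$ is immediate from the linearity of $\hat{\bm{x}}_0^\star$ supplied by \Cref{prop:optimal_denoising_estimator}, and the resulting operator is well-defined because $\alpha_t^2 \bm{\Lambda} + \sigma_t^2 \bm{I}$ is invertible whenever $\sigma_t > 0$ (guaranteed for $t \geq 1$ by \Cref{as:noise_schedule}, since $\sigma_0 = 0$ is the only vanishing case and does not arise as a denominator in the update). The only point requiring mild care is that all factorizations above rely on the mutual commutativity of $\bm{\Lambda}$, $\bm{I}$, and $\bm{M}_t$; this is automatic under \Cref{as:real_data_distribution} because each is diagonal in the same basis, which is precisely why the author can dispatch the corollary as a straightforward substitution and omit the details.
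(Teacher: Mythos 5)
Your proposal is correct and matches the paper's intended argument exactly: the paper explicitly omits the proof of~\Cref{cor:optimal_single_operator} as a ``straightforward substitution'' of the estimator from~\Cref{prop:optimal_denoising_estimator} into the DDIM update~\eqref{eq:denosing_process}, which is precisely the computation you carry out, with the resolvent identity \(\bm{I}-\alpha_t^2\bm{\Lambda}(\alpha_t^2\bm{\Lambda}+\sigma_t^2\bm{I})^{-1}=\sigma_t^2(\alpha_t^2\bm{\Lambda}+\sigma_t^2\bm{I})^{-1}\) doing the only nontrivial work. Your added remarks on invertibility for \(t\ge 1\) and on commutativity of the diagonal operators are sound and, if anything, slightly more careful than the paper itself.
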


Since the update matrix \((\alpha_{t-1}\alpha_t\bm{\Lambda}+\sigma_{t-1}\sigma_t\bm{I})(\alpha_t^2\bm{\Lambda}+\sigma_t^2\bm{I})^{-1}\) is diagonal, each coordinate is updated independently. Let \(\lambda_i\) denote the \(i\)th diagonal entry of \(\bm{\Lambda}\), and let \((\bm{z}_t)_i\) and \((\bm{z}_{t-1})_i\) denote the \(i\)th coordinates of \(\bm{z}_t\) and \(\bm{z}_{t-1}\), respectively. Define the signal-noise vector 
\begin{equation}
\bm{v}_t^i \coloneqq (\alpha_t\sqrt{\lambda_i}, \sigma_t)^\top \in \mathbb{R}^2,
\label{eq:signal_noise_vector}
\end{equation}
which compactly encodes the relative strengths of signal and noise. Then the update can be written as
\begin{equation}
\label{eq:single_operator_formula}
(\bm{z}_{t-1})_i = \dfrac{\langle \bm{v}_{t-1}^i, \bm{v}_t^i\rangle}{\lVert\bm{v}_t^i\rVert_2^2}\cdot (\bm{z}_t)_i,
\end{equation}
where \(\langle \cdot, \cdot \rangle\) denotes the standard Euclidean inner product. This reveals that each coordinate is rescaled by the coefficient of the orthogonal projection of \(\bm{v}_{t-1}^i\) onto \(\bm{v}_t^i\). An illustration is provided in~\Cref{fig:vector_diagram}. By iterating this recurrence, the composite operator over \(k\) reverse steps in~\Cref{def:composite_teacher_operator} takes the form
\begin{equation}
\label{eq:composite_operator_formula}
(\mathcal{T}_k(\bm{z}_t))_i = \biggl(\prod_{j=t-k+1}^{t}\dfrac{\langle \bm{v}_{j-1}^i, \bm{v}_j^i\rangle}{\lVert\bm{v}_j^i\rVert_2^2}\biggr)\cdot (\bm{z}_t)_i.
\end{equation}

\begin{figure}[ht]
\centering
\includegraphics[width=1\linewidth]{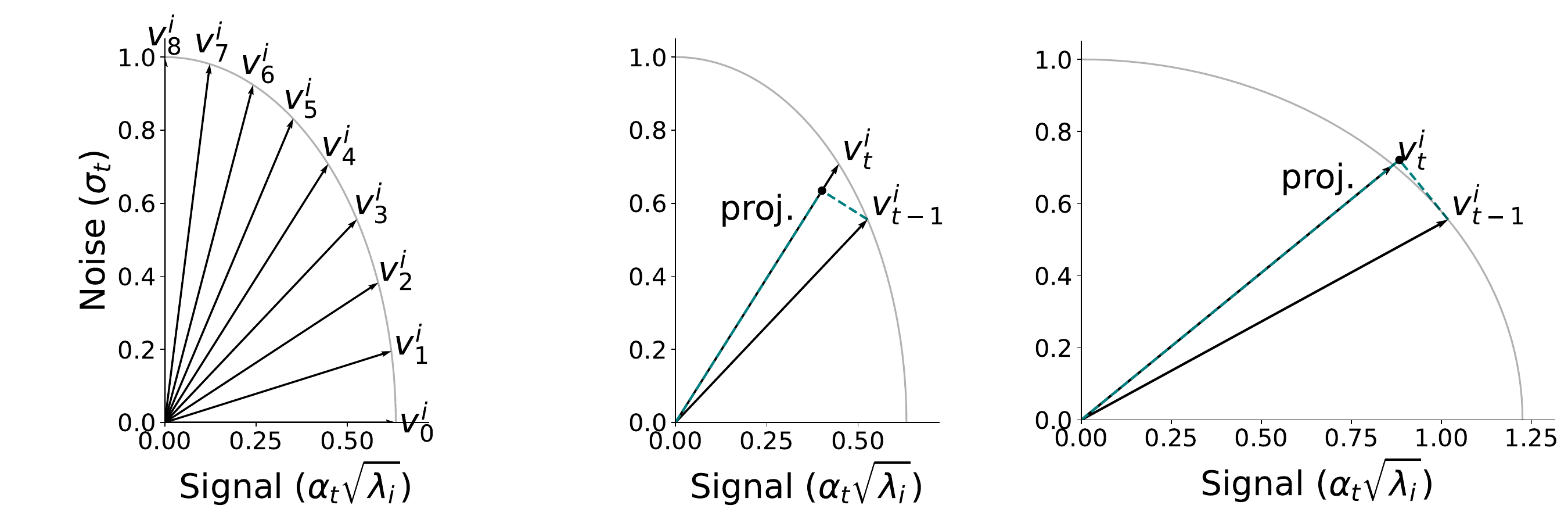}
\caption{
Geometric interpretation of the signal-noise vectors \(\bm{v}_t^i = (\alpha_t\sqrt{\lambda_i}, \sigma_t)\), which lies on an ellipse. The DDIM update~\eqref{eq:single_operator_formula} corresponds to projecting \(\bm{v}_{t-1}^i\) onto \(\bm{v}_t^i\) and computing its ratio with \(\lVert\bm{v}_t^i\rVert_2\). \textbf{Left}: full sequence of vectors (\(T=8\)). \textbf{Middle}: projection when \(\lambda_i < 1\). \textbf{Right}: projection when \(\lambda_i > 1\).}
\label{fig:vector_diagram}
\end{figure}

An important consequence of~\eqref{eq:composite_operator_formula} is that the teacher model cannot fully recover the original data distribution, even under perfect training (i.e., the denoising loss~\eqref{eq:denoising_loss_function} is minimized). This limitation arises from \emph{discretization error}: approximating the continuous-time reverse process via a finite sequence of DDIM steps inherently introduces information loss. As a result, the composite operator contracts the variance of the signal, which is formalized in~\Cref{prop:teacher_contracts_covariance}.

\begin{proposition}[Teacher model contracts the covariance]
\label{prop:teacher_contracts_covariance}
Assume that the noise schedule \(\{\alpha_t\}\) and \(\{\sigma_t\}\) satisfies~\Cref{as:noise_schedule}. Under the composite operator defined in~\eqref{eq:composite_operator_formula}, each output coordinate satisfies \(\lvert(\mathcal{T}_T(\bm{z}_T))_i\rvert \leq \sqrt{\lambda_i} \cdot \lvert(\bm{z}_T)_i\rvert\), with equality if and only if \((\bm{z}_T)_i = 0\). Consequently, if \(\bm{z}_T \sim \mathcal{N}(\bm{0}, \bm{I})\), then the teacher model induces a contracted output covariance, i.e., \(\mathrm{Cov}\bigl(\mathcal{T}_T(\bm{z}_T)\bigr) \preceq \bm{\Lambda}\), where the inequality is strict if \(\bm{\Lambda}\succ\bm{0}\).
\end{proposition}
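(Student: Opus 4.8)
The plan is to reduce the multiplicative claim to a coordinatewise scalar inequality and then control the resulting coefficient by a telescoping Cauchy--Schwarz estimate. Fix a coordinate $i$ and read off from the composite operator~\eqref{eq:composite_operator_formula} that $(\mathcal{T}_T(\bm z_T))_i = c_i \cdot (\bm z_T)_i$, where
\[
c_i = \prod_{j=1}^{T}\frac{\langle \bm v_{j-1}^i, \bm v_j^i\rangle}{\lVert \bm v_j^i\rVert_2^2},
\qquad
\bm v_t^i = (\alpha_t\sqrt{\lambda_i},\, \sigma_t)^\top,
\]
with $\bm v_t^i$ the signal--noise vector from~\eqref{eq:signal_noise_vector}. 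It therefore suffices to prove $0 \le c_i \le \sqrt{\lambda_i}$, with the upper bound strict whenever $\lambda_i > 0$, since then $\lvert(\mathcal{T}_T(\bm z_T))_i\rvert = c_i\,\lvert(\bm z_T)_i\rvert \le \sqrt{\lambda_i}\,\lvert(\bm z_T)_i\rvert$.

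First I would record that every factor is nonnegative, because $\langle \bm v_{j-1}^i, \bm v_j^i\rangle = \alpha_{j-1}\alpha_j\lambda_i + \sigma_{j-1}\sigma_j \ge 0$ and $\lVert\bm v_j^i\rVert_2^2 > 0$ when $\lambda_i > 0$; this yields $c_i \ge 0$ and, crucially, lets me multiply term-by-term inequalities for a product of positive numbers. The central step is Cauchy--Schwarz applied to each factor, namely $\langle \bm v_{j-1}^i, \bm v_j^i\rangle / \lVert \bm v_j^i\rVert_2^2 \le \lVert \bm v_{j-1}^i\rVert_2 / \lVert \bm v_j^i\rVert_2$. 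Multiplying these bounds over $j = 1,\dots,T$ telescopes to $c_i \le \lVert \bm v_0^i\rVert_2 / \lVert \bm v_T^i\rVert_2$. Evaluating the endpoints with the boundary conditions of~\Cref{as:noise_schedule} ($\alpha_0 = 1$, $\sigma_0 = 0$ and $\alpha_T = 0$, $\sigma_T = 1$) gives $\lVert\bm v_0^i\rVert_2 = \sqrt{\lambda_i}$ and $\lVert\bm v_T^i\rVert_2 = 1$, hence $c_i \le \sqrt{\lambda_i}$.

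For strictness I would track the equality case of Cauchy--Schwarz, which forces $\bm v_{j-1}^i$ and $\bm v_j^i$ to be parallel; since both sit in the closed first quadrant, this is equivalent to equal signal-to-noise ratio $\alpha_{j-1}/\sigma_{j-1} = \alpha_j/\sigma_j$ (with the convention that this ratio is $+\infty$ at $t=0$). Because $\alpha_t$ falls from $1$ to $0$ while $\sigma_t = \sqrt{1-\alpha_t^2}$ rises from $0$ to $1$, there is at least one step across which $\alpha$ strictly decreases, and there the SNR strictly drops, so the two vectors are not parallel and that factor is strictly below its Cauchy--Schwarz bound; as the remaining factors are positive, the product inequality is strict, giving $c_i < \sqrt{\lambda_i}$ for $\lambda_i > 0$. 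Consequently $c_i\,\lvert(\bm z_T)_i\rvert = \sqrt{\lambda_i}\,\lvert(\bm z_T)_i\rvert$ can hold only when $(\bm z_T)_i = 0$, which is the asserted equality condition.

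The covariance statement then follows immediately: for $\bm z_T \sim \mathcal N(\bm 0, \bm I)$ the map $\mathcal T_T$ acts as the diagonal linear operator $\mathrm{Diag}(c_1,\dots,c_d)$, so $\mathrm{Cov}(\mathcal T_T(\bm z_T)) = \mathrm{Diag}(c_1^2,\dots,c_d^2)$, and $c_i^2 \le \lambda_i$ gives $\mathrm{Cov}(\mathcal T_T(\bm z_T)) \preceq \bm\Lambda$, with the inequality strict when $\bm\Lambda \succ \bm 0$ because then every $c_i^2 < \lambda_i$. The main obstacle I anticipate is the strictness argument rather than the bound itself: the upper bound is a one-line telescoping Cauchy--Schwarz, but certifying that the product lies strictly below $\sqrt{\lambda_i}$ requires ruling out parallelism of consecutive signal--noise vectors, which is precisely where the constraint $\alpha_t^2 + \sigma_t^2 = 1$ together with the genuine descent of $\alpha_t$ from $1$ to $0$ must be invoked carefully rather than relying on the merely weak monotonicity stated in the assumption.
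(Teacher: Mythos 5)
Your proof is correct and follows essentially the same route as the paper: apply Cauchy--Schwarz factor-by-factor to the coordinatewise product in~\eqref{eq:composite_operator_formula}, telescope to \(\lVert\bm v_0^i\rVert_2/\lVert\bm v_T^i\rVert_2 = \sqrt{\lambda_i}\) via the boundary conditions, and deduce the covariance contraction coordinatewise. If anything, your strictness argument (locating a step where \(\alpha_t\) strictly decreases and verifying non-parallelism of consecutive signal--noise vectors via the SNR ratio) is more careful than the paper's, which asserts non-collinearity from the schedule somewhat informally.
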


\begin{proof}[Proof of~\Cref{prop:teacher_contracts_covariance}]
Fix a coordinate \(i\). We consider the evolution of the \(i\)th component in the DDIM process. From~\eqref{eq:composite_operator_formula}, we know that the teacher applies the operator
\begin{equation}
(\mathcal{T}_T(\bm{z}_T))_i = \biggl( \prod_{t=1}^{T} \frac{\langle \bm{v}_{t-1}^i, \bm{v}_t^i \rangle}{\lVert\bm{v}_t^i\rVert_2^2} \biggr) \cdot (\bm{z}_T)_i,
\end{equation}
where \(\bm{v}_t^i = (\alpha_t \sqrt{\lambda_i}, \sigma_t)^\top\in \mathbb{R}^2\) is the signal-noise vector defined in~\eqref{eq:signal_noise_vector}. By the Cauchy--Schwarz inequality, we have for all \(i\):
\begin{equation}
\frac{|\langle \bm{v}_{t-1}^i, \bm{v}_t^i \rangle|}{\lVert\bm{v}_t^i\rVert_2^2}
\le \frac{\lVert\bm{v}_{t-1}^i\rVert_2 \cdot \lVert\bm{v}_t^i\rVert_2}{\lVert\bm{v}_t^i\rVert_2^2}
= \frac{\lVert\bm{v}_{t-1}^i\rVert_2}{\lVert\bm{v}_t^i\rVert_2}.
\end{equation}
Taking the product from \(t=1\) to \(T\), we obtain
\begin{equation}
\prod_{t=1}^{T} \frac{\langle \bm{v}_{t-1}^i, \bm{v}_t^i \rangle}{\lVert\bm{v}_t^i\rVert_2^2}
\le \prod_{t=1}^{T} \frac{\lVert\bm{v}_{t-1}^i\rVert_2}{\lVert\bm{v}_t^i\rVert_2}
= \frac{\lVert\bm{v}_0^i\rVert_2}{\lVert\bm{v}_T^i\rVert_2}.
\end{equation}
Using the boundary conditions \(\alpha_0 = 1\), \(\sigma_0 = 0\), \(\alpha_T = 0\), and \(\sigma_T = 1\), we find
\begin{equation}
\lVert\bm{v}_0^i\rVert_2 = \sqrt{\lambda_i}, \quad \lVert\bm{v}_T^i\rVert_2 = 1,
\end{equation}
which implies
\begin{equation}
\lvert(\mathcal{T}_T(\bm{z}_T))_i\rvert \le \sqrt{\lambda_i} \cdot \lvert(\bm{z}_T)_i\rvert.
\end{equation}
Equality holds if and only if equality holds in the Cauchy--Schwarz inequality at each step, which occurs if and only if all vectors \(\bm{v}_{j-1}^i\) and \(\bm{v}_j^i\) are positively collinear. Since the signal-noise vectors vary across time steps due to the schedule \((\alpha_t, \sigma_t)\), this happens only when the projection scalar acts on zero input, i.e., \((\bm{z}_T)_i = 0\).

Now assume \(\bm{z}_T \sim \mathcal{N}(\bm{0}, \bm{I})\). Then
\begin{equation}
\mathrm{Var}\bigl((\mathcal{T}_T(\bm{z}_T))_i\bigr)
= \Biggl( \prod_{j=1}^{T} \frac{\langle \bm{v}_{j-1}^i, \bm{v}_j^i \rangle}{\lVert\bm{v}_j^i\rVert_2^2} \Biggr)^2
\le \lambda_i,
\end{equation}
with strict inequality whenever \((\bm{z}_T)_i \ne 0\). As this holds for each coordinate independently, we conclude that
\begin{equation}
\mathrm{Cov}(\mathcal{T}_T(\bm{z}_T)) \preceq \bm{\Lambda},
\end{equation}
with strict inequality if \(\bm{\Lambda} \succ \bm{0}\).
\end{proof}

\subsection{Optimization dynamics of the student}
\label{subsec:optimization_dynamics_of_the_student}

Following~\Cref{cor:optimal_single_operator}, we use the parameterization \(\bm{f}_{\bm{\eta}}\) in~\eqref{eq:denosing_process} and model both the teacher and student as time-dependent linear operators. We assume that the student model takes the form
\begin{equation}
\label{eq:student_model}
\bm{f}^{\mathrm{st}}(\bm{z}_t, t) = \bm{A}_t^{\mathrm{st}} \bm{z}_t, \quad \bm{A}_t^{\mathrm{st}} = \mathrm{Diag}(a_t^1, \dotsc, a_t^d),
\end{equation}
with learnable diagonal weights. Following standard practice, we initialize the student model to match the single-step optimal operator (i.e., the teacher model)~\citep{luhman2021knowledge,salimans2022progressive},
\begin{equation}
\label{eq:teacher_model}
\bm{A}_t^\star \coloneqq (\alpha_{t-1}\alpha_t\bm{\Lambda}+\sigma_{t-1}\sigma_t\bm{I})(\alpha_t^2\bm{\Lambda}+\sigma_t^2\bm{I})^{-1},
\end{equation}
which corresponds to the optimal single-step DDIM update in~\Cref{cor:optimal_single_operator}. To train the student model, we define a trajectory distillation loss in~\Cref{as:trajectory_distillation_loss} that encourages the student to match the teacher's behavior across multiple steps. Specifically, the loss at time \(t\) measures the mean squared error between the student prediction and the multi-step composite operator output \(\mathcal{T}_k(\bm{z}_t)\).

\begin{assumption}[Trajectory distillation loss]
\label{as:trajectory_distillation_loss}
Let \(\bm{z}_t \sim p_t = (\alpha_t p_0) * \mathcal{N}(\bm{0}, \sigma_t^2 \bm{I})\) be a noisy input at time \(t\), and let the student model be defined by~\eqref{eq:student_model}. We assume that the distillation loss at time \(t\) is given by
\begin{equation}
\mathcal{L}_t(\bm{A}_t^{\mathrm{st}}) \coloneqq \mathbb{E}_{\bm{z}_t \sim p_t}\bigl[\lVert\bm{A}_t^{\mathrm{st}}\bm{z}_t - \mathcal{T}_k(\bm{z}_t)\rVert_2^2\bigr].
\end{equation}
\end{assumption}

For each coordinate \(i\), let \(a_t^i(s)\) denote the student model's value at training time \(s\) (distinct from the diffusion time step \(t\), which indexes the noise level). The student aims to approximate the \(i\)th entry of the composite operator, denoted \((\mathcal{T}_k)_i\)\footnote{We slightly abuse notation by identifying the linear operator \(\mathcal{T}_k\), which acts on \(\bm{z}_t\), with its corresponding matrix form such that \(\mathcal{T}_k(\bm{z}_t) = \mathcal{T}_k \bm{z}_t\). In particular, \((\mathcal{T}_k)_i\) refers to the \(i\)th diagonal entry of this matrix.}. The gradient flow of the distillation loss in~\Cref{as:trajectory_distillation_loss} leads to the following scalar ordinary differential equation (ODE) for each coordinate:
\begin{equation}
\label{eq:gradient_flow_ode}
\frac{\mathrm{d}}{\mathrm{d}s} a_t^i(s) = -\nabla_{a_t^i}\mathcal{L}_t(\bm{A}_t^{\mathrm{st}}), \quad a_t^i(0) = (\bm{A}_t^\star)_i.
\end{equation}
This ODE admits a closed-form solution, which is presented in~\Cref{prop:student_gradient_flow}.

\begin{proposition}[Student interpolation under gradient flow]
\label{prop:student_gradient_flow}
Let \(\lambda_i\) be the \(i\)th diagonal entry of \(\bm{\Lambda}\), and let \((\bm{A}_t^\star)_i\) be the corresponding entry of the teacher operator. Under the gradient flow ODE~\eqref{eq:gradient_flow_ode}, the student operator \(a_t^i(s)\) evolves as
\begin{equation}
a_t^i(s) = (1 - \gamma_t^i(s)) \cdot (\mathcal{T}_k)_i + \gamma_t^i(s) \cdot (\bm{A}_t^\star)_i,
\end{equation}
where the interpolation weight \(\gamma_t^i(s) = \exp(-2s\lVert\bm{v}_t^i\rVert_2^2)\), with \(\bm{v}_t^i = (\alpha_t\sqrt{\lambda_i}, \sigma_t)^\top \in \mathbb{R}^2\) and \(\lVert\bm{v}_t^i\rVert_2^2 = \alpha_t^2 \lambda_i + \sigma_t^2\).
\end{proposition}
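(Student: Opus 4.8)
The plan is to exploit the fact that everything decouples across coordinates, reducing the matrix-valued gradient flow to $d$ independent scalar ODEs, and then to recognize each as a linear first-order ODE whose solution is an exponential relaxation toward the target. First I would observe that because $\bm{A}_t^{\mathrm{st}} = \mathrm{Diag}(a_t^1,\dotsc,a_t^d)$ and $\mathcal{T}_k$ is diagonal (by~\eqref{eq:composite_operator_formula}), the loss in~\Cref{as:trajectory_distillation_loss} splits as a sum over coordinates, with the $i$th summand being $\mathbb{E}_{\bm{z}_t \sim p_t}\bigl[(a_t^i (\bm{z}_t)_i - (\mathcal{T}_k)_i (\bm{z}_t)_i)^2\bigr] = (a_t^i - (\mathcal{T}_k)_i)^2\,\mathbb{E}\bigl[(\bm{z}_t)_i^2\bigr]$.

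The one genuinely computational step is evaluating the second moment $\mathbb{E}[(\bm{z}_t)_i^2]$. Since $\bm{z}_t = \alpha_t \bm{x}_0 + \sigma_t \bm{\varepsilon}$ with $\bm{x}_0 \sim \mathcal{N}(\bm{0}, \bm{\Lambda})$ and $\bm{\varepsilon} \sim \mathcal{N}(\bm{0}, \bm{I})$ independent, the $i$th coordinate is centered Gaussian with variance $\alpha_t^2 \lambda_i + \sigma_t^2$, so $\mathbb{E}[(\bm{z}_t)_i^2] = \alpha_t^2 \lambda_i + \sigma_t^2 = \lVert \bm{v}_t^i\rVert_2^2$, exactly the squared norm of the signal-noise vector from~\eqref{eq:signal_noise_vector}. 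This identifies the per-coordinate curvature with $\lVert \bm{v}_t^i\rVert_2^2$ and is where the interpolation rate will come from.

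Next I would differentiate the scalar loss to obtain the gradient flow~\eqref{eq:gradient_flow_ode} in explicit form,
\begin{equation}
\frac{\mathrm{d}}{\mathrm{d}s} a_t^i(s) = -2\lVert \bm{v}_t^i\rVert_2^2\,\bigl(a_t^i(s) - (\mathcal{T}_k)_i\bigr),
\end{equation}
a linear first-order ODE whose fixed point is the target $(\mathcal{T}_k)_i$. Writing $w_t^i \coloneqq a_t^i(s) - (\mathcal{T}_k)_i$ for the signed error gives $\dot{w}_t^i = -2\lVert \bm{v}_t^i\rVert_2^2\, w_t^i$, so $w_t^i(s) = w_t^i(0)\exp(-2s\lVert \bm{v}_t^i\rVert_2^2)$. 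Using the initialization $a_t^i(0) = (\bm{A}_t^\star)_i$ from~\eqref{eq:gradient_flow_ode}, the initial error is $w_t^i(0) = (\bm{A}_t^\star)_i - (\mathcal{T}_k)_i$.

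Finally I would substitute back and regroup to land on the stated convex combination: setting $\gamma_t^i(s) = \exp(-2s\lVert \bm{v}_t^i\rVert_2^2)$, one has $a_t^i(s) = (\mathcal{T}_k)_i + \gamma_t^i(s)\bigl((\bm{A}_t^\star)_i - (\mathcal{T}_k)_i\bigr) = (1-\gamma_t^i(s))(\mathcal{T}_k)_i + \gamma_t^i(s)(\bm{A}_t^\star)_i$, which is precisely the claimed form with $\lVert \bm{v}_t^i\rVert_2^2 = \alpha_t^2 \lambda_i + \sigma_t^2$. There is no real obstacle here: the entire argument hinges on the diagonal structure forcing exact coordinate separation and on the Gaussian second-moment computation supplying the decay rate, after which the ODE is elementary. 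The only point demanding slight care is confirming that the expectation factors out cleanly so that the curvature is exactly $2\lVert \bm{v}_t^i\rVert_2^2$ and hence constant in $s$, which is what makes the relaxation a pure exponential rather than something schedule-dependent in training time.
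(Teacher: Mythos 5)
Your proposal is correct and follows essentially the same route as the paper's proof: both reduce the loss coordinate-wise using the diagonal structure, identify the curvature as the second moment \(\mathbb{E}[(\bm{z}_t)_i^2] = \alpha_t^2\lambda_i + \sigma_t^2 = \lVert\bm{v}_t^i\rVert_2^2\), and solve the resulting linear first-order ODE with initial condition \(a_t^i(0) = (\bm{A}_t^\star)_i\). The only cosmetic difference is that you solve via the signed-error substitution \(w_t^i = a_t^i - (\mathcal{T}_k)_i\) while the paper uses an integrating factor, which are trivially equivalent.
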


\begin{proof}[Proof of~\Cref{prop:student_gradient_flow}]
Fix a coordinate \(i\). Let the student operator at training time \(s\) be \(a_t^i(s)\). Recall that the composite teacher target is \((\mathcal{T}_k)_i\), and that the student is initialized from the single-step teacher
\begin{equation}
a_t^i(0) = (\bm{A}_t^\star)_i.
\end{equation}
From~\Cref{as:trajectory_distillation_loss}, the scalar loss in coordinate \(i\) is
\begin{equation}
\mathcal{L}_t^i(a_t^i) = \mathbb{E}_{\bm{z}_t \sim p_t} \bigl[ \bigl( a_t^i (\bm{z}_t)_i - (\mathcal{T}_k(\bm{z}_t))_i \bigr)^2 \bigr].
\end{equation}
Since \(\bm{z}_t \sim \mathcal{N}(\bm{0}, \alpha_t^2 \bm{\Lambda} + \sigma_t^2 \bm{I})\), we have \((\bm{z}_t)_i \sim \mathcal{N}(0, \alpha_t^2 \lambda_i + \sigma_t^2)\). The loss then becomes
\begin{equation}
\mathcal{L}_t^i(a_t^i) = (\alpha_t^2 \lambda_i + \sigma_t^2) \cdot (a_t^i)^2 - 2(\alpha_t^2 \lambda_i + \sigma_t^2)(\mathcal{T}_k)_i \cdot a_t^i + (\alpha_t^2 \lambda_i + \sigma_t^2)(\mathcal{T}_k)_i^2.
\end{equation}
Taking the gradient with respect to \(a_t^i\) and applying gradient flow gives
\begin{equation}
\frac{\mathrm{d}}{\mathrm{d}s} a_t^i(s) = -\frac{\mathrm{d}}{\mathrm{d}a_t^i} \mathcal{L}_t^i(a_t^i)
= -2(\alpha_t^2 \lambda_i + \sigma_t^2) \cdot \bigl(a_t^i(s) - (\mathcal{T}_k)_i\bigr).
\end{equation}
This is a first-order linear ODE
\begin{equation}
\frac{\mathrm{d}}{\mathrm{d}s} a_t^i(s) + 2(\alpha_t^2 \lambda_i + \sigma_t^2) a_t^i(s) = 2(\alpha_t^2 \lambda_i + \sigma_t^2)(\mathcal{T}_k)_i,
\end{equation}
with initial condition \(a_t^i(0) = (\bm{A}_t^\star)_i\). Using the integrating factor method, we solve
\begin{equation}
a_t^i(s) = (\mathcal{T}_k)_i + \bigl((\bm{A}_t^\star)_i - (\mathcal{T}_k)_i\bigr) \cdot \exp\bigl(-2(\alpha_t^2 \lambda_i + \sigma_t^2)s\bigr).
\end{equation}
Rewriting this as a convex combination gives
\begin{equation}
a_t^i(s) = (1 - \gamma_t^i(s)) \cdot (\mathcal{T}_k)_i + \gamma_t^i(s) \cdot (\bm{A}_t^\star)_i,
\end{equation}
where
\begin{equation}
\gamma_t^i(s) = \exp\bigl(-2(\alpha_t^2 \lambda_i + \sigma_t^2)s\bigr) = \exp\bigl(-2s\lVert\bm{v}_t^i\rVert_2^2\bigr),
\end{equation}
with \(\bm{v}_t^i = (\alpha_t \sqrt{\lambda_i}, \sigma_t)^\top\). This concludes the proof.
\end{proof}

\Cref{prop:student_gradient_flow} reveals a fundamental limitation of trajectory distillation. Under finite training time \(s\), the student does not fully match the composite operator \((\mathcal{T}_k)_i\), but instead interpolates between the composite operator and the single-step operator \((\bm{A}_t^\star)_i\). The interpolation weight \(\gamma_t^i(s) \in (0,1)\) depends exponentially on both \(s\) and the signal-to-noise ratio \(\lVert\bm{v}_t^i\rVert_2^2 = \alpha_t^2 \lambda_i + \sigma_t^2\), and quantifies the deviation from the target.

\subsection{A Wasserstein metric and a variance-corrected target}
\label{subsec:w2_metric_and_surrogate_target}

With the teacher model's inherent covariance contraction (\Cref{prop:teacher_contracts_covariance}) and the student model's training dynamics (\Cref{prop:student_gradient_flow}) established, we now define a metric to rigorously evaluate the distilled student. Our goal is to compress the entire trajectory into a single diagonal operator that maps noise \(\bm{z}_T \sim \mathcal{N}(\bm{0}, \bm{I})\) to a clean sample. In this linear Gaussian setting, the squared \(2\)-Wasserstein distance (reviewed in~\Cref{subsec:wasserstein_distance_for_gaussian_outputs}) is the natural measure of discrepancy.

However, directly minimizing the distance to the composite teacher \(\mathcal{T}_T\) is problematic. As shown in~\Cref{prop:teacher_contracts_covariance}, \(\mathcal{T}_T\) strictly shrinks the covariance when \(\bm \Lambda\succ\bm0\). Treating \(\mathcal{T}_T\) as the ground truth would therefore force the student to reproduce these discretization errors. Ideally, the student model should replicate the teacher model's denoising behavior in dimensions where the signal naturally decays (\(\lambda_i \le 1\)), while correcting for artificial shrinkage in dimensions with high data variance (\(\lambda_i > 1\)). To achieve this, we introduce the \emph{surrogate composite operator} \(\widetilde{\mathcal{T}}_T\) in~\Cref{def:surrogate_composite_operator}.

\begin{definition}[Surrogate composite operator]
\label{def:surrogate_composite_operator}
Let \(\widetilde{\mathcal{T}}_T\) be a surrogate composite operator defined coordinate-wise. For each dimension \(i \in \{1, \dotsc, d\}\), we define
\begin{equation}
(\widetilde{\mathcal{T}}_T)_i = \prod_{t=1}^T (\widetilde{\bm{A}}_t^\star)_i,
\end{equation}
where the \((\widetilde{\bm{A}}_t^\star)_i\)'s are adjusted to prevent magnitude reduction in high-variance dimensions:
\begin{equation}
(\widetilde{\bm{A}}_t^\star)_i \coloneqq 
\begin{cases}
(\bm{A}_t^\star)_i, & \text{if } \lambda_i \leq 1 \text{ or } (\bm{A}_t^\star)_i \geq 1, \\
1, & \text{if } \lambda_i > 1 \text{ and } (\bm{A}_t^\star)_i < 1.
\end{cases}
\end{equation}
\end{definition}

We evaluate the final student operator, denoted as \(\bm{A}_{1{:}T}\)\footnote{The subscript \(1{:}T\) indicates that the student approximates the merge of the full sequence of \(T\) teacher steps. A formal definition of this notation is provided in~\Cref{sec:variance_driven_optimal_merge_plans_and_pareto_dp_in_the_gaussian_regime}.}, by its distance to this surrogate. Using the coordinate-wise decomposition of the Wasserstein distance from~\eqref{eq:w2_diag_linear_maps}, we define the per-coordinate squared error \(\delta_i\) and the total objective \(\mathcal{L}_{W_2}\) as
\begin{equation}
\delta_i \coloneqq \bigl( (\widetilde{\mathcal{T}}_T)_i - (\bm{A}_{1{:}T})_i \bigr)^2, \quad \mathcal{L}_{W_2}\coloneqq\sum_{i=1}^{d}\delta_i.
\end{equation}
This metric \(\mathcal{L}_{W_2}\) separates the distillation error from the teacher model's intrinsic discretization bias, providing a fair standard for comparing different merging strategies.

\section{Variance-driven optimal merge plans and Pareto DP in the Gaussian regime}
\label{sec:variance_driven_optimal_merge_plans_and_pareto_dp_in_the_gaussian_regime}

In~\Cref{sec:operator_merging_foundations_in_the_gaussian_regime}, we established the theoretical foundations for our analysis: we showed that DDIM updates act as coordinate-wise linear operators, derived the student's shrinkage dynamics under gradient flow, and defined a variance-corrected Wasserstein metric to isolate distillation error. Building on these results, this section casts trajectory distillation as an \emph{operator merging} problem. Our goal is to find the optimal sequence of merge operations that minimizes the squared Wasserstein distance to the surrogate teacher operator defined in~\Cref{def:surrogate_composite_operator}.

We formalize this as a recursive merging process. Let \(\bm{v}_t^i = (\alpha_t \sqrt{\lambda_i}, \sigma_t)^\top\) be the signal-noise vector for coordinate \(i\) defined in~\eqref{eq:single_operator_formula}. We initialize the student operators at step \(k=0\) to match the single-step teacher operators:
\begin{equation}
\label{eq:def_of_At}
(\bm{A}_t^{(0)})_i \coloneqq (\bm{A}_t^\star)_i = \frac{\langle \bm{v}_{t-1}^i, \bm{v}_t^i \rangle}{\lVert\bm{v}_t^i\rVert_2^2}, \quad t = 1, \dotsc, T.
\end{equation}
At each planning step \(k \geq 1\), the student merges a \emph{contiguous} block of updates from \(t_1\) to \(t_2\). Based on~\Cref{prop:student_gradient_flow}, the resulting merged operator \(\bm{A}_{t_1{:}t_2}^{(k)}\) is defined coordinate-wise as
\begin{equation}
(\bm{A}_{t_1{:}t_2}^{(k)})_i = (1 - \gamma_{t_2}^i) \cdot \prod_{t = t_1}^{t_2} (\bm{A}_t^{(k-1)})_i + \gamma_{t_2}^i \cdot (\bm{A}_{t_2}^{(k-1)})_i.
\label{eq:recursive_merge}
\end{equation}
Here, \(\gamma_{t_2}^i = \exp(-2s \lVert\bm{v}_{t_2}^i\rVert_2^2)\) is the shrinkage factor derived in~\Cref{prop:student_gradient_flow}, where \(s > 0\) represents the optimization time per merge. This process repeats until a complete trajectory \(\bm{A}_{1{:}T}^{(K)}\) is constructed. We then evaluate the quality of a merging strategy using the objective \(\mathcal{L}_{W_2}\) (defined in~\Cref{subsec:w2_metric_and_surrogate_target}), which measures the deviation of the final student operator from the surrogate teacher operator \(\widetilde{\mathcal{T}}_T\).

In~\Cref{subsec:phase_transitions_in_the_optimal_strategy}, we analyze the scalar setting and uncover a sharp phase transition governed by data variance. The optimal merging strategy shifts from sequential BOOT in low-variance regimes to vanilla distillation in high-variance regimes. Guided by this theoretical characterization, \Cref{subsec:pareto_dynamic_programming_for_optimal_merging} addresses the general high-dimensional setting, where different dimensions may favor conflicting strategies, by proposing a Pareto dynamic programming algorithm that systematically computes the globally optimal merge plan.

\subsection{Phase transition in the optimal strategy}
\label{subsec:phase_transitions_in_the_optimal_strategy}

We first analyze the scalar setting (\(d=1\)). Let \(\bm{\Lambda}=\lambda\) and denote the scalar counterparts of the operator and shrinkage factors as \(A_t^\star\) and \(\gamma_t\). In this simplified regime, we identify a sharp phase transition in the optimal merging strategy governed by the data variance \(\lambda\). Specifically, the optimal plan recovers \emph{sequential BOOT} when \(\lambda \le 1\) and collapses to \emph{vanilla trajectory distillation} when \(\lambda \gg 1\). We formalize these claims in~\Cref{thm:sequential_boot_optimality,thm:vanilla_distillation_optimality}.

When \(\lambda \gg 1\), the optimal merging strategy is to perform a one-shot merge over all denoising steps, corresponding to vanilla trajectory distillation. In this regime, the intermediate single-step operators satisfy \(A_t^\star > 1\) for all \(t < T\), indicating that each step amplifies the signal with the exception of the final step. Merging these steps early helps preserve this amplification. The final step \(A_T^\star < 1\) introduces a small contraction, but this effect is minimal when preceded by strong signal gains. We formally establish the optimality of vanilla trajectory distillation in~\Cref{thm:vanilla_distillation_optimality}.

\begin{theorem}[Vanilla trajectory distillation is optimal when \(\lambda \gg 1\)]
\label{thm:vanilla_distillation_optimality}
Consider the recursive operator merging process defined in~\Cref{sec:variance_driven_optimal_merge_plans_and_pareto_dp_in_the_gaussian_regime} restricted to the scalar setting (\(d=1\)). And assume \(\bm{\Lambda} = \lambda \gg 1\). Then, for the sequence of operators \(\{A_1^\star, A_2^\star, \dotsc, A_T^\star\}\) in~\eqref{eq:def_of_At}, the optimal merging strategy that minimizes the squared error to the surrogate target \(\prod_{t=1}^T \widetilde{A}_t^\star\) is the vanilla trajectory distillation: all steps are merged in a single one-shot update without intermediate composition.
\end{theorem}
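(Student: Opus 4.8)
The plan is to work in the asymptotic regime $\lambda \gg 1$ (taking $\lambda\to\infty$), where both the operator values and the shrinkage factors collapse to a tractable form, and then to reduce every admissible merge plan to a one-parameter family that can be optimized by hand. First I would pin down the single-step operators and the surrogate target. From \eqref{eq:def_of_At} we have $A_t^\star = (\alpha_{t-1}\alpha_t\lambda + \sigma_{t-1}\sigma_t)/(\alpha_t^2\lambda + \sigma_t^2)$, so the boundary conditions of \Cref{as:noise_schedule} give $A_t^\star \to \alpha_{t-1}/\alpha_t > 1$ for every $t<T$ (using $\alpha_{t-1}>\alpha_t>0$ there), while $A_T^\star = \sigma_{T-1} < 1$ exactly. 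Hence \Cref{def:surrogate_composite_operator} clips only the last coordinate, and the surrogate target reduces to $P^\star := \prod_{t=1}^{T}\widetilde{A}_t^\star = \prod_{t=1}^{T-1} A_t^\star \to 1/\alpha_{T-1} > 1$.

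The key simplification is that the shrinkage factor $\gamma_t = \exp(-2s(\alpha_t^2\lambda + \sigma_t^2))$ behaves differently at $t=T$ than elsewhere: for $t<T$ one has $\alpha_t>0$, so $\gamma_t\to 0$ as $\lambda\to\infty$, whereas $\gamma_T = \exp(-2s)=:g\in(0,1)$ is independent of $\lambda$. Thus in \eqref{eq:recursive_merge} a merge whose right endpoint is $t_2<T$ degenerates to the plain product $\prod_t A_t^{(k-1)}$ (no shrinkage), while a merge ending at $T$ contracts by the fixed factor $g$.

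Next I would reduce an arbitrary plan to a canonical form by following its merge tree. Every block not containing the index $T$ is built entirely by shrinkage-free merges and therefore equals the product of the single-step operators it spans; since each such operator exceeds $1$, every such block value is $>1$. The block containing $T$ is updated only by merges ending at $T$: if its current value is $R$ and a merge absorbs left blocks of total product $L$, then \eqref{eq:recursive_merge} gives its new value as $(1-g)LR + gR = R\,\phi(L)$ with $\phi(L):=(1-g)L+g$. Tracking this from $R_0 = A_T^\star = \sigma_{T-1}$ across the $m\ge 1$ merges ending at $T$ shows that any plan produces $A_{1{:}T} = \sigma_{T-1}\prod_{j=1}^{m}\phi(L_j)$, where each $L_j>1$ is the product of left operators absorbed at the $j$-th such merge; since every $A_t^\star$ with $t<T$ is absorbed exactly once, the factors satisfy $\prod_{j=1}^{m}L_j = P^\star$. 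Vanilla distillation is precisely the case $m=1$, $L_1 = P^\star$, giving $A_{1{:}T}^{\mathrm{vanilla}} = \sigma_{T-1}((1-g)P^\star + g)$. Establishing this reduction cleanly, that is, verifying that the left branches enter only through their products $L_j$ and that $\prod_j L_j = P^\star$ for every tree, is the main obstacle, and I expect it to require a careful inductive argument over the merge-tree structure.

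Finally I would optimize over this canonical family. A direct expansion yields the super-multiplicativity identity $\phi(L_aL_b)-\phi(L_a)\phi(L_b) = g(1-g)(L_a-1)(L_b-1)$, which is strictly positive whenever $L_a,L_b>1$. Merging any two factors therefore strictly increases $\prod_j\phi(L_j)$, so the product is maximized uniquely at $m=1$ with $L_1=P^\star$, namely vanilla. A short check gives $A_{1{:}T}^{\mathrm{vanilla}}\le P^\star$ (using $\sigma_{T-1}<1\le P^\star$), so every admissible plan undershoots the target; consequently minimizing the squared error $\delta=(P^\star-A_{1{:}T})^2$ is equivalent to maximizing $A_{1{:}T}$, which identifies vanilla distillation as the unique minimizer. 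For finite but large $\lambda$ the strict inequalities $A_t^\star>1$ (for $t<T$), $A_T^\star<1$, and the super-multiplicative gap all persist by continuity, while the $\gamma_t$ with $t<T$ contribute only negligible corrections, so the conclusion is stable.
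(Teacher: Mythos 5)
Your asymptotic reduction is internally sound as far as it goes: the limits \(A_t^\star \to \alpha_{t-1}/\alpha_t > 1\) for \(t<T\), the exact identity \(A_T^\star = \sigma_{T-1}\), the dichotomy \(\gamma_t \to 0\) for \(t<T\) versus \(\gamma_T = e^{-2s}\), and the super-multiplicativity identity \(\phi(L_aL_b)-\phi(L_a)\phi(L_b) = g(1-g)(L_a-1)(L_b-1)\) are all correct, and together they do show that plans with \(m\ge 2\) merges ending at \(T\) are strictly worse than a single absorption in the limit, hence (finitely many plans, continuity) for all sufficiently large finite \(\lambda\). The genuine gap is in your final sentence. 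In the strict limit \(\lambda\to\infty\), vanilla is \emph{not} the unique maximizer: any plan whose only merge ending at \(T\) absorbs the entire left product at once --- for example, first building the block \([1,T-1]\) through a cascade of intermediate merges ending at times \(t_2<T\), and only then merging with \(A_T^\star\) --- collapses to exactly the same limiting value \(\sigma_{T-1}\,\phi(P^\star)\), because the shrinkage factors \(\gamma_{t_2}\) of those intermediate merges vanish as \(\lambda\to\infty\). Since these plans tie with vanilla in the limit, continuity carries no sign information about the comparison at large finite \(\lambda\), and your argument cannot rule out that such a plan beats vanilla there. This is not a technicality: the theorem's content is precisely that intermediate composition is suboptimal, and it is exactly the intermediate-composition plans that your limiting framework fails to separate from vanilla.

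Closing the gap requires a finite-\(\lambda\) argument, which is what the paper's proof supplies in place of your limit. It fixes the explicit threshold \(\lambda_0^* = \max_{1\le t\le T-1}\sigma_t(\sigma_t-\sigma_{t-1})/(\alpha_t(\alpha_{t-1}-\alpha_t))\) guaranteeing \(A_t^\star>1\) for \(t<T\), then runs an induction over block sizes keeping every shrinkage factor in the algebra; for instance, its comparison of vanilla with sequential consistency yields the difference \((1-\gamma_t)\gamma_{t-1}A_t^\star A_{t-1}^\star(A_{t-2}^\star-1)\ge 0\), a term that is strictly positive at finite \(\lambda\) but tends to \(0\) as \(\gamma_{t-1}\to 0\) --- exactly the tie your continuity step cannot resolve. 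To repair your proof, you would need one additional finite-\(\lambda\) lemma: every merge ending at \(t_2<T\) is a strict contraction of the block value toward the smaller reset value \(A_{t_2}\) (which is dominated by the plain product once all factors exceed \(1\)), and the subsequent merge maps, including \(\phi\), are increasing in the left-block value, so any intermediate composition strictly lowers the final value below vanilla's for every \(\lambda>\lambda_0^*\). With that lemma, your canonical-form and super-multiplicativity argument gives a clean alternative proof, and one with the nice feature of isolating the single \(\lambda\)-independent shrinkage \(g=e^{-2s}\) at \(t=T\); without it, the proposal does not establish the theorem.
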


\begin{proof}[Proof of~\Cref{thm:vanilla_distillation_optimality}]
We begin by showing that when \(\lambda\) is sufficiently large, the sequence of operators \(\{A_1^\star, A_2^\star, \dotsc, A_T^\star\}\) satisfies
\begin{equation}
A_1^\star, A_2^\star, \dotsc, A_{T-1}^\star > 1, \quad A_T^\star < 1.
\end{equation}
By definition, 
\begin{equation}
A_t^\star = \frac{\alpha_{t-1}\alpha_t \lambda + \sigma_{t-1}\sigma_t}{\alpha_t^2 \lambda + \sigma_t^2}.
\end{equation}
Since all quantities in the numerator and denominator are positive (note that for \(t<T\) we have \(\alpha_t>0\) and \(\sigma_t>0\)), the inequality \(A_t^\star > 1\) is equivalent to
\begin{equation}
\alpha_{t-1}\alpha_t \lambda + \sigma_{t-1}\sigma_t > \alpha_t^2 \lambda + \sigma_t^2.
\end{equation}
Rearrange this inequality by subtracting \(\alpha_t^2 \lambda + \sigma_{t-1}\sigma_t\) from both sides:
\begin{equation}
\alpha_{t-1}\alpha_t \lambda - \alpha_t^2 \lambda > \sigma_t^2 - \sigma_{t-1}\sigma_t.
\end{equation}
Factor out common factors:
\begin{equation}
\alpha_t (\alpha_{t-1} - \alpha_t)  \lambda > \sigma_t (\sigma_t - \sigma_{t-1}).
\end{equation}
Thus,
\begin{equation}
\lambda > \frac{\sigma_t(\sigma_t - \sigma_{t-1})}{\alpha_t(\alpha_{t-1} - \alpha_t)} .
\end{equation}
Define
\begin{equation}
\lambda_0(t) \coloneqq \frac{\sigma_t(\sigma_t - \sigma_{t-1})}{\alpha_t(\alpha_{t-1} - \alpha_t)}.
\end{equation}
Then, if
\begin{equation}
\lambda > \lambda_0(t),
\end{equation}
we have \(A_t^\star > 1\) for that particular \(t\).

Since the noise schedule is fixed and \(t\) takes values in the finite set \(\{1, 2, \dotsc, T-1\}\), we may define
\begin{equation}
\lambda_0^* \coloneqq \max_{1\le t\le T-1}\lambda_0(t).
\end{equation}
Hence, if
\begin{equation}
\lambda > \lambda_0^*,
\end{equation}
it follows that
\begin{equation}
A_t^\star > 1, \quad t = 1,2,\dots,T-1.
\end{equation}
Now, consider \(t = T\). By the boundary conditions, \(\alpha_T = 0\) and \(\sigma_T = 1\). Thus,
\begin{equation}
A_T^\star = \frac{\alpha_{T-1}\alpha_T \lambda + \sigma_{T-1}\sigma_T}{\alpha_T^2 \lambda + \sigma_T^2} 
= \frac{0 + \sigma_{T-1}\cdot 1}{0 + 1} 
= \sigma_{T-1}.
\end{equation}
Since \(\sigma_T = 1\) and \(\sigma_t\) is strictly increasing, we have \(\sigma_{T-1} < 1\). Therefore,
\begin{equation}
A_T^\star = \sigma_{T-1} < 1.
\end{equation}

We aim to show that the optimal merging strategy, meaning the one whose merged outcome is closest to the surrogate target \(\prod_{t=1}^T\widetilde{A}_t^\star\), is given by vanilla trajectory distillation. We proceed by induction on the block size \(\ell = t_2 - t_1\) where we require \(1\leq t_1<t_2\leq T\). When \(\ell = 1\), the strategy is trivially optimal since there is only one way to merge the operators. For the base case \(\ell = 2\), we let \(t \coloneqq t_2\) and analyze three consecutive operators:
\begin{equation}
A_{t-2}^\star, A_{t-1}^\star, A_t^\star,
\end{equation}
satisfying
\begin{equation}
A_{t-2}^\star, A_{t-1}^\star, A_t^\star > 1.
\end{equation}
Our goal is to approximate the surrogate target
\begin{equation}
\prod_{t=1}^T\widetilde{A}_t^\star.
\end{equation}
We apply shrinkage when a merge ends at time \(t\), denoted by \(\gamma_t \in (0,1)\), and similarly let \(\gamma_{t-1}\) be the shrinkage applied at time \(t-1\). We compare three candidate merging strategies. The first is vanilla trajectory distillation, which merges all three operators in one step:
\begin{equation}
A_{\text{vanilla}} \coloneqq (1-\gamma_t)A_{t-2}^\star A_{t-1}^\star A_t^\star + \gamma_t A_t^\star.
\end{equation}
The second is the sequential BOOT strategy, which splits after \(A_{t-2}^\star\), merges \(\{A_{t-1}^\star, A_t^\star\}\) as a block, and treats \(A_{t-2}^\star\) separately:
\begin{equation}
A_{\text{BOOT}} \coloneqq \bigl((1-\gamma_t)A_{t-1}^\star A_t^\star + \gamma_t A_t^\star\bigr)\bigl((1-\gamma_t)A_{t-2}^\star + \gamma_t\bigr).
\end{equation}
The third is the sequential consistency strategy, which first merges \(\{A_{t-2}^\star, A_{t-1}^\star\}\) and then merges the result with \(A_t^\star\):
\begin{equation}
A_{\text{consistency}} \coloneqq A_t^\star\bigl((1-\gamma_t)\bigl((1-\gamma_{t-1})A_{t-2}^\star A_{t-1}^\star + \gamma_{t-1} A_{t-1}\bigr) + \gamma_t\bigr).
\end{equation}
We compare the approximation errors of these strategies. Since all \(A_{t-2}^\star, A_{t-1}^\star, A_t^\star > 1\), each merged outcome is strictly smaller than the target. Therefore, the strategy that yields the largest merged value is the closest to the target and results in the lowest error. First, consider the difference between the vanilla and BOOT outcomes
\begin{equation}
\begin{aligned}
&A_{\text{vanilla}} - A_{\text{BOOT}}\\
=\;& (1-\gamma_t)(A_{t-2}^\star A_{t-1}^\star A_t^\star) - \gamma_t A_t^\star \\
&- \bigl((1-\gamma_t)A_{t-1}^\star A_t^\star + \gamma_t A_t^\star\bigr)\bigl((1-\gamma_t)A_{t-2}^\star + \gamma_t\bigr)\\
=\;& \gamma_t(1-\gamma_t)A_{t-2}^\star A_{t-1}^\star A_t^\star\\
&-\gamma_t(1-\gamma_t)(A_t^\star A_{t-2}^\star+A_t^\star A_{t-1}^\star)-(\gamma_t^2-\gamma_t)A_t^\star\\
=\;&\gamma_t(1-\gamma_t)A_t^\star(A_{t-2}^\star-1)(A_{t-1}^\star-1)\\
\geq\;&0.
\end{aligned}
\end{equation}
Next, compare the vanilla and consistency outcomes:
\begin{equation}
\begin{aligned}
&A_{\text{vanilla}} - A_{\text{consistency}}\\ =\;& (1-\gamma_t)\gamma_{t-1}A_t^\star A_{t-1}^\star A_{t-2}^\star-(1-\gamma_t)\gamma_{t-1}A_t^\star A_{t-1}^\star\\
\geq\;&0.
\end{aligned}
\end{equation}
In both cases, the vanilla strategy produces the largest outcome and best approximates the surrogate target. This completes the base case.

Now assume the vanilla strategy is optimal for all block sizes of length \(\ell\). We prove that it remains optimal for block size \(\ell + 1\). Let \(t \coloneqq t_2\) be the endpoint of the interval. Consider any alternative strategy that splits the block at position \(t - \ell + k\) for some \(0 < k < \ell\). By the induction hypothesis, both the left and right sub-blocks are best approximated by direct merges. The resulting merge is therefore equivalent to a two-step merge between two composite operators, each strictly larger than 1, which reduces to the \(\ell = 3\) case already analyzed. This implies that the vanilla strategy remains optimal for all intervals \(1 \le t_1 < t_2 < T\).

However, since \(A_T^\star < 1\), we cannot directly apply this result to the full interval \([1, T]\). Nevertheless, it can be shown that the optimal strategy over \([1, T]\) must take one of two possible forms: (\romannumeral1) a one-shot merge of all steps from \(1\) to \(T\), or  
(\romannumeral2) a split at some index \(s\), where \([1, s]\) is merged, \([s+1, T]\) is merged, and the two results are then combined. For any such split index \(s\in\{1,\dots,T-1\}\), define
\begin{equation}
P_L = \prod_{t=1}^s A_t^\star, \quad P_R = \prod_{t=s+1}^{T} A_t^\star.
\end{equation}
In the split strategy, the left block is merged using shrinkage factor \(\gamma_s\):
\begin{equation}
A_L = (1 - \gamma_s) P_L + \gamma_s A_s^\star,
\end{equation}
and the right block is merged using \(\gamma_T\):
\begin{equation}
A_R = (1 - \gamma_T) P_R + \gamma_T A_T^\star.
\end{equation}
These two results are then merged using \(\gamma_T\) to yield
\begin{equation}
A_{\text{split}} = A_R \cdot \bigl((1 - \gamma_T) A_L + \gamma_T\bigr).
\end{equation}
In contrast, the vanilla one-shot merge yields
\begin{equation}
A_{\text{direct}} = (1 - \gamma_T)P_L P_R + \gamma_T A_T^\star.
\end{equation}
We now show that \(A_{\text{split}} < A_{\text{direct}}\). Computing the difference:
\begin{equation}
\begin{aligned}
&A_{\text{split}} - A_{\text{direct}}\\ 
=&\; (1-\gamma_T)\bigl((1-\gamma_s)P_L+\gamma_s A_s^\star\bigr)\bigl((1-\gamma_T)P_R+\gamma_T A_T^\star\bigr)\\
&+\gamma_T(1-\gamma_T)P_R+\gamma_T^2 A_T^\star-\gamma_T A_T^\star-(1-\gamma_T)P_LP_R\\
=&\; (1-\gamma_T)\bigl((\gamma_s\gamma_T-\gamma_s-\gamma_T)P_LP_R+\gamma_s(1-\gamma_T)A_s^\star P_R\\
&+\gamma_T(1-\gamma_s)P_L A_T^\star+\gamma_T P_R-\gamma_T A_T^\star\bigr)\\
\leq &\; \gamma_T(1-\gamma_T)\bigl(-P_LP_R+(1-\gamma_s)P_L A_T^\star\\
&+(1-\gamma_T)\gamma_sA_s^\star A_T^\star+P_R-A_T^\star\bigr)\\
\leq &\; \gamma_T(1-\gamma_T)\bigl(-P_L A_T^\star+(1-\gamma_s)P_L A_T^\star+(1-\gamma_s)\gamma_s A_s^\star A_T^\star\bigr)\\
\leq &\; 0.
\end{aligned}
\end{equation}
This shows that any split strategy is suboptimal and that the vanilla strategy yields the largest merged value. The proof is complete.
\end{proof}

In contrast, when \(\lambda \leq 1\), all the single-step operators satisfy \(A_t^\star \leq 1\), and the trajectory contracts monotonically. In this regime, the best approach is to merge steps gradually in a reverse order, starting from the highest noise level. This corresponds to the sequential BOOT strategy, where the student is trained to merge \(\{A_T^\star, A_{T-1}^\star\}\), then merges the result with \(A_{T-2}^\star\), and so on. This strategy minimizes compounding error by ensuring that each merge occurs over a shorter horizon with limited shrinkage. The following~\Cref{thm:sequential_boot_optimality} establishes the optimality of this strategy.

\begin{theorem}[Sequential BOOT is optimal when \(\lambda \leq 1\)]
\label{thm:sequential_boot_optimality}
Consider the recursive operator merging process defined in~\Cref{sec:variance_driven_optimal_merge_plans_and_pareto_dp_in_the_gaussian_regime} restricted to the scalar setting (\(d=1\)). And assume \(\bm{\Lambda} = \lambda \leq 1\). Then, for the sequence of operators \(\{A_1^\star, A_2^\star, \dotsc, A_T^\star\}\) in~\eqref{eq:def_of_At}, the optimal merging strategy that minimizes the squared error to the surrogate target \(\prod_{t=1}^T A_t^\star\) (which coincides with the original composite operator when \(\lambda \leq 1\)) is sequential BOOT: starting from \(A_T^\star\), iteratively merge it with \(A_{T-1}^\star\), then with \(A_{T-2}^\star\), and so on, until all steps are merged into a single operator.
\end{theorem}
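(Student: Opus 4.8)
The plan is to mirror the structure of the proof of \Cref{thm:vanilla_distillation_optimality}, but with the monotone direction reversed. First I would pin down the sign regime of the single-step operators. Writing $A_t^\star = \langle \bm{v}_{t-1}^i, \bm{v}_t^i\rangle / \lVert \bm{v}_t^i\rVert_2^2$ as in~\eqref{eq:def_of_At} and using $\lVert \bm{v}_t^i\rVert_2^2 = \alpha_t^2\lambda + \sigma_t^2 = 1 - \alpha_t^2(1-\lambda)$ together with \Cref{as:noise_schedule}, the norms $\lVert \bm{v}_t^i\rVert_2$ are nondecreasing in $t$ whenever $\lambda \le 1$ (since $\alpha_t$ decreases and $1-\lambda \ge 0$). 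Cauchy--Schwarz then gives $A_t^\star \le \lVert \bm{v}_{t-1}^i\rVert_2/\lVert \bm{v}_t^i\rVert_2 \le 1$, and positivity of the entries yields $A_t^\star \in (0,1]$ for every $t$. Consequently the surrogate correction in \Cref{def:surrogate_composite_operator} is never triggered, so the target is exactly $\prod_{t=1}^T A_t^\star$, as the theorem asserts.

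Second, I would establish an ``overshoot'' invariant: for any merge plan and any contiguous block, the resulting value $V$ satisfies $\prod_t A_t^\star \le V \le 1$. This follows by a simultaneous induction over the plan tree from~\eqref{eq:recursive_merge}: a merge is the convex combination $(1-\gamma)P + \gamma E$ of the product $P$ of its constituents and the endpoint constituent $E$, and when all constituents lie in $(0,1]$ one has $P \le E \le 1$, so the combination again lies in $[P,1]\subseteq(0,1]$ and dominates $P$; multiplying the inductive lower bounds for the sub-blocks gives $P \ge \prod_t A_t^\star$. Since every plan therefore produces a value at least the target, minimizing the squared $W_2$ error $\mathcal{L}_{W_2}$ is equivalent to minimizing the final merged value. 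This is the exact reverse of the $\lambda\gg1$ case, where merges undershoot and one maximizes; note also that since all $A_t^\star \le 1$ \emph{uniformly}, there is no boundary exception at $t=T$, so the endgame is in fact cleaner than in \Cref{thm:vanilla_distillation_optimality}.

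Third comes the combinatorial core: showing the right-nested (sequential BOOT) plan attains this minimum. I would isolate two elementary local inequalities, valid for arbitrary sub-block values $a,b,c \in (0,1]$ and shrinkage factors $g,h \in (0,1)$. (\emph{Binarization}) A single one-shot merge of several contiguous sub-blocks dominates the plan that peels off the leftmost sub-block and merges the remainder; the difference factors as $\gamma(1-\gamma)(v_1-1)(P-v_m)$, a nonnegative multiple of a product of two nonpositive terms. (\emph{Rotation}) The left-nested merge $((a,b),c)$ dominates the right-nested merge $(a,(b,c))$, with difference $c(1-g)(1-a)\bigl(g(1-b)+hb\bigr)\ge 0$. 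Both are the same algebraic identities underlying the base case of \Cref{thm:vanilla_distillation_optimality}, now read in the opposite direction. I would then induct on $T$: Binarization makes the top merge binary without increasing the value; the inductive hypothesis lets me replace the two sub-plan values by their smaller sequential-BOOT values, which only decreases the total since each merge is monotone increasing in its inputs; and if the top split is not the single peel $[1,1]\,|\,[2,T]$, one Rotation (followed by the inductive hypothesis on $[2,T]$ and monotonicity of the outer merge) shows the value is still at least that of $\bigl(A_1^\star, \mathrm{BOOT}[2,T]\bigr)=\mathrm{BOOT}[1,T]$. Hence no plan beats sequential BOOT.

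The main obstacle is not any single inequality---each is a short factorization---but correctly handling the full space of plans in the inductive step: merges may combine more than two blocks at once, split at an arbitrary index, and carry endpoint-dependent shrinkage factors $\gamma_{t_2}$ that change under rearrangement. The crucial enabling fact is that Binarization and Rotation hold for \emph{all} $g,h\in(0,1)$, so the argument is insensitive to which $\gamma$'s the competing plan uses; this is precisely what lets Binarization, monotonicity, and a single Rotation collapse an arbitrary plan down to the right comb. Some care is also needed at the degenerate boundary $\lambda=0$ (where $A_1^\star=0$), but the weaker bounds $A_t^\star\in[0,1]$ still suffice and no step breaks.
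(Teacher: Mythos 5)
Your proposal is correct, and its skeleton matches the paper's proof of \Cref{thm:sequential_boot_optimality}: establish \(A_t^\star \in (0,1]\), observe that every merge plan overshoots the target \(\prod_t A_t^\star\) so that minimizing \(\mathcal{L}_{W_2}\) reduces to minimizing the merged value, and then induct on interval length using local exchange inequalities whose base-case algebra is exactly the paper's comparisons \(A_{\text{BOOT}} - A_{\text{vanilla}} \le 0\) and \(A_{\text{BOOT}} - A_{\text{consistency}} \le 0\) (your Binarization difference \(\gamma(1-\gamma)(v_1-1)(Q-v_m)\) and Rotation difference \(c(1-g)(1-a)(g(1-b)+hb)\) specialize to precisely those factorizations). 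The organizational differences are worth recording, because your version is tighter in three places where the paper is terse. First, you actually prove \(A_t^\star \le 1\) via Cauchy--Schwarz together with the monotonicity \(\lVert\bm{v}_t^i\rVert_2^2 = 1-\alpha_t^2(1-\lambda)\) nondecreasing in \(t\), and you prove the overshoot invariant \(V \in [\prod_t A_t^\star, 1]\) by induction over the plan tree; the paper asserts both only in passing. Second, and most substantively, your Rotation lemma holds for \emph{arbitrary} shrinkage factors \(g,h \in (0,1)\), whereas the paper's inductive chain replaces \(\gamma_{t-\ell+k}\) by \(\gamma_t\) inside a product, a step that is valid only because \(\gamma_{t'} \ge \gamma_t\) for \(t' < t\) when \(\lambda \le 1\) (a consequence of the same norm monotonicity) --- a fact the paper uses silently and never states; your route is insensitive to this schedule-dependent ordering. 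Third, your Binarization step handles multi-way merges from~\eqref{eq:recursive_merge} explicitly, while the paper's induction only compares against binary splits whose sub-blocks are already BOOT-merged, leaving the reduction from general plans implicit. What the paper's chain buys is brevity --- it collapses an arbitrary split to the BOOT comb in one displayed computation --- while your modular lemma structure buys robustness and a cleaner inductive step (peel at index \(1\), one rotation, recurse on \([2,T]\)), and correctly notes that, unlike in \Cref{thm:vanilla_distillation_optimality}, no boundary exception at \(t=T\) is needed since \(A_T^\star = \sigma_{T-1} < 1\) sits in the same regime as the other operators.
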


\begin{proof}[Proof of~\Cref{thm:sequential_boot_optimality}]
We prove by induction on the block size \(\ell = t_2 - t_1\) where we require \(1\leq t_1<t_2<T\). When \(\ell = 1\), the merge plan is trivially optimal, as there is only one valid way to merge the operators. We now consider the case \(\ell = 2\), and denote \(t \coloneqq t_2\). We analyze the behavior of three consecutive single-step operators:
\begin{equation}
A_{t-2}^\star, A_{t-1}^\star, A_t^\star,
\end{equation}
satisfying the assumption
\begin{equation}
0 < A_{t-2}^\star, A_{t-1}^\star, A_t^\star < 1.
\end{equation}
Our goal is to approximate the surrogate target given by
\begin{equation}
A_{t-2}^\star A_{t-1}^\star A_t^\star.
\end{equation}
The merge process applies a final shrinkage when the interval ends at time \(t\), parameterized by a shrinkage factor \(\gamma_t \in (0,1)\). Similarly, we denote by \(\gamma_{t-1}\) the shrinkage applied to merges ending at time \(t-1\). We compare three candidate merging strategies. The first is vanilla distillation, which performs a one-shot merge of all three operators. Its merged outcome is
\begin{equation}
A_{\text{vanilla}} \coloneqq (1-\gamma_t)A_{t-2}^\star A_{t-1}^\star A_t^\star + \gamma_t A_t^\star.
\end{equation}
In the second strategy, sequential BOOT distillation, we split after \(A_{t-2}^\star\), treating it as a left block, and merge \(\{A_{t-1}^\star, A_t^\star\}\) as the right block. The overall merged operator is
\begin{equation}
A_{\text{BOOT}} \coloneqq \bigl((1-\gamma_t)A_{t-1}^\star A_t^\star + \gamma_t A_t^\star\bigr)\bigl((1-\gamma_t)A_{t-2}^\star + \gamma_t\bigr).
\end{equation}
The third strategy corresponds to sequential consistency distillation, where we first merge the left block \(\{A_{t-2}^\star, A_{t-1}^\star\}\), and then merge it with \(A_t^\star\). The resulting merged operator is
\begin{equation}
A_{\text{consistency}} \coloneqq A_t^\star\bigl((1-\gamma_t)\bigl((1-\gamma_{t-1})A_{t-2}^\star A_{t-1}^\star + \gamma_{t-1} A_{t-1}^\star\bigr) + \gamma_t\bigr).
\end{equation}
We now compare the approximation errors associated with these three strategies. Since \(0 < A_{t-2}^\star, A_{t-1}^\star, A_t^\star < 1\), all merge outcomes will strictly exceed the target. Therefore, the strategy with the smallest value is closest to the target, and hence yields the lowest error. First, consider the difference between the BOOT outcome and the vanilla outcome:
\begin{equation}
\begin{aligned}
&A_{\text{BOOT}} - A_{\text{vanilla}}\\
=\;& \bigl((1-\gamma_t)A_{t-1}^\star A_t^\star + \gamma_t A_t^\star\bigr)\bigl((1-\gamma_t)A_{t-2}^\star + \gamma_t\bigr) \\
&- (1-\gamma_t)(A_{t-2}^\star A_{t-1}^\star A_t^\star) - \gamma_t A_t^\star\\
=\;& -\gamma_t(1-\gamma_t)A_{t-2}^\star A_{t-1}^\star A_t^\star+\gamma_t(1-\gamma_t)(A_t^\star A_{t-2}^\star+ A_t^\star A_{t-1}^\star)\\
&+(\gamma_t^2-\gamma_t)A_t^\star\\
=\;&-\gamma_t(1-\gamma_t)A_t^\star(A_{t-2}^\star-1)(A_{t-1}^\star-1)\\
\leq\;&0.
\end{aligned}
\end{equation}
Next, we compare the BOOT outcome with the consistency outcome:
\begin{equation}
\begin{aligned}
&A_{\text{BOOT}} - A_{\text{consistency}}\\
=\;& \bigl((1-\gamma_t)A_{t-1}^\star A_t^\star + \gamma_t A_t^\star\bigr)\bigl((1-\gamma_t)A_{t-2}^\star + \gamma_t\bigr) \\
&- A_t^\star\bigl((1-\gamma_t)\bigl((1-\gamma_{t-1})A_{t-2}^\star A_{t-1}^\star + \gamma_{t-1} A_{t-1}^\star\bigr) + \gamma_t\bigr)\\
=\;& (1-\gamma_t)(\gamma_{t-1}-\gamma_t)A_t^\star A_{t-1}^\star(A_{t-2}^\star-1)+\gamma_t(1-\gamma_t)A_t^\star(A_{t-2}^\star-1)\\
=\;& (1-\gamma_t)A_t^\star(A_{t-2}^\star-1)(A_{t-1}^\star(\gamma_{t-1}-\gamma_t)+\gamma_t)\\
\leq\;&0.
\end{aligned}
\end{equation}
Both differences are non-positive. Hence, among the three possible merge outcomes, the sequential BOOT strategy achieves the smallest value and thus best approximates the surrogate target. This completes the base case.

Assume that the sequential BOOT plan is optimal for all block sizes of length \(\ell\). We now prove that it remains optimal for block size \(\ell + 1\). Let \(t \coloneqq t_2\) be the end index of the interval. Consider an alternative merge plan that splits the interval at position \(t - \ell + k\), where \(0 < k < \ell\). We aim to show that the final merged outcome of this split plan is strictly larger than that of the sequential BOOT strategy, and hence suboptimal. The merged outcome of the split plan can be written as
\begin{equation}
\begin{aligned}
A_{\text{split}} \coloneqq\;& \Bigl(A_t^\star\prod_{i=1}^{\ell-k-1}\bigl((1-\gamma_t)A_{t-i}^\star+\gamma_t\bigr)\Bigr)\\
&\cdot\Bigl((1-\gamma_t)A_{t-\ell+k}^\star\prod_{j=\ell-k+1}^{\ell}\bigl((1-\gamma_{t-\ell+k})A_{t-j}^\star+\gamma_{t-\ell+k}\bigr)+\gamma_t\Bigr)\\
\geq\;& \Bigl(A_t^\star\prod_{i=1}^{\ell-k-1}\bigl((1-\gamma_t)A_{t-i}^\star+\gamma_t\bigr)\Bigr)\\
&\cdot\bigl((1-\gamma_t)A_{t-\ell+k}^\star+\gamma_t\bigr)\\
&\cdot\prod_{j=\ell-k+1}^{\ell}\bigl((1-\gamma_{t-\ell+k})A_{t-j}^\star+\gamma_{t-\ell+k}\bigr)\\
\geq\;& \Bigl(A_t^\star\prod_{i=1}^{\ell-k-1}\bigl((1-\gamma_t)A_{t-i}^\star+\gamma_t\bigr)\Bigr)\\
&\cdot\bigl((1-\gamma_t)A_{t-\ell+k}^\star+\gamma_t\bigr)\cdot\prod_{j=\ell-k+1}^{\ell}\bigl((1-\gamma_t)A_{t-j}^\star+\gamma_t\bigr)\\
=\;& A_t^\star\prod_{i=1}^{\ell}\bigl((1-\gamma_t)A_{t-i}^\star+\gamma_t\bigr).
\end{aligned}
\end{equation}
The final expression on the right-hand side corresponds precisely to the merged outcome under the sequential BOOT strategy. Since the split plan yields a result that is at least as large, and since all operators lie in the interval \((0, 1)\), this implies that the split outcome is strictly further from the surrogate target. Therefore, the sequential BOOT merge remains optimal for block size \(\ell + 1\), completing the induction step.
\end{proof}

\subsection{Pareto dynamic programming for optimal merging}
\label{subsec:pareto_dynamic_programming_for_optimal_merging}

In high-dimensional settings, if all data dimensions fall into the same variance regime (i.e., all \(\lambda_i \le 1\) or all \(\lambda_i \gg 1\)), the coordinate-wise optimal strategies coincide, reducing the problem to the scalar cases analyzed in~\Cref{subsec:phase_transitions_in_the_optimal_strategy}. The core difficulty arises in the mixed regime, where the data distribution contains both low-variance dimensions (favoring gradual merging) and high-variance dimensions (favoring immediate merging). Since a single merge plan must be applied globally to the entire state vector, these conflicting objectives cannot be maximized simultaneously by any single scalar strategy.

To resolve this trade-off, we propose a set-valued dynamic programming algorithm (see~\Cref{alg:pareto_dp_merge}). Instead of storing a single ``best'' operator for each sub-interval, we maintain a \emph{Pareto frontier} of non-dominated candidate operators.

\begin{algorithm}[t]
\caption{Pareto dynamic programming for optimal operator merging}
\label{alg:pareto_dp_merge}
\small
\begin{algorithmic}[1]
\Require Single-step operators \(\{\bm{A}_t^\star\}_{t=1}^T\), shrinkage matrices \(\{\bm{\Gamma}_t\}_{t=1}^T\), surrogate target \(\widetilde{\mathcal{T}}_T\), variances \(\{\lambda_i\}_{i=1}^d\)

\State Initialize \(\mathcal{S}[t,t] \gets \{\bm{A}_t^\star\}\) for all \(t\in\{1,\dotsc,T\}\)

\For{\(\ell = 2\) to \(T\)} \Comment{Iterate over interval lengths}
\For{\(t_1 = 1\) to \(T-\ell+1\)} \Comment{Iterate over start times}
\State \(t_2 \gets t_1+\ell-1\)
\State \(\mathcal{S}[t_1,t_2] \gets \emptyset\)

\State \textbf{Candidate 1 (Direct):} \(\bm{A} \gets (\bm{I}-\bm{\Gamma}_{t_2}) \cdot \prod_{t=t_1}^{t_2} \bm{A}_t^\star + \bm{\Gamma}_{t_2}\cdot \bm{A}_{t_2}^\star\)
\State \texttt{InsertAndPrune}(\(\mathcal{S}[t_1,t_2], \bm{A}, \{\lambda_i\}\))

\State \textbf{Candidate 2 (Split):}
\For{\(m=t_1\) to \(t_2-1\)}
\ForAll{\(\bm{A}_L \in \mathcal{S}[t_1,m]\)}
\ForAll{\(\bm{A}_R \in \mathcal{S}[m+1,t_2]\)}
\State \(\bm{A} \gets (\bm{I}-\bm{\Gamma}_{t_2})\cdot \bm{A}_L\cdot \bm{A}_R + \bm{\Gamma}_{t_2}\cdot \bm{A}_R\)
\State \texttt{InsertAndPrune}(\(\mathcal{S}[t_1,t_2], \bm{A}, \{\lambda_i\}\))
\EndFor
\EndFor
\EndFor
\EndFor
\EndFor
\State \Return \(\bm A^\star=\arg\min_{\bm{A}\in\mathcal{S}[1,T]} \sum_{i=1}^d \bigl((\widetilde{\mathcal{T}}_T)_i-(\bm{A})_i\bigr)^2\)
\end{algorithmic}
\end{algorithm}

\paragraph{DP state}
For each interval \([t_1,t_2]\), let \(\mathcal{S}[t_1,t_2]\) be the set of valid merged diagonal operators covering that interval. Each element \(\bm{A} \in \mathcal{S}[t_1,t_2]\) is a candidate diagonal matrix \(\bm{A}_{t_1{:}t_2}\) representing a specific merging strategy.

\paragraph{Pareto pruning rule}
To keep the set size manageable, we enforce a Pareto efficiency constraint. We define a preference direction \(\rho_i\) for each coordinate based on its variance regime:
\begin{equation}
\rho_i \coloneqq
\begin{cases}
+1, & \text{if } \lambda_i > 1 \quad (\text{prefer larger values}),\\
-1, & \text{if } \lambda_i \le 1 \quad (\text{prefer smaller values}).
\end{cases}
\label{eq:preference}
\end{equation}
We say a candidate \(\bm{B}\) \emph{dominates} \(\bm{C}\) if \(\rho_i (\bm{B})_i \ge \rho_i (\bm{C})_i\) for all \(i=1,\dots,d\), with strict inequality for at least one coordinate. The procedure \texttt{InsertAndPrune}(\(\mathcal{S}, \bm{A}_{\text{new}}, \{\lambda_i\}\)) in~\Cref{alg:pareto_dp_merge} updates the set \(\mathcal{S}\) with a new candidate \(\bm{A}_{\text{new}}\) as follows:
\begin{enumerate}
    \item If \(\bm{A}_{\text{new}}\) is dominated by any existing \(\bm{B} \in \mathcal{S}\), discard \(\bm{A}_{\text{new}}\).
    \item If \(\bm{A}_{\text{new}}\) dominates any existing \(\bm{B} \in \mathcal{S}\), remove those \(\bm{B}\) from \(\mathcal{S}\).
    \item If \(\bm{A}_{\text{new}}\) is neither dominated nor dominates, add it to \(\mathcal{S}\).
\end{enumerate}
This ensures that \(\mathcal{S}\) always represents a valid Pareto frontier.

\paragraph{Recurrence}
We compute \(\mathcal{S}[t_1,t_2]\) recursively by exploring two types of merge operations for each interval. Let \(\bm{\Gamma}_{t_2}\coloneqq\mathrm{Diag}(\gamma_t^1(s),\dotsc,\gamma_t^d(s))\) be the shrinkage matrix at the interval's end.
\begin{enumerate}
\item \textbf{Direct Merge:} We consider merging the entire sequence \(t_1, \dots, t_2\) in one shot:
\begin{equation}
\bm{A}^{\mathrm{direct}} = (\bm{I}-\bm{\Gamma}_{t_2}) \cdot \prod_{t=t_1}^{t_2} \bm{A}_t^\star + \bm{\Gamma}_{t_2}\cdot \bm{A}_{t_2}^\star.
\end{equation}
\item \textbf{Split Merge:} We combine optimal sub-plans from a split point \(m\). For every split \(m \in \{t_1, \dots, t_2-1\}\), and every pair of candidates \(\bm{A}_L \in \mathcal{S}[t_1,m]\) and \(\bm{A}_R \in \mathcal{S}[m+1,t_2]\), we form:
\begin{equation}
\bm{A}^{\mathrm{split}} = (\bm{I}-\bm{\Gamma}_{t_2})\cdot \bm{A}_L\cdot \bm{A}_R + \bm{\Gamma}_{t_2}\cdot \bm{A}_R.
\end{equation}
\end{enumerate}

\paragraph{Selecting the optimal plan}
After populating the table up to the full interval \(\mathcal{S}[1,T]\), we select the final operator by minimizing the objective \(\mathcal{L}_{W_2}\) defined in~\Cref{subsec:w2_metric_and_surrogate_target}:
\begin{equation}
\bm{A}^* = \operatorname*{argmin}_{\bm{A} \in \mathcal{S}[1,T]} \sum_{i=1}^d \bigl((\widetilde{\mathcal{T}}_T)_i-(\bm{A})_i\bigr)^2.
\end{equation}

\rev{We remark that, in~\Cref{alg:pareto_dp_merge}, the diagonal Gaussian assumption is mainly a choice of coordinates. For a general centered Gaussian \( \mathcal N(\bm 0,\bm\Sigma) \), an orthogonal change of basis diagonalizes \( \bm\Sigma \), and under the same transformation the DDIM operators, shrinkage matrices, recursive merge rule, and surrogate objective are all preserved. Hence, the algorithm and its optimality result extend directly to non-diagonal centered Gaussian distributions. We defer the formal derivation to~\ref{app:additional_experimental_results_on_pareto_dynamic_programming}.}

We can show that~\Cref{alg:pareto_dp_merge} finds the optimal merging strategy, as formalized in~\Cref{thm:pareto_dp_merge_optimal}.

\begin{theorem}[Optimality of Pareto dynamic programming]
\label{thm:pareto_dp_merge_optimal}
Under the recursive operator merging process defined in~\Cref{sec:variance_driven_optimal_merge_plans_and_pareto_dp_in_the_gaussian_regime}, \Cref{alg:pareto_dp_merge} returns a merged operator that globally minimizes the objective \(\mathcal{L}_{W_2}\) defined in~\Cref{subsec:w2_metric_and_surrogate_target} among all possible merging strategies.
\end{theorem}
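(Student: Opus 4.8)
The plan is to prove the theorem by induction on the interval length $\ell = t_2 - t_1 + 1$, resting on three structural properties of the merging process. These are: (i) \emph{optimal substructure}, that every achievable merged operator on $[t_1,t_2]$ is either the direct merge or is obtained by combining achievable operators on a sub-split $[t_1,m]$ and $[m+1,t_2]$, so that the recurrence in \Cref{subsec:pareto_dynamic_programming_for_optimal_merging} enumerates (up to domination) all reachable candidates; (ii) \emph{monotonicity of the merge map}, ensuring that Pareto domination is preserved when a sub-candidate is merged into a larger block; and (iii) \emph{one-sidedness of the objective}, ensuring that each per-coordinate error $\delta_i$ decreases as one moves in the preference direction $\rho_i$ of \eqref{eq:preference}, so that pruning a dominated candidate can never discard a minimizer.

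For ingredient (ii) I would argue coordinate-wise. Writing the split-merge rule in a fixed coordinate $i$ gives $(\bm{A})_i = (1-\gamma^i)(\bm{A}_L)_i(\bm{A}_R)_i + \gamma^i(\bm{A}_R)_i = (\bm{A}_R)_i\bigl[(1-\gamma^i)(\bm{A}_L)_i + \gamma^i\bigr]$. Since every single-step operator $(\bm{A}_t^\star)_i$ is non-negative under \Cref{as:noise_schedule} and \Cref{as:real_data_distribution}, and $\gamma^i \in (0,1)$ by \Cref{prop:student_gradient_flow}, all merged operators stay non-negative and this expression is non-decreasing in each of $(\bm{A}_L)_i$ and $(\bm{A}_R)_i$. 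Consequently, if $\bm{B}$ dominates $\bm{C}$ in the sense $\rho_i(\bm{B})_i \ge \rho_i(\bm{C})_i$ for all $i$, then substituting $\bm{B}$ for $\bm{C}$ on either side of a merge yields an operator that again dominates the corresponding merge of $\bm{C}$: when $\rho_i = +1$ a larger input produces a larger output, and when $\rho_i = -1$ a smaller input produces a smaller output, in both cases preserving the sign convention of \eqref{eq:preference}.

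The main obstacle lies in ingredient (iii), where I must verify that every achievable merged value $(\bm{A}_{1:T})_i$ lies on the correct side of the surrogate target $(\widetilde{\mathcal{T}}_T)_i$ from \Cref{def:surrogate_composite_operator}, uniformly over \emph{all} merge plans and not merely the three compared in the phase-transition proofs. When $\lambda_i \le 1$, all single-step operators satisfy $(\bm{A}_t^\star)_i \le 1$, and the convex-combination structure of \eqref{eq:recursive_merge} keeps every merged value above the product target, matching the sign analysis of \Cref{thm:sequential_boot_optimality}; hence $\delta_i$ is non-increasing as $(\bm{A}_{1:T})_i$ decreases, consistent with $\rho_i = -1$. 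When $\lambda_i > 1$, the intermediate operators exceed $1$ while the terminal step contracts, so the delicate point is to show, as in the sign bookkeeping of \Cref{thm:vanilla_distillation_optimality}, that the surrogate adjustment places the target above every reachable merged value; this gives that $\delta_i$ is non-increasing as $(\bm{A}_{1:T})_i$ increases, consistent with $\rho_i = +1$. Granting this, each $\delta_i$ is monotone in the preference direction over the reachable range, so a dominated candidate has coordinate-wise larger $\delta_i$ and cannot strictly improve the objective.

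With these in hand, I would close by induction. The base case $\ell = 1$ is immediate since $\mathcal{S}[t,t] = \{\bm{A}_t^\star\}$ is the only reachable operator. For the inductive step, assuming each $\mathcal{S}[t_1,m]$ and $\mathcal{S}[m+1,t_2]$ is the complete Pareto frontier of reachable operators on its sub-interval, optimal substructure (i) expresses any reachable $\bm{A}$ on $[t_1,t_2]$ as a direct merge or a split combination; by the induction hypothesis its sub-factors are dominated by frontier elements, and by monotonicity (ii) the algorithm's merge of those frontier elements---one of the candidates passed to \texttt{InsertAndPrune}---dominates $\bm{A}$. Hence $\mathcal{S}[t_1,t_2]$ remains a complete Pareto frontier after pruning. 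Applying this to $[1,T]$, the global minimizer of $\mathcal{L}_{W_2}$ is dominated by some element of $\mathcal{S}[1,T]$, which by one-sidedness (iii) attains an objective value no larger; therefore the final $\operatorname*{argmin}$ over $\mathcal{S}[1,T]$ returns a globally optimal merged operator.
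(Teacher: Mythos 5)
Your proposal is correct and follows essentially the same route as the paper's proof: the same three pillars of (a) reducing minimization of \(\mathcal{L}_{W_2}\) to Pareto optimality along \(\bm{\rho}\) via the one-sidedness of every reachable \((\bm{A})_i\) relative to \((\widetilde{\mathcal{T}}_T)_i\), (b) strict monotonicity of the coordinate-wise merge map \(f(x,y)=(1-\gamma)xy+\gamma y\) in both arguments, and (c) the principle of optimality ensuring pruning never discards the minimizer. Your version merely makes the induction on interval length and the optimal-substructure enumeration more explicit than the paper does, and your flagged ``delicate point'' for \(\lambda_i>1\) is handled in the paper at the same (brief) level of rigor, so there is no substantive difference.
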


\begin{proof}[Proof of~\Cref{thm:pareto_dp_merge_optimal}]
Let \(\mathcal{A}_{1:T}\) be the set of all valid recursively merged operators on the interval \([1, T]\). The objective function \(\mathcal{L}_{W_2} = \sum_{i=1}^d \bigl((\widetilde{\mathcal{T}}_T)_i - (\bm{A})_i\bigr)^2\) is separable coordinate-wise. To prove that \Cref{alg:pareto_dp_merge} returns the global minimizer, it suffices to show that the final set \(\mathcal{S}[1,T]\) contains at least one operator \(\bm{A}^*\) lying on the Pareto frontier defined by the preference directions \(\bm{\rho}\), and that minimizing the coordinate-wise error is equivalent to optimizing along these directions.

First, we establish the relationship between the Wasserstein error and the preference directions. Recall from~\eqref{eq:preference} that \(\rho_i = +1\) if \(\lambda_i > 1\) and \(\rho_i = -1\) if \(\lambda_i \le 1\). In the high-variance regime where \(\lambda_i > 1\), the surrogate target \((\widetilde{\mathcal{T}}_T)_i\) is a product of terms strictly greater than or equal to \(1\). Since any student operator \((\bm{A})_i\) is formed by convex combinations involving shrinkage factors \(\gamma \in (0,1)\), it effectively interpolates between the target and smaller values, implying \((\bm{A})_i \le (\widetilde{\mathcal{T}}_T)_i\). Consequently, minimizing the squared distance \(((\widetilde{\mathcal{T}}_T)_i - (\bm{A})_i)^2\) is equivalent to maximizing \((\bm{A})_i\), which aligns with \(\rho_i = +1\). Conversely, in the low-variance regime where \(\lambda_i \le 1\), the student operator introduces resets to the single-step teacher values which prevents full contraction, implying \((\bm{A})_i \ge (\widetilde{\mathcal{T}}_T)_i\). Here, minimizing the error requires minimizing \((\bm{A})_i\), aligning with \(\rho_i = -1\). Thus, finding the global minimizer of \(\mathcal{L}_{W_2}\) is equivalent to finding a Pareto-optimal element with respect to \(\bm{\rho}\).

Next, we prove that the Pareto frontier can be constructed via dynamic programming by showing the monotonicity of the merge operation. Consider the split merge operation defined in~\Cref{subsec:pareto_dynamic_programming_for_optimal_merging}. For a specific coordinate \(i\), let \(x = (\bm{A}_L)_i\) and \(y = (\bm{A}_R)_i\) be the values of the left and right sub-blocks, and let \(\gamma = (\bm{\Gamma}_{t_2})_{ii}\) be the shrinkage factor. The merged value is given by the function \(f(x, y) = (1-\gamma)xy + \gamma y\). The partial derivatives are \(\frac{\partial f}{\partial x} = (1-\gamma)y\) and \(\frac{\partial f}{\partial y} = (1-\gamma)x + \gamma\). Since the operators \(x, y\) are positive (being products of positive variance scalings) and \(\gamma \in (0,1)\), these derivatives are strictly positive. This implies that the merge function is strictly increasing in both arguments. Therefore, if a sub-plan \(\bm{B} \in \mathcal{S}[t_1, m]\) dominates another sub-plan \(\bm{C} \in \mathcal{S}[t_1, m]\) according to \(\bm{\rho}\), merging \(\bm{B}\) with any fixed block \(\bm{D}\) yields a result that dominates the merge of \(\bm{C}\) with \(\bm{D}\).

This monotonicity guarantees the principle of optimality: a dominated sub-plan cannot be part of a non-dominated global plan. By iteratively constructing candidates and pruning only those that are strictly dominated, the algorithm ensures that \(\mathcal{S}[1,T]\) retains the Pareto frontier of all possible merge plans. Since the global minimizer of \(\mathcal{L}_{W_2}\) corresponds to a point on this frontier, the algorithm correctly identifies it.
\end{proof}

\section{Approximation error and error propagation in the Gaussian mixture regime}
\label{sec:approximation_error_and_error_propagation_in_the_gaussian_mixture_regime}

To capture the multi-modality of real-world distributions, we extend our analysis to Gaussian mixture models in this section. While the optimal denoising estimator for Gaussian mixtures still has a closed-form expression (\Cref{subsec:optimal_denoising_for_gmm}), the resulting teacher operators are nonlinear. This nonlinearity precludes an exact analysis of the training dynamics. Consequently, we quantify the difficulty of distillation by first characterizing the approximation error of distilling multi-step composite teachers with a finite-capacity \(K\)-component affine mixture-of-experts (MoE) student in~\Cref{subsec:gmm_approximation_error}. We then analyze how these local errors propagate and accumulate across successive merges in~\Cref{subsec:error_propagation_across_merges}.

\subsection{Optimal denoising estimator for Gaussian mixtures}
\label{subsec:optimal_denoising_for_gmm}

In this section, we assume that the data distribution \(p_0\) is a mixture of \(K\) Gaussian distributions with distinct means \(\{\bm{\mu}_k\}_{k=1}^K\) and covariance matrices \(\{\bm{\Lambda}_k\}_{k=1}^K\). Under this assumption, the optimal denoising estimator becomes a convex combination of component-wise affine estimators, as given by~\Cref{prop:optimal_mixture_denoising_estimator}.

\begin{proposition}[Optimal denoising estimator for Gaussian mixtures]
\label{prop:optimal_mixture_denoising_estimator}
Assume the real data distribution \(p_0\) is a Gaussian mixture distribution
\begin{equation}
p_0(\bm{x}_0)=\sum_{k=1}^K \pi_k \mathcal{N}\bigl(\bm{x}_0; \bm{\mu}_k, \bm{\Lambda}_k\bigr), \quad \pi_k > 0, \quad \sum_{k=1}^K \pi_k = 1,
\label{eq:gmm_assumption}
\end{equation}
where each \(\bm{\Lambda}_k\succeq\bm0\). Assume the forward process follows~\eqref{eq:forward_process} and the denoising estimator minimizes the denoising loss~\eqref{eq:denoising_loss_function}. Then the optimal denoising estimator for \(\bm{z}_t\) is 
\begin{equation}
\begin{aligned}
\hat{\bm{x}}_0^\star(\bm{z}_t, t)
&=\mathbb{E}[\bm{x}_0 | \bm{z}_t]\\
&=\sum_{k=1}^K
\gamma_{k,t}(\bm{z}_t)
\bigl(
\bm{\mu}_k
+
\alpha_t \bm{\Lambda}_k
\bigl(\alpha_t^2 \bm{\Lambda}_k + \sigma_t^2 \bm{I}\bigr)^{-1}
(\bm{z}_t - \alpha_t \bm{\mu}_k)
\bigr),
\label{eq:gmm_denoiser}
\end{aligned}
\end{equation}
where the weighting functions \(\gamma_{k,t}(\bm{z}_t)\) are given by
\begin{equation}
\gamma_{k,t}(\bm{z}_t)
=
\frac{
\pi_k \mathcal{N}\bigl(\bm{z}_t; \alpha_t \bm{\mu}_k, \alpha_t^2 \bm{\Lambda}_k + \sigma_t^2 \bm{I}\bigr)
}{
\sum_{\ell=1}^K
\pi_\ell \mathcal{N}\bigl(\bm{z}_t; \alpha_t \bm{\mu}_\ell, \alpha_t^2 \bm{\Lambda}_\ell + \sigma_t^2 \bm{I}\bigr)
}.
\label{eq:weighting_functions}
\end{equation}
\end{proposition}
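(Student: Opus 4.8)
The plan is to reduce the statement to the single-Gaussian case already established in \Cref{prop:optimal_denoising_estimator} by conditioning on the latent mixture component. First I would recall the standard fact that the minimizer of the mean-squared denoising loss \eqref{eq:denoising_loss_function} over all measurable functions of $\bm{z}_t$ is the posterior mean $\mathbb{E}[\bm{x}_0 \mid \bm{z}_t]$, which follows from the $L^2$ projection (orthogonality) characterization of conditional expectation. It therefore remains only to evaluate this expectation under the mixture prior \eqref{eq:gmm_assumption}.

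Next I would augment the model with a latent component indicator $c \in \{1,\dots,K\}$ satisfying $\Pr(c=k) = \pi_k$ and $\bm{x}_0 \mid (c=k) \sim \mathcal{N}(\bm{\mu}_k, \bm{\Lambda}_k)$, so that marginalizing over $c$ recovers $p_0$. Under the forward map \eqref{eq:forward_process}, the component-conditional law of the observation is again Gaussian, $\bm{z}_t \mid (c=k) \sim \mathcal{N}(\alpha_t \bm{\mu}_k,\, \alpha_t^2 \bm{\Lambda}_k + \sigma_t^2 \bm{I})$, since $\bm{z}_t$ is an affine image of the jointly Gaussian pair $(\bm{x}_0, \bm{\varepsilon})$ within that component. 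Applying Bayes' rule to the component posterior then yields exactly the responsibilities $\gamma_{k,t}(\bm{z}_t)$ in \eqref{eq:weighting_functions}, with the component-conditional densities serving as likelihoods and the $\pi_k$ as priors.

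Finally I would invoke the tower property to write $\mathbb{E}[\bm{x}_0 \mid \bm{z}_t] = \sum_{k=1}^K \Pr(c=k \mid \bm{z}_t)\, \mathbb{E}[\bm{x}_0 \mid \bm{z}_t, c=k] = \sum_{k=1}^K \gamma_{k,t}(\bm{z}_t)\, \mathbb{E}[\bm{x}_0 \mid \bm{z}_t, c=k]$. Within each component the conditioning is over a single Gaussian, so the general affine (nonzero-mean) form derived inside the proof of \Cref{prop:optimal_denoising_estimator} gives $\mathbb{E}[\bm{x}_0 \mid \bm{z}_t, c=k] = \bm{\mu}_k + \alpha_t \bm{\Lambda}_k(\alpha_t^2 \bm{\Lambda}_k + \sigma_t^2 \bm{I})^{-1}(\bm{z}_t - \alpha_t \bm{\mu}_k)$. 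Substituting this into the weighted sum reproduces \eqref{eq:gmm_denoiser}.

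I do not anticipate a serious obstacle, since the argument is a clean composition of Bayes' rule with the single-Gaussian result; the only point requiring care is the bookkeeping in the Bayes step, namely verifying that the normalizing denominator in \eqref{eq:weighting_functions} is precisely the marginal density of $\bm{z}_t$ (the mixture of component-conditional Gaussians), so that the responsibilities sum to one and the decomposition is exact rather than approximate.
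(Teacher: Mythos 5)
Your proposal is correct and follows essentially the same route as the paper's proof: conditioning on the latent component via the tower property, applying Bayes' rule to obtain the responsibilities \(\gamma_{k,t}(\bm{z}_t)\) from the component-marginal laws \(\bm{z}_t \mid k \sim \mathcal{N}(\alpha_t\bm{\mu}_k,\, \alpha_t^2\bm{\Lambda}_k + \sigma_t^2\bm{I})\), and using the affine Gaussian conditioning formula (as in the general nonzero-mean case inside the proof of \Cref{prop:optimal_denoising_estimator}) within each component. The only cosmetic difference is that the paper re-derives the per-component conditional mean from the joint Gaussian of \((\bm{x}_0, \bm{z}_t)\) given \(k\), whereas you cite the already-established affine form, which is an entirely legitimate shortcut.
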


\begin{proof}[Proof of~\Cref{prop:optimal_mixture_denoising_estimator}]
Let \(k \in \{1,\dots,K\}\) denote the latent mixture component. By the law of total expectation,
\begin{equation}
\mathbb{E}[\bm{x}_0 | \bm{z}_t] = \sum_{k=1}^K p(k | \bm{z}_t) \mathbb{E}[\bm{x}_0 | \bm{z}_t, k].
\end{equation}
Conditioned on \(k\), we have \(\bm{x}_0 | k \sim \mathcal{N}(\bm{\mu}_k, \bm{\Lambda}_k)\) and \(\bm{z}_t | \bm{x}_0 \sim \mathcal{N}(\alpha_t \bm{x}_0, \sigma_t^2 \bm{I})\). This implies the joint Gaussian
\begin{equation}
\begin{bmatrix}
\bm{x}_0 \\
\bm{z}_t
\end{bmatrix}
\bigg| k \sim \mathcal{N}\Biggl(
\begin{bmatrix}
\bm{\mu}_k \\
\alpha_t \bm{\mu}_k
\end{bmatrix},
\begin{bmatrix}
\bm{\Lambda}_k & \alpha_t \bm{\Lambda}_k \\
\alpha_t \bm{\Lambda}_k & \alpha_t^2 \bm{\Lambda}_k + \sigma_t^2 \bm{I}
\end{bmatrix}
\Biggr).
\end{equation}
Applying Gaussian conditioning yields
\begin{equation}
\mathbb{E}[\bm{x}_0 | \bm{z}_t, k]
=
\bm{\mu}_k
+
\alpha_t \bm{\Lambda}_k
\bigl(\alpha_t^2 \bm{\Lambda}_k + \sigma_t^2 \bm{I}\bigr)^{-1}
(\bm{z}_t - \alpha_t \bm{\mu}_k).
\end{equation}
Next, Bayes' rule gives
\begin{equation}
p(k | \bm{z}_t)
=
\frac{p(\bm{z}_t | k) \pi_k}{\sum_{\ell=1}^K p(\bm{z}_t | \ell) \pi_\ell},
\end{equation}
and marginalizing the joint distribution above yields \(\bm{z}_t | k \sim \mathcal{N}(\alpha_t \bm{\mu}_k, \alpha_t^2 \bm{\Lambda}_k + \sigma_t^2 \bm{I})\). Substituting these expressions and setting \(\gamma_{k,t}(\bm{z}_t) \coloneqq p(k | \bm{z}_t)\) gives~\eqref{eq:gmm_denoiser}.
\end{proof}

Substituting~\eqref{eq:gmm_denoiser} into the DDIM update rule~\eqref{eq:denosing_process} shows that the single-step operator is a mixture of affine transformations. The proof is straightforward, so we omit it.

\begin{corollary}[Single-step operator for Gaussian mixtures]
\label{cor:gmm_single_step}
Under the assumptions of~\Cref{prop:optimal_mixture_denoising_estimator}, the DDIM sampling process~\eqref{eq:denosing_process} can be written as
\begin{equation}
\bm{z}_{t-1}
=
\sum_{k=1}^K \gamma_{k,t}(\bm{z}_t)\bigl(\bm{A}_{k,t}\bm{z}_t + \bm{b}_{k,t}\bigr),
\label{eq:gmm_single_operator_moe}
\end{equation}
where, for each component \(1\leq k\leq K\),
\begin{equation}
\begin{aligned}
\bm{A}_{k,t}
&\coloneqq
(\alpha_{t-1}\alpha_t\bm{\Lambda}_k+\sigma_{t-1}\sigma_t\bm{I})
(\alpha_t^2\bm{\Lambda}_k+\sigma_t^2\bm{I})^{-1},\\
\bm{b}_{k,t}
&\coloneqq
(\alpha_{t-1}\bm{I}-\alpha_t \bm{A}_{k,t})\bm{\mu}_k.
\label{eq:gmm_affine_params}
\end{aligned}
\end{equation}
\end{corollary}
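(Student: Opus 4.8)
The plan is to substitute the optimal Gaussian-mixture denoiser~\eqref{eq:gmm_denoiser} directly into the DDIM update~\eqref{eq:denosing_process}, exploiting two structural facts: the DDIM step is \emph{affine} in the denoiser output, and the posterior weights satisfy \(\sum_{k=1}^K \gamma_{k,t}(\bm{z}_t) = 1\). Together these let me rewrite the update as a convex combination of per-component DDIM steps, each of which is affine by the mean-shifted single-Gaussian analysis already carried out in the proof of~\Cref{prop:optimal_denoising_estimator} and summarized in~\Cref{cor:optimal_single_operator}.

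First I would rewrite the DDIM update~\eqref{eq:denosing_process} by collecting the \(\bm{z}_t\) and \(\hat{\bm{x}}_0^\star\) terms, obtaining \(\bm{z}_{t-1} = c_1\bm{z}_t + c_2\,\hat{\bm{x}}_0^\star(\bm{z}_t,t)\) with scalar coefficients \(c_1 = \sigma_{t-1}/\sigma_t\) and \(c_2 = \alpha_{t-1} - \alpha_t\sigma_{t-1}/\sigma_t\). Next I would substitute the mixture form \(\hat{\bm{x}}_0^\star = \sum_k \gamma_{k,t}\hat{\bm{x}}_{0,k}^\star\), where \(\hat{\bm{x}}_{0,k}^\star = \bm{\mu}_k + \alpha_t\bm{\Lambda}_k(\alpha_t^2\bm{\Lambda}_k + \sigma_t^2\bm{I})^{-1}(\bm{z}_t - \alpha_t\bm{\mu}_k)\) is the \(k\)-th component estimator appearing in~\eqref{eq:gmm_denoiser}. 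Because the weights sum to one, I can write \(\bm{z}_t = \sum_k \gamma_{k,t}\bm{z}_t\) and distribute it into the sum, yielding \(\bm{z}_{t-1} = \sum_k \gamma_{k,t}\bigl(c_1\bm{z}_t + c_2\hat{\bm{x}}_{0,k}^\star\bigr)\). Each summand is precisely the DDIM step one would obtain from a \emph{single} Gaussian component \(\mathcal{N}(\bm{\mu}_k,\bm{\Lambda}_k)\), so the affine form \(\bm{A}_{k,t}\bm{z}_t + \bm{b}_{k,t}\) follows immediately.

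The only remaining work is to confirm that the coefficients match~\eqref{eq:gmm_affine_params}. Expanding \(c_1\bm{z}_t + c_2\hat{\bm{x}}_{0,k}^\star\) gives a linear part \((c_1\bm{I} + c_2\bm{M}_k)\bm{z}_t\) and a constant part \(c_2(\bm{I} - \alpha_t\bm{M}_k)\bm{\mu}_k\), where \(\bm{M}_k \coloneqq \alpha_t\bm{\Lambda}_k(\alpha_t^2\bm{\Lambda}_k + \sigma_t^2\bm{I})^{-1}\). For the linear part I would place everything over the common factor \((\alpha_t^2\bm{\Lambda}_k + \sigma_t^2\bm{I})^{-1}\) and simplify the numerator to \(\sigma_t(\alpha_{t-1}\alpha_t\bm{\Lambda}_k + \sigma_{t-1}\sigma_t\bm{I})\), so that the \(\sigma_t\) cancels and \(\bm{A}_{k,t}\) is recovered. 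For the constant part I would use the scalar identity \(\alpha_{t-1} - \alpha_t c_1 = c_2\), which gives \(\alpha_{t-1}\bm{I} - \alpha_t\bm{A}_{k,t} = c_2(\bm{I} - \alpha_t\bm{M}_k)\) and hence \(\bm{b}_{k,t} = (\alpha_{t-1}\bm{I} - \alpha_t\bm{A}_{k,t})\bm{\mu}_k\). There is no genuine obstacle here: the argument reduces to the single-Gaussian case plus two routine algebraic simplifications. The only point demanding care is the weights-sum-to-one manipulation that absorbs the bare \(\bm{z}_t\) term into the mixture, since this is exactly what converts the superficially non-mixture DDIM update into the claimed convex combination of affine experts.
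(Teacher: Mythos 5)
Your proposal is correct and follows exactly the route the paper indicates for~\Cref{cor:gmm_single_step}: the paper omits the proof as straightforward, describing it precisely as substituting the mixture denoiser~\eqref{eq:gmm_denoiser} into the DDIM update~\eqref{eq:denosing_process}, and your computation fills in that omission faithfully, including the key use of \(\sum_{k}\gamma_{k,t}(\bm{z}_t)=1\) to absorb the bare \(\bm{z}_t\) term into the mixture. Both coefficient verifications check out: the common-denominator simplification yields \(c_1\bm{I}+c_2\bm{M}_k=\bm{A}_{k,t}\), and the scalar identity \(\alpha_{t-1}-\alpha_t c_1=c_2\) gives \(\bm{b}_{k,t}=(\alpha_{t-1}\bm{I}-\alpha_t\bm{A}_{k,t})\bm{\mu}_k\) as claimed in~\eqref{eq:gmm_affine_params}.
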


\subsection{Approximation error under affine MoE distillation}
\label{subsec:gmm_approximation_error}

Motivated by the structure of the single-step operator derived in~\Cref{cor:gmm_single_step}, we parameterize both the teacher and the student models as \(K\)-component affine mixture-of-experts (MoE). Specifically, we assume that the single-step teacher operator at time \(t\) takes the form
\begin{equation}
\bm{f}^{\star}(\bm{z}_t, t)
\coloneqq
\sum_{k=1}^K \gamma_{k,t}^\star(\bm{z}_t)\bigl(\bm{A}_{k,t}^\star \bm{z}_t + \bm{b}_{k,t}^\star\bigr),
\label{eq:gmm_teacher_moe}
\end{equation}
where \(\gamma_{k,t}^\star(\cdot)\coloneqq\gamma_{k,t}(\cdot)\), \(\bm{A}_{k,t}^\star\coloneqq\bm A_{k,t}\), and \(\bm{b}_{k,t}^\star\coloneqq\bm b_{k,t}\) are the parameters given by~\eqref{eq:weighting_functions} and~\eqref{eq:gmm_affine_params}. Since the student model is usually initialized from the teacher model~\citep{luhman2021knowledge,salimans2022progressive}, we parameterize it as
\begin{equation}
\bm{f}^{\mathrm{st}}(\bm{z}_t, t)
\coloneqq
\sum_{k=1}^K \gamma_{k,t}^{\mathrm{st}}(\bm{z}_t)\bigl(\bm{A}_{k,t}^{\mathrm{st}} \bm{z}_t + \bm{b}_{k,t}^{\mathrm{st}}\bigr),
\label{eq:gmm_student_moe}
\end{equation}
where \(\gamma_{k,t}^{\mathrm{st}}(\bm{z}) \ge 0\) and \(\sum_{k=1}^K \gamma_{k,t}^{\mathrm{st}}(\bm{z}) = 1\) for all \(\bm{z}\). The learnable parameters are \(\{\gamma_{k,t}^{\mathrm{st}}\}\), \(\{\bm{A}_{k,t}^{\mathrm{st}}\}\), and \(\{\bm{b}_{k,t}^{\mathrm{st}}\}\).

We focus on the trajectory distillation objective in~\Cref{as:trajectory_distillation_loss_for_gaussian_mixtures}, where the student operator attempts to match the multi-step composite teacher operator.

\begin{assumption}[Trajectory distillation loss for Gaussian mixtures]
\label{as:trajectory_distillation_loss_for_gaussian_mixtures}
Let \(\bm{z}_t \sim p_t = (\alpha_t p_0) * \mathcal{N}(\bm{0}, \sigma_t^2 \bm{I})\) be a noisy input at time \(t\). We assume that the distillation loss at time \(t\) is given by
\begin{equation}
\mathcal{L}_t\bigl(\{\gamma_{k,t}^{\mathrm{st}}\}, \{\bm{A}_{k,t}^{\mathrm{st}}\}, \{\bm{b}_{k,t}^{\mathrm{st}}\}\bigr) = \mathbb{E}_{\bm{z}_t \sim p_t}\bigl[\lVert\bm{f}^{\mathrm{st}}(\bm{z}_t,t) - \mathcal{T}_k(\bm{z}_t)\rVert_2^2\bigr].
\label{eq:distillation_loss_gmm}
\end{equation}
Here, \(\mathcal{T}_k(\bm{z}_t)\) denotes the \(k\)-step composite teacher operator starting from time \(t\), as in~\Cref{def:composite_teacher_operator}.
\end{assumption}

Directly minimizing this loss is challenging because the composite teacher operator \(\mathcal{T}_k\) is functionally more complex than the single-step student. For clarity, we focus on the case \(k=2\) in the rest of this subsection, but the result easily generalizes. We show in~\Cref{lem:two_step_moe_expansion} that composing two \(K\)-component affine MoE operators induces \(K^2\) expert components, indexed by ordered pairs \((i,j)\), where \(i\) selects the expert at time \(t\) and \(j\) selects the expert at time \(t-1\) evaluated at the intermediate output.

\begin{lemma}[Two-step composite operator has \(K^2\) components]
\label{lem:two_step_moe_expansion}
Assume that the teacher single-step operators at times \(t\) and \(t-1\) are \(K\)-component affine MoE as defined in~\eqref{eq:gmm_teacher_moe}. Recall the two-step composite operator \(\mathcal{T}_2(\bm{z}_t)=\bm{f}^{\star}(\bm{f}^{\star}(\bm{z}_t,t),t-1)\). Then \(\mathcal{T}_2\) admits the following expansion
\begin{equation}
\mathcal{T}_2(\bm{z}_t)=\sum_{i=1}^{K}\sum_{j=1}^{K} w_{ij,t}^{(2)}(\bm{z}_t)\bigl(\bm{A}^{(2)}_{ij,t}\bm{z}_t+\bm{b}^{(2)}_{ij,t}\bigr),
\end{equation}
where the composite weights and affine parameters are given by
\begin{equation}
w_{ij,t}^{(2)}(\bm{z}_t)
=
\gamma_{i,t}^\star(\bm{z}_t)
\gamma_{j,t-1}^\star\bigl(\bm{f}^{\star}(\bm{z}_t, t)\bigr),
\end{equation}
\begin{equation}
\bm{A}^{(2)}_{ij,t}=\bm{A}_{j,t-1}^\star\bm{A}_{i,t}^\star,
\quad
\bm{b}^{(2)}_{ij,t}=\bm{A}_{j,t-1}^\star\bm{b}_{i,t}^\star+\bm{b}_{j,t-1}^\star.
\end{equation}
Moreover \(\sum_{i=1}^{K}\sum_{j=1}^{K} w_{ij,t}^{(2)}(\bm{z}_t)=1\) for all \(\bm{z}_t\).
\end{lemma}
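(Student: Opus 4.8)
The plan is to compose the two affine mixture-of-experts maps directly and then reorganize the result into a single double sum indexed by ordered pairs $(i,j)$. First I would name the intermediate state $\bm{u} \coloneqq \bm{f}^{\star}(\bm{z}_t, t) = \sum_{i=1}^K \gamma_{i,t}^\star(\bm{z}_t)\bigl(\bm{A}_{i,t}^\star \bm{z}_t + \bm{b}_{i,t}^\star\bigr)$, so that by definition of $\mathcal{T}_2$ the object to analyze is $\mathcal{T}_2(\bm{z}_t) = \bm{f}^{\star}(\bm{u}, t-1) = \sum_{j=1}^K \gamma_{j,t-1}^\star(\bm{u})\bigl(\bm{A}_{j,t-1}^\star \bm{u} + \bm{b}_{j,t-1}^\star\bigr)$. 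The whole argument then reduces to rewriting each outer affine branch $\bm{A}_{j,t-1}^\star \bm{u} + \bm{b}_{j,t-1}^\star$ as a convex combination over the inner experts indexed by $i$.

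The key step, and the only piece of the calculation that is not purely mechanical, is the use of the partition of unity $\sum_{i=1}^K \gamma_{i,t}^\star(\bm{z}_t) = 1$. Substituting $\bm{u}$ into the outer linear map distributes it over the inner sum, giving $\bm{A}_{j,t-1}^\star \bm{u} = \sum_i \gamma_{i,t}^\star(\bm{z}_t)\,\bm{A}_{j,t-1}^\star\bigl(\bm{A}_{i,t}^\star \bm{z}_t + \bm{b}_{i,t}^\star\bigr)$. The outer bias $\bm{b}_{j,t-1}^\star$ carries no inner index, so to fold it into the same $i$-sum I would multiply it by $1 = \sum_i \gamma_{i,t}^\star(\bm{z}_t)$. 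This yields $\bm{A}_{j,t-1}^\star \bm{u} + \bm{b}_{j,t-1}^\star = \sum_i \gamma_{i,t}^\star(\bm{z}_t)\bigl[\bm{A}_{j,t-1}^\star \bm{A}_{i,t}^\star \bm{z}_t + \bm{A}_{j,t-1}^\star \bm{b}_{i,t}^\star + \bm{b}_{j,t-1}^\star\bigr]$, from which the composite affine parameters $\bm{A}^{(2)}_{ij,t} = \bm{A}_{j,t-1}^\star \bm{A}_{i,t}^\star$ and $\bm{b}^{(2)}_{ij,t} = \bm{A}_{j,t-1}^\star \bm{b}_{i,t}^\star + \bm{b}_{j,t-1}^\star$ can be read off directly.

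Collecting terms, I would substitute this identity back into the outer sum over $j$ and interchange the two finite sums to obtain $\mathcal{T}_2(\bm{z}_t) = \sum_{i=1}^K \sum_{j=1}^K \gamma_{i,t}^\star(\bm{z}_t)\,\gamma_{j,t-1}^\star(\bm{u})\bigl(\bm{A}^{(2)}_{ij,t}\bm{z}_t + \bm{b}^{(2)}_{ij,t}\bigr)$, which matches the claimed expansion with $w_{ij,t}^{(2)}(\bm{z}_t) = \gamma_{i,t}^\star(\bm{z}_t)\,\gamma_{j,t-1}^\star\bigl(\bm{f}^{\star}(\bm{z}_t,t)\bigr)$. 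The normalization then follows by factoring the double sum, $\sum_{i,j} w_{ij,t}^{(2)}(\bm{z}_t) = \bigl(\sum_i \gamma_{i,t}^\star(\bm{z}_t)\bigr)\bigl(\sum_j \gamma_{j,t-1}^\star(\bm{u})\bigr) = 1$, using that each single-step gating is stochastic at its respective argument. I do not anticipate a genuine obstacle, since the content is elementary; the one subtlety worth flagging in the writeup is that the inner gate $\gamma_{j,t-1}^\star$ is evaluated at the data-dependent point $\bm{u} = \bm{f}^{\star}(\bm{z}_t,t)$ rather than at $\bm{z}_t$, which is exactly why the composite weights $w_{ij,t}^{(2)}$ retain a nonlinear dependence on $\bm{z}_t$ even though every per-pair map $\bm{A}^{(2)}_{ij,t}\bm{z}_t + \bm{b}^{(2)}_{ij,t}$ is affine.
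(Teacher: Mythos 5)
Your proposal is correct and follows essentially the same route as the paper's proof: both compose the two MoE maps, apply the partition of unity \(\sum_{i}\gamma_{i,t}^\star(\bm{z}_t)=1\) to fold the outer bias \(\bm{b}_{j,t-1}^\star\) into the inner sum, and read off the composite affine parameters and weights. Your explicit factorization argument for \(\sum_{i,j} w_{ij,t}^{(2)}(\bm{z}_t)=1\) is a small correct addition that the paper leaves implicit.
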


\begin{proof}[Proof of~\Cref{lem:two_step_moe_expansion}]
We use the shorthand \(\bm{y}\coloneqq\bm{f}^{\star}(\bm{z}_t,t)\). By definition,
\begin{equation}
\mathcal{T}_2(\bm{z}_t)=\bm{f}^{\star}(\bm{y},t-1)
=\sum_{j=1}^{K}\gamma_{j,t-1}^\star(\bm{y})\bigl(\bm{A}_{j,t-1}^\star\bm{y}+\bm{b}_{j,t-1}^\star\bigr).
\end{equation}
Substitute the MoE form of \(\bm{y}\) into the affine term. Note that \(\gamma_{j,t-1}^\star(\bm{y})\) acts as a common scalar for the inner sum. We distribute the operation over the sum on \(i\) and obtain
\begin{equation}
\bm{A}_{j,t-1}^\star\bm{y} + \bm{b}_{j,t-1}^\star
=
\bm{A}_{j,t-1}^\star \sum_{i=1}^{K}\gamma_{i,t}^\star(\bm{z}_t)\bigl(\bm{A}_{i,t}^\star\bm{z}_t+\bm{b}_{i,t}^\star\bigr) + \sum_{i=1}^K \gamma_{i,t}^\star(\bm{z}_t) \bm{b}_{j,t-1}^\star,
\end{equation}
where we used \(\sum_{i=1}^K \gamma_{i,t}^\star(\bm{z}_t) = 1\) to absorb the bias term \(\bm{b}_{j,t-1}^\star\). Grouping terms by index \(i\), we get
\begin{equation}
\bm{A}_{j,t-1}^\star\bm{y} + \bm{b}_{j,t-1}^\star
=
\sum_{i=1}^{K}\gamma_{i,t}^\star(\bm{z}_t)\bigl(\bm{A}_{j,t-1}^\star\bm{A}_{i,t}^\star\bm{z}_t+\bm{A}_{j,t-1}^\star\bm{b}_{i,t}^\star+\bm{b}_{j,t-1}^\star\bigr).
\end{equation}
Substituting this back into the expression for \(\mathcal{T}_2(\bm{z}_t)\) yields the double summation with weights \(w_{ij,t}^{(2)}(\bm{z}_t) = \gamma_{j,t-1}^\star(\bm{y}) \gamma_{i,t}^\star(\bm{z}_t)\).
\end{proof}

The two-step expansion in~\Cref{lem:two_step_moe_expansion} extends to \(k\) steps. In particular, by induction on the number of composition steps, the \(k\)-step composite teacher operator \(\mathcal{T}_k\) admits a expansion with \(K^k\) components, indexed by ordered \(k\)-tuples of experts. Consequently, when the student is restricted to \(K\) components, approximating \(\mathcal{T}_k\) requires compressing \(K^k\) teacher components into \(K\) student components. Combined with~\Cref{thm:two_step_merging_error}, this exponential growth in the number of teacher components suggests that the approximation error generally increases with the composite horizon \(k\), making distillation harder.

We can now state our main result for this section. Since the student model only has \(K\) components (see~\eqref{eq:gmm_student_moe}), it cannot perfectly represent the \(K^2\) components of the composite teacher operator derived in~\Cref{lem:two_step_moe_expansion}. The following~\Cref{thm:two_step_merging_error} shows that the approximation error measured in distillation loss~\eqref{eq:distillation_loss_gmm} is upper-bounded by the residual error of clustering these \(K^2\) teacher components into \(K\) student components.

\begin{theorem}[Approximation error for compressing a two-step \(K^2\)-component teacher]
\label{thm:two_step_merging_error}
Fix a time \(t\) and let \(\bm{z}_t\sim p_t\). Assume that the two-step composite operator \(\mathcal{T}_2\) admits the expansion from \(\Cref{lem:two_step_moe_expansion}\):
\begin{equation}
\mathcal{T}_2(\bm{z}_t)=\sum_{i=1}^{K}\sum_{j=1}^{K} w_{ij,t}^{(2)}(\bm{z}_t)\bm{g}_{ij,t}(\bm{z}_t),
\quad
\bm{g}_{ij,t}(\bm{z}_t)=\bm{A}^{(2)}_{ij,t}\bm{z}_t+\bm{b}^{(2)}_{ij,t}.
\end{equation}
Define the best-achievable \(K\)-component student approximation error
\begin{equation}
\varepsilon_{t-1{:}t}
\coloneqq
\inf_{\gamma_{k,t}^{\mathrm{st}}, \bm A_{k,t}^{\mathrm{st}}, \bm b_{k,t}^{\mathrm{st}}}
\mathbb{E}_{\bm{z}_t\sim p_t}\Bigl[\bigl\lVert\bm{f}^{\mathrm{st}}(\bm{z}_t,t)-\mathcal{T}_2(\bm{z}_t)\bigr\rVert_2^2\Bigr].
\end{equation}
Then, for any partition \(\Pi=\{S_1,\dotsc,S_K\}\) of the \(K^2\) index pairs \((i,j)\),
\begin{equation}
\varepsilon_{t-1{:}t}
\leq
\sum_{k=1}^{K}\inf_{\bm{A},\bm{b}}
\mathbb{E}_{\bm{z}_t\sim p_t}\Bigl[
\sum_{(i,j)\in S_k} w_{ij,t}^{(2)}(\bm{z}_t)\bigl\lVert\bm{A}\bm{z}_t+\bm{b}-\bm{g}_{ij,t}(\bm{z}_t)\bigr\rVert_2^2
\Bigr].
\end{equation}
Furthermore, the inner objective for each cluster \(S_k\) decomposes into a bias (distance to the cluster centroid) and a variance term as
\begin{multline}
\sum_{(i,j)\in S_k} w_{ij,t}^{(2)}(\bm{z}_t)\bigl\lVert\bm{A}\bm{z}_t+\bm{b}-\bm{g}_{ij,t}(\bm{z}_t)\bigr\rVert_2^2
=\\
W_{k,t}(\bm{z}_t)\bigl\lVert\bm{A}\bm{z}_t+\bm{b}-\bar{\bm{g}}_{k,t}(\bm{z}_t)\bigr\rVert_2^2
+
\sum_{(i,j)\in S_k} w_{ij,t}^{(2)}(\bm{z}_t)\bigl\lVert\bm{g}_{ij,t}(\bm{z}_t)-\bar{\bm{g}}_{k,t}(\bm{z}_t)\bigr\rVert_2^2,
\label{eq:bias_variance_decomp}
\end{multline}
where \(W_{k,t}(\bm{z}_t)=\sum_{(i,j)\in S_k} w_{ij,t}^{(2)}(\bm{z}_t)\) is the total weight of cluster \(k\), and 
\begin{equation}
\bar{\bm{g}}_{k,t}(\bm{z}_t)
=
\frac{1}{W_{k,t}(\bm{z}_t)}
\sum_{(i,j)\in S_k} w_{ij,t}^{(2)}(\bm{z}_t)\bm{g}_{ij,t}(\bm{z}_t)
\end{equation}
is the weighted centroid of the teacher operators in that cluster.
\end{theorem}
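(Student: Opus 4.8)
The plan is to split the statement into two logically independent pieces: the variational upper bound on \(\varepsilon_{t-1{:}t}\), and the pointwise bias--variance identity~\eqref{eq:bias_variance_decomp}. The second is a purely algebraic (parallel-axis) computation valid for each fixed \(\bm{z}_t\), and I would treat it separately; notably, the bound in the first part can be established without invoking it.

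For the upper bound, the key observation is that \(\varepsilon_{t-1{:}t}\) is an infimum over \emph{all} admissible \(K\)-component students, so it suffices to exhibit one feasible student whose loss equals the claimed right-hand side (after the per-cluster minimization). Fix the partition \(\Pi=\{S_1,\dots,S_K\}\). I would define the student gating by aggregating the composite teacher weights over each block, \(\gamma_{k,t}^{\mathrm{st}}(\bm{z}_t)\coloneqq W_{k,t}(\bm{z}_t)=\sum_{(i,j)\in S_k} w_{ij,t}^{(2)}(\bm{z}_t)\), and let the \(k\)-th expert be an arbitrary affine map \(\bm{A}_k\bm{z}_t+\bm{b}_k\). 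This is feasible: each \(\gamma_{k,t}^{\mathrm{st}}\ge 0\), and since \(\Pi\) partitions the \(K^2\) pairs while \(\sum_{i,j} w_{ij,t}^{(2)}=1\) by~\Cref{lem:two_step_moe_expansion}, the gating weights sum to one. Writing the teacher as \(\mathcal{T}_2(\bm{z}_t)=\sum_k\sum_{(i,j)\in S_k} w_{ij,t}^{(2)}(\bm{z}_t)\bm{g}_{ij,t}(\bm{z}_t)\) and this student as \(\sum_k\sum_{(i,j)\in S_k} w_{ij,t}^{(2)}(\bm{z}_t)(\bm{A}_k\bm{z}_t+\bm{b}_k)\), the residual becomes a single \(w_{ij,t}^{(2)}\)-weighted convex combination of the vectors \((\bm{A}_k\bm{z}_t+\bm{b}_k)-\bm{g}_{ij,t}(\bm{z}_t)\). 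Applying Jensen's inequality to the convex map \(\bm{x}\mapsto\lVert\bm{x}\rVert_2^2\) gives, pointwise in \(\bm{z}_t\),
\[\bigl\lVert\bm{f}^{\mathrm{st}}(\bm{z}_t,t)-\mathcal{T}_2(\bm{z}_t)\bigr\rVert_2^2\le\sum_{k=1}^{K}\sum_{(i,j)\in S_k} w_{ij,t}^{(2)}(\bm{z}_t)\bigl\lVert\bm{A}_k\bm{z}_t+\bm{b}_k-\bm{g}_{ij,t}(\bm{z}_t)\bigr\rVert_2^2.\]
Taking \(\mathbb{E}_{\bm{z}_t\sim p_t}\), minimizing each cluster term independently over its own \((\bm{A}_k,\bm{b}_k)\) (the clusters are decoupled on the right-hand side), and using that \(\varepsilon_{t-1{:}t}\) is an infimum over a superset of students yields the stated bound.

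For the decomposition~\eqref{eq:bias_variance_decomp}, I would argue pointwise in \(\bm{z}_t\) for arbitrary \(\bm{A},\bm{b}\) by setting \(\bm{h}\coloneqq\bm{A}\bm{z}_t+\bm{b}\) and splitting \(\bm{h}-\bm{g}_{ij,t}=(\bm{h}-\bar{\bm{g}}_{k,t})+(\bar{\bm{g}}_{k,t}-\bm{g}_{ij,t})\). Expanding \(\lVert\cdot\rVert_2^2\) and summing against \(w_{ij,t}^{(2)}\) over \((i,j)\in S_k\), the two squared terms reproduce \(W_{k,t}\lVert\bm{h}-\bar{\bm{g}}_{k,t}\rVert_2^2\) and the variance term, while the cross term vanishes because \(\sum_{(i,j)\in S_k} w_{ij,t}^{(2)}(\bar{\bm{g}}_{k,t}-\bm{g}_{ij,t})=W_{k,t}\bar{\bm{g}}_{k,t}-W_{k,t}\bar{\bm{g}}_{k,t}=\bm{0}\) by the very definition of the weighted centroid \(\bar{\bm{g}}_{k,t}\). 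This is precisely the orthogonality that makes the centroid the minimizer of the weighted spread.

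The main obstacle is not any single calculation but the choice of feasible student: recognizing that setting the gating equal to the aggregated block weights \(W_{k,t}(\bm{z}_t)\) is both admissible (simplex-valued gating) and exactly what collapses the residual into a \emph{single} convex combination over all \(K^2\) pairs, so that one application of Jensen's inequality eliminates every inter-cluster cross term at once. Without aligning the gating with the cluster weights, the residual norm retains coupling terms across clusters that do not telescope into the clean per-cluster sum. A secondary point to record carefully is that the bound holds for every partition \(\Pi\) simultaneously, so one may optimize over \(\Pi\) afterward; the proof itself only fixes an arbitrary \(\Pi\).
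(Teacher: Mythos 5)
Your proposal is correct and follows essentially the same route as the paper's proof: the same feasible student with gating set to the aggregated block weights \(W_{k,t}(\bm{z}_t)\), the same single application of Jensen's inequality to the \(w_{ij,t}^{(2)}\)-weighted combination of per-pair residuals, and the same centroid-based parallel-axis expansion with the cross term vanishing by definition of \(\bar{\bm{g}}_{k,t}\). Your explicit remarks on gating feasibility and the per-cluster decoupling of the minimization are slightly more careful than the paper's writeup but do not change the argument.
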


\begin{proof}[Proof of~\Cref{thm:two_step_merging_error}]
Fix a partition \(\Pi=\{S_1,\dots,S_K\}\) of the index pairs \((i,j) \in \{1,\dots,K\}^2\). We construct a candidate student model \(\bm{f}^{\mathrm{st}}\) by assigning a single affine expert \(\bm{A}_k \bm{z}_t + \bm{b}_k\) to each group \(S_k\). We set the student's weighting for expert \(k\) to be the sum of the teacher's weights in that group, i.e., \(W_{k,t}(\bm{z}_t) = \sum_{(i,j) \in S_k} w_{ij,t}^{(2)}(\bm{z}_t)\). Since the original weights sum to \(1\), these aggregated weights define a valid MoE.

Using the expansion of \(\mathcal{T}_2\) and grouping terms by \(S_k\), the pointwise difference between the teacher and this candidate student is
\begin{equation}
\mathcal{T}_2(\bm{z}_t) - \bm{f}^{\mathrm{st}}(\bm{z}_t, t) = \sum_{k=1}^K \sum_{(i,j) \in S_k} w_{ij,t}^{(2)}(\bm{z}_t) \bigl(\bm{g}_{ij,t}(\bm{z}_t) - (\bm{A}_k \bm{z}_t + \bm{b}_k)\bigr).
\end{equation}
We now bound the squared norm of this difference. Since \(x \mapsto \lVert x\rVert_2^2\) is a convex function and the weights \(w_{ij,t}^{(2)}(\bm{z}_t)\) sum to 1, we can apply Jensen's inequality directly to the summation over all \(K^2\) pairs:
\begin{multline}
\Bigl\lVert\sum_{k=1}^K \sum_{(i,j) \in S_k} w_{ij,t}^{(2)}(\bm{z}_t) \bigl(\bm{g}_{ij,t}(\bm{z}_t) - (\bm{A}_k \bm{z}_t + \bm{b}_k)\bigr)\Bigr\rVert_2^2
\\\le
\sum_{k=1}^K \sum_{(i,j) \in S_k} w_{ij,t}^{(2)}(\bm{z}_t) \bigl\lVert\bm{g}_{ij,t}(\bm{z}_t) - (\bm{A}_k \bm{z}_t + \bm{b}_k)\bigr\rVert_2^2.
\end{multline}
Taking the expectation over \(\bm{z}_t \sim p_t\) and taking the infimum over all student parameters yields the upper bound \(\varepsilon_{t-1{:}t}\).

To derive the decomposition of the inner objective, fix a group \(S_k\) and define the weighted centroid
\begin{equation}
\bar{\bm{g}}_{k,t}(\bm{z}_t)
=
\frac{1}{W_{k,t}(\bm{z}_t)}
\sum_{(i,j)\in S_k} w_{ij,t}^{(2)}(\bm{z}_t)\bm{g}_{ij,t}(\bm{z}_t),
\end{equation}
where \(W_{k,t}(\bm{z}_t)=\sum_{(i,j)\in S_k} w_{ij,t}^{(2)}(\bm{z}_t)\).
For any affine parameters \((\bm{A},\bm{b})\), expand the squared norm by introducing \(\bar{\bm{g}}_{k,t}(\bm{z}_t)\):
\begin{multline}
\lVert\bm{A}\bm{z}_t+\bm{b}-\bm{g}_{ij,t}(\bm{z}_t)\rVert_2^2
=
\lVert\bm{A}\bm{z}_t+\bm{b}-\bar{\bm{g}}_{k,t}(\bm{z}_t)\rVert_2^2
\\+
\lVert\bar{\bm{g}}_{k,t}(\bm{z}_t)-\bm{g}_{ij,t}(\bm{z}_t)\rVert_2^2
+
2\bigl(\bm{A}\bm{z}_t+\bm{b}-\bar{\bm{g}}_{k,t}(\bm{z}_t)\bigr)^\top
\bigl(\bar{\bm{g}}_{k,t}(\bm{z}_t)-\bm{g}_{ij,t}(\bm{z}_t)\bigr).
\end{multline}
Multiplying by \(w_{ij,t}^{(2)}(\bm{z}_t)\) and summing over \((i,j)\in S_k\), the cross-term vanishes because
\begin{equation}
\sum_{(i,j)\in S_k} w_{ij,t}^{(2)}(\bm{z}_t)\bigl(\bar{\bm{g}}_{k,t}(\bm{z}_t)-\bm{g}_{ij,t}(\bm{z}_t)\bigr)=\bm{0}
\end{equation}
by the definition of \(\bar{\bm{g}}_{k,t}(\bm{z}_t)\).
Moreover, \(\sum_{(i,j)\in S_k} w_{ij,t}^{(2)}(\bm{z}_t)\lVert\bm{A}\bm{z}_t+\bm{b}-\bar{\bm{g}}_{k,t}(\bm{z}_t)\rVert_2^2
=
W_{k,t}(\bm{z}_t)\lVert\bm{A}\bm{z}_t+\bm{b}-\bar{\bm{g}}_{k,t}(\bm{z}_t)\rVert_2^2\).
The remaining terms therefore give exactly the claimed bias-plus-variance decomposition.
\end{proof}

The decomposition in~\Cref{thm:two_step_merging_error} reveals the fundamental trade-off in distilling MoE models. The error~\eqref{eq:bias_variance_decomp} separates into an \emph{alignment error} and an \emph{intrinsic variance}. The first term measures how well a student component can match the average behavior (the centroid) of a group of teacher components. This error is reducible because a sufficiently expressive student component can learn to approximate this centroid perfectly. The second term, \(\lVert\bm{g}-\bar{\bm{g}}\rVert_2^2\), represents the structural diversity of the teacher components within a chosen group. This error is irreducible for a fixed partition because no single affine map can simultaneously match physically distinct components. In short, unlike the single Gaussian case discussed in~\Cref{sec:operator_merging_foundations_in_the_gaussian_regime}, where the composition of linear maps remains linear and yields zero approximation error, the multimodal nature of Gaussian mixtures forces a compression of \(K^2\) teacher components into \(K\) student components, inducing nonzero approximation error.

\subsection{Error propagation across merges}
\label{subsec:error_propagation_across_merges}

In the previous subsection we showed that in the Gaussian-mixture setting where both teacher and student are parameterized as \(K\)-component affine MoE operators, the student incurs an intrinsic approximation error when it is asked to match a composite teacher operator. This subsection explains how these approximation errors accumulate when the overall \(k\)-step composite teacher \(\mathcal{T}_k\) is approximated in multiple stages and then re-merged. Concretely, we consider the scenario where one first distill a student operator \(\bm{f}^{\mathrm{st}}_{t-k_1+1:t}(\bm z_t, t)\) to approximate \(\mathcal{T}_{k_1}(\bm z_t)\), then distill another operator \(\bm{f}^{\mathrm{st}}_{t-k+1:t-k_1}(\bm z_{t-k_1}, t-k_1)\) to approximate \(\mathcal{T}_{k_2}(\bm z_{t-k_1})\) where \(k=k_1+k_2\), and finally merge the two student operators into a single operator \(\bm{f}^{\mathrm{st}}_{t-k+1:t}(\bm z_t, t)\). The following~\Cref{thm:error_propagation_two_stage_merge} makes the error propagation explicit.

\begin{theorem}[Error propagation across a two-stage merge]
\label{thm:error_propagation_two_stage_merge}
Fix \(t\) and let \(\bm{z}_t\sim p_t\). Let \(k_1,k_2\in\mathbb{N}\) and set \(k=k_1+k_2\). Let \(\bm{f}^{\mathrm{st}}_{t-k_1+1:t}\) and \(\bm{f}^{\mathrm{st}}_{t-k+1:t-k_1}\) denote the two stagewise student operators, and let \(\bm{f}^{\mathrm{st}}_{t-k+1:t}\) denote the re-merged student operator that approximates their composition. Define the first-stage distillation loss
\begin{equation}
\varepsilon_{t-k_1+1{:}t}
\coloneqq
\mathbb{E}_{\bm{z}_t\sim p_t}\bigl[\lVert\bm{f}^{\mathrm{st}}_{t-k_1+1:t}(\bm{z}_t)-\mathcal{T}_{k_1}(\bm{z}_t)\rVert_2^2\bigr].
\end{equation}
Define the merge error
\begin{equation}
\varepsilon^{\mathrm{merge}}_{t-k+1{:}t}
\coloneqq
\mathbb{E}_{\bm{z}_t\sim p_t}\Bigl[\bigl\lVert\bm{f}^{\mathrm{st}}_{t-k+1:t}(\bm{z}_t)-\bigl(\bm{f}^{\mathrm{st}}_{t-k+1:t-k_1}\circ \bm{f}^{\mathrm{st}}_{t-k_1+1:t}\bigr)(\bm{z}_t)\bigr\rVert_2^2\Bigr].
\end{equation}
Assume the composite teacher operator \(\mathcal{T}_{k_2}\) is \(L^{\star}\)-Lipschitz, meaning
\begin{equation}
\lVert\mathcal{T}_{k_2}(\bm{u})-\mathcal{T}_{k_2}(\bm{v})\rVert_2 \le L^{\star}\lVert\bm{u}-\bm{v}\rVert_2
\end{equation}
for any \(\bm u, \bm v\in \mathbb{R}^d\). Then the final approximation error to the full composite teacher operator satisfies
\begin{multline}
\mathbb{E}_{\bm{z}_t\sim p_t}\bigl[\lVert\bm{f}^{\mathrm{st}}_{t-k+1:t}(\bm{z}_t)-\mathcal{T}_{k}(\bm{z}_t)\rVert_2^2\bigr]
\le
2\varepsilon^{\mathrm{merge}}_{t-k+1{:}t}
\\+
4\mathbb{E}_{\bm{z}_t\sim p_t}\Bigl[\bigl\lVert\bm{f}^{\mathrm{st}}_{t-k+1:t-k_1}\bigl(\bm{f}^{\mathrm{st}}_{t-k_1+1:t}(\bm{z}_t)\bigr)-\mathcal{T}_{k_2}\bigl(\bm{f}^{\mathrm{st}}_{t-k_1+1:t}(\bm{z}_t)\bigr)\bigr\rVert_2^2\Bigr]
\\+4(L^{\star})^2\varepsilon_{t-k_1+1{:}t}.
\end{multline}
\end{theorem}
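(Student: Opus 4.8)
The plan is to exploit the structural factorization $\mathcal{T}_k = \mathcal{T}_{k_2}\circ\mathcal{T}_{k_1}$, which holds directly from \Cref{def:composite_teacher_operator}: applying the first $k_1$ teacher steps to $\bm{z}_t$ (mapping it to $\bm{z}_{t-k_1}$) and then the next $k_2$ steps (mapping $\bm{z}_{t-k_1}$ to $\bm{z}_{t-k}$) reproduces the full $k$-step composite. Abbreviating $\bm{F}_1 \coloneqq \bm{f}^{\mathrm{st}}_{t-k_1+1:t}$, $\bm{F}_2 \coloneqq \bm{f}^{\mathrm{st}}_{t-k+1:t-k_1}$, $\bm{F}_{12} \coloneqq \bm{f}^{\mathrm{st}}_{t-k+1:t}$, and $\bm{y}\coloneqq\bm{F}_1(\bm{z}_t)$, I would insert two intermediate quantities — the exact composition $\bm{F}_2(\bm{y})$ and the second-stage teacher applied to the first student's output $\mathcal{T}_{k_2}(\bm{y})$ — to obtain the telescoping identity
\begin{multline}
\bm{F}_{12}(\bm{z}_t) - \mathcal{T}_k(\bm{z}_t)
= \bigl[\bm{F}_{12}(\bm{z}_t) - \bm{F}_2(\bm{y})\bigr]
\\
+ \bigl[\bm{F}_2(\bm{y}) - \mathcal{T}_{k_2}(\bm{y})\bigr]
+ \bigl[\mathcal{T}_{k_2}(\bm{y}) - \mathcal{T}_{k_2}(\mathcal{T}_{k_1}(\bm{z}_t))\bigr].
\end{multline}
The three bracketed pieces are, respectively, the merge error, the second-stage distillation residual evaluated at $\bm{y}$, and a Lipschitz-controlled term measuring how the first-stage error is amplified by $\mathcal{T}_{k_2}$.

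Next I would convert this additive decomposition into the squared-norm bound via the elementary inequality $\lVert \bm{a}+\bm{b}\rVert_2^2 \le 2\lVert \bm{a}\rVert_2^2 + 2\lVert \bm{b}\rVert_2^2$, applied with a deliberate grouping to produce the exact constants $2,4,4$. Specifically, I would first split off the merge term alone from the sum of the other two, giving a factor $2$ on the merge term and a factor $2$ on the combined remainder; then apply the same inequality a second time inside the remainder, yielding a factor $4$ on each of the second-stage residual and the Lipschitz term. This produces a pointwise bound of the form $\lVert\bm{F}_{12}(\bm{z}_t)-\mathcal{T}_k(\bm{z}_t)\rVert_2^2 \le 2\lVert E_1\rVert_2^2 + 4\lVert E_2\rVert_2^2 + 4\lVert E_3\rVert_2^2$ with $E_1,E_2,E_3$ the three brackets above.

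For the final step I would take expectations over $\bm{z}_t\sim p_t$ and identify each term. The merge term integrates to $\varepsilon^{\mathrm{merge}}_{t-k+1{:}t}$ by definition, and the second-stage residual becomes precisely the middle expectation in the statement — left unsimplified because it is evaluated under the pushforward of $p_t$ by $\bm{F}_1$ rather than under $p_{t-k_1}$, so it cannot be collapsed to the clean second-stage distillation loss. For the Lipschitz term I would apply the hypothesis $\lVert\mathcal{T}_{k_2}(\bm{u})-\mathcal{T}_{k_2}(\bm{v})\rVert_2 \le L^{\star}\lVert\bm{u}-\bm{v}\rVert_2$ with $\bm{u}=\bm{y}$ and $\bm{v}=\mathcal{T}_{k_1}(\bm{z}_t)$, so that $\lVert E_3\rVert_2^2 \le (L^{\star})^2\lVert\bm{F}_1(\bm{z}_t)-\mathcal{T}_{k_1}(\bm{z}_t)\rVert_2^2$, whose expectation is exactly $(L^{\star})^2\varepsilon_{t-k_1+1{:}t}$; this is the mechanism by which early errors get magnified by the teacher's Lipschitz constant. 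The argument is essentially free of hard calculation: the only real subtleties are bookkeeping ones — invoking the factorization $\mathcal{T}_k=\mathcal{T}_{k_2}\circ\mathcal{T}_{k_1}$ correctly, choosing the nested grouping so the constants land on $2,4,4$, and applying the Lipschitz bound to $\mathcal{T}_{k_2}$ only (never to the student operators, which are not assumed Lipschitz).
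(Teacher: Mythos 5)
Your proposal is correct and matches the paper's proof essentially step for step: the same three-term telescoping via inserting \(\bm{F}_2(\bm{y})\) and \(\mathcal{T}_{k_2}(\bm{y})\) with the factorization \(\mathcal{T}_k=\mathcal{T}_{k_2}\circ\mathcal{T}_{k_1}\), the same nested application of \(\lVert \bm a+\bm b\rVert_2^2\le 2\lVert \bm a\rVert_2^2+2\lVert \bm b\rVert_2^2\) yielding the constants \(2,4,4\), and the same final application of the Lipschitz hypothesis to bound the propagated first-stage error by \((L^\star)^2\varepsilon_{t-k_1+1{:}t}\). Your observation that the middle term must remain under the pushforward of \(p_t\) by the first-stage student, rather than under \(p_{t-k_1}\), is also exactly the distribution-shift point the paper makes.
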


\begin{proof}[Proof of~\Cref{thm:error_propagation_two_stage_merge}]
Define
\begin{equation}
\varepsilon^{\mathrm{final}}_{t-k+1{:}t}
\coloneqq
\mathbb{E}_{\bm{z}_t\sim p_t}\bigl[\lVert\bm{f}^{\mathrm{st}}_{t-k+1:t}(\bm{z}_t)-\mathcal{T}_{k}(\bm{z}_t)\rVert_2^2\bigr].
\end{equation}
Start from the pointwise decomposition
\begin{multline}
\bm{f}^{\mathrm{st}}_{t-k+1:t}(\bm{z}_t)-\mathcal{T}_{k}(\bm{z}_t)
=
\bigl(\bm{f}^{\mathrm{st}}_{t-k+1:t}(\bm{z}_t)-\bigl(\bm{f}^{\mathrm{st}}_{t-k+1:t-k_1}\circ \bm{f}^{\mathrm{st}}_{t-k_1+1:t}\bigr)(\bm{z}_t)\bigr)
\\+
\bigl(\bigl(\bm{f}^{\mathrm{st}}_{t-k+1:t-k_1}\circ \bm{f}^{\mathrm{st}}_{t-k_1+1:t}\bigr)(\bm{z}_t)-\mathcal{T}_{k}(\bm{z}_t)\bigr).
\end{multline}
Applying \(\lVert a+b\rVert_2^2\le 2\lVert a\rVert_2^2+2\lVert b\rVert_2^2\) and taking expectation under \(\bm{z}_t\sim p_t\) yields
\begin{equation}
\varepsilon^{\mathrm{final}}_{t-k+1{:}t}
\le
2\varepsilon^{\mathrm{merge}}_{t-k+1{:}t}
+
2\mathbb{E}_{\bm{z}_t\sim p_t}\Bigl[\bigl\lVert\bigl(\bm{f}^{\mathrm{st}}_{t-k+1:t-k_1}\circ \bm{f}^{\mathrm{st}}_{t-k_1+1:t}\bigr)(\bm{z}_t)-\mathcal{T}_{k}(\bm{z}_t)\bigr\rVert_2^2\Bigr].
\end{equation}

Using the teacher composition identity \(\mathcal{T}_{k}=\mathcal{T}_{k_2}\circ \mathcal{T}_{k_1}\), insert and subtract \(\mathcal{T}_{k_2}(\bm{f}^{\mathrm{st}}_{t-k_1+1:t}(\bm{z}_t))\) to obtain the pointwise equality
\begin{multline}
\bigl(\bm{f}^{\mathrm{st}}_{t-k+1:t-k_1}\circ \bm{f}^{\mathrm{st}}_{t-k_1+1:t}\bigr)(\bm{z}_t)-\mathcal{T}_{k}(\bm{z}_t)
=\\
\bigl(\bm{f}^{\mathrm{st}}_{t-k+1:t-k_1}\bigl(\bm{f}^{\mathrm{st}}_{t-k_1+1:t}(\bm{z}_t)\bigr)-\mathcal{T}_{k_2}\bigl(\bm{f}^{\mathrm{st}}_{t-k_1+1:t}(\bm{z}_t)\bigr)\bigr)
\\+
\Bigl(\mathcal{T}_{k_2}\bigl(\bm{f}^{\mathrm{st}}_{t-k_1+1:t}(\bm{z}_t)\bigr)-\mathcal{T}_{k_2}\bigl(\mathcal{T}_{k_1}(\bm{z}_t)\bigr)\Bigr).
\end{multline}
Applying \(\lVert a+b\rVert_2^2\le 2\lVert a\rVert_2^2+2\lVert b\rVert_2^2\), taking expectation under \(p_t\), and combining with the previous inequality yields
\begin{multline}
\varepsilon^{\mathrm{final}}_{t-k+1{:}t}
\le
2\varepsilon^{\mathrm{merge}}_{t-k+1{:}t}
\\+
4\mathbb{E}_{\bm{z}_t\sim p_t}\Bigl[\bigl\lVert\bm{f}^{\mathrm{st}}_{t-k+1:t-k_1}\bigl(\bm{f}^{\mathrm{st}}_{t-k_1+1:t}(\bm{z}_t)\bigr)-\mathcal{T}_{k_2}\bigl(\bm{f}^{\mathrm{st}}_{t-k_1+1:t}(\bm{z}_t)\bigr)\bigr\rVert_2^2\Bigr]
\\+
4\mathbb{E}_{\bm{z}_t\sim p_t}\Bigl[\bigl\lVert\mathcal{T}_{k_2}\bigl(\bm{f}^{\mathrm{st}}_{t-k_1+1:t}(\bm{z}_t)\bigr)-\mathcal{T}_{k_2}\bigl(\mathcal{T}_{k_1}(\bm{z}_t)\bigr)\bigr\rVert_2^2\Bigr].
\end{multline}

By the Lipschitz property of \(\mathcal{T}_{k_2}\),
\begin{multline}
\bigl\lVert\mathcal{T}_{k_2}\bigl(\bm{f}^{\mathrm{st}}_{t-k_1+1:t}(\bm{z}_t)\bigr)-\mathcal{T}_{k_2}\bigl(\mathcal{T}_{k_1}(\bm{z}_t)\bigr)\bigr\rVert_2
\\\le
L^{\star}\bigl\lVert\bm{f}^{\mathrm{st}}_{t-k_1+1:t}(\bm{z}_t)-\mathcal{T}_{k_1}(\bm{z}_t)\bigr\rVert_2.
\end{multline}
Squaring and taking expectation under \(p_t\) yields
\begin{equation}
\mathbb{E}_{\bm{z}_t\sim p_t}\Bigl[\bigl\lVert\mathcal{T}_{k_2}\bigl(\bm{f}^{\mathrm{st}}_{t-k_1+1:t}(\bm{z}_t)\bigr)-\mathcal{T}_{k_2}\bigl(\mathcal{T}_{k_1}(\bm{z}_t)\bigr)\bigr\rVert_2^2\Bigr]
\le
\bigl(L^{\star}\bigr)^2\varepsilon_{t-k_1+1{:}t}.
\end{equation}
Substituting this bound into the preceding inequality gives the claimed result.
\end{proof}

\begin{figure*}[!ht]
\centering
\includegraphics[width=1\linewidth]{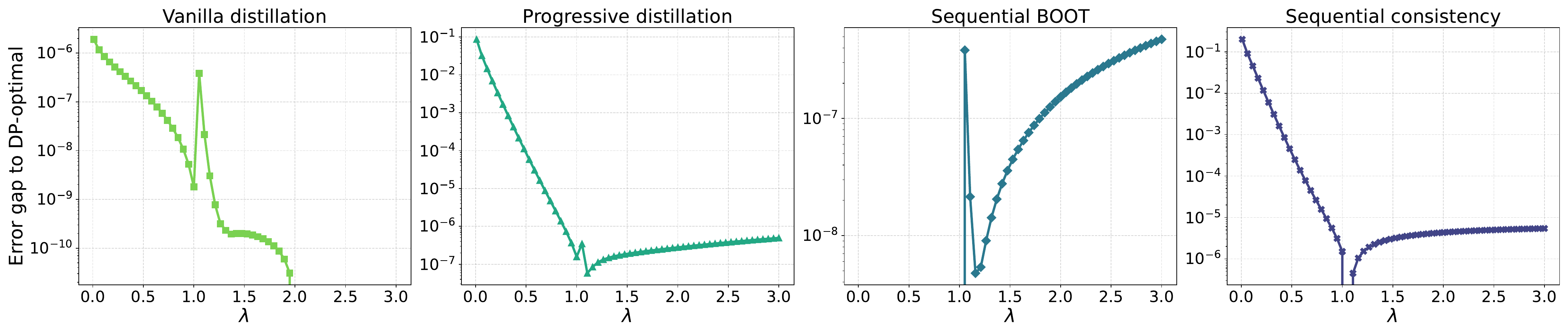}\\
\includegraphics[width=0.48\linewidth]{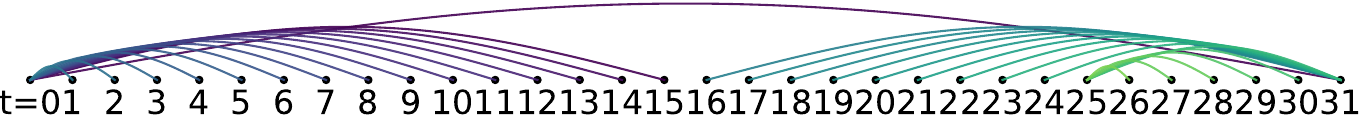}
\hfill
\includegraphics[width=0.48\linewidth]{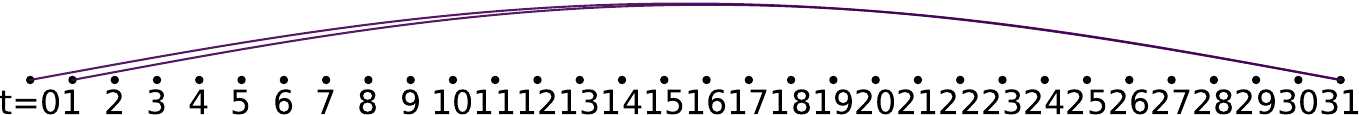}
\caption{\textbf{First row}: Error gap between four canonical strategies and the DP-optimal solution as a function of \(\lambda\), with \(T=32\) and \(s=6.4\). As predicted by~\Cref{thm:vanilla_distillation_optimality} and~\Cref{thm:sequential_boot_optimality}, sequential BOOT achieves optimality when \(\lambda \leq 1\), while vanilla trajectory distillation becomes optimal for sufficiently large \(\lambda>2\). \textbf{Second row}: Visualization of the DP-optimal merge plans at \(\lambda=1.08\) (left) and \(\lambda=2\) (right). Each arc represents a merge operation, with lighter colors indicating earlier merges and darker colors corresponding to later merges. For additional results under varying \(T\), \(s\), and covariance matrices \(\bm{\Lambda}\), please refer to~\ref{app:additional_experimental_results_on_pareto_dynamic_programming}.}
\label{fig:phase_transition}
\end{figure*}

\Cref{thm:error_propagation_two_stage_merge} separates three sources of error when approximating \(\mathcal{T}_{k_1+k_2}\) via stagewise distillation followed by a merge. The first term \(\varepsilon^{\mathrm{merge}}_{t-k+1{:}t}\) captures the loss incurred when the composed student \(\bm{f}^{\mathrm{st}}_{t-k+1:t-k_1}\circ \bm{f}^{\mathrm{st}}_{t-k_1+1:t}\), which is effectively \(K^2\)-component in the affine MoE class, is re-approximated by a single \(K\)-component operator \(\bm{f}^{\mathrm{st}}_{t-k+1:t}\). This is precisely the bias-variance trade-off governed by~\Cref{thm:two_step_merging_error}. The second term captures a ``distribution shift''. The second-stage student model is trained on the noisy data distribution \(p_{t-k_1}\), but in the composite pipeline it receives perturbed inputs from the imperfect first-stage student. This discrepancy means that even a well-trained student may fail if it cannot generalize to the shifted input distribution induced by upstream errors. Finally, the third term reveals a dynamics-induced amplification. Early approximation errors \(\varepsilon_{t-k_1+1{:}t}\) do not merely persist but are scaled by the teacher model's Lipschitz constant \((L^{\star})^2\). This implies that if the teacher operator is expansive (\(L^{\star} > 1\)), small errors in the early stages of the reverse process can grow exponentially.

\section{Experiments}
\label{sec:experiments}

This section validates our theoretical framework through numerical analysis and empirical experimentation. In~\Cref{subsec:pareto_dynamic_programming_results}, we employ Pareto dynamic programming to numerically identify optimal merging strategies, confirming a phase transition governed by data variance. We verify these regimes and analyze the mechanics of error propagation using synthetic Gaussian and mixture datasets in~\Cref{subsec:experiments_on_synthetic_datasets}. Finally, in~\Cref{subsec:experiments_on_real_dataset}, we apply an MSSIMVAE~\citep{snell2017learning} to the CelebA~\citep{liu2015deep} dataset, demonstrating that the latent representations occupy the low-variance regime where sequential BOOT proves optimal.

\subsection{Pareto dynamic programming results}
\label{subsec:pareto_dynamic_programming_results}

We implement the Pareto dynamic programming algorithm described in~\Cref{alg:pareto_dp_merge}. For visualization purposes, we set the total number of time steps to \(T=32\) and the student's training time per merge step to \(s=6.4\). Throughout our experiments, we adopt the cosine noise schedule, defined by \(\alpha_t = \cos((t/T)\cdot\pi/2)\) and \(\sigma_t = \sin((t/T)\cdot\pi/2)\), which is commonly used in prior works~\citep{nichol2021improved}. The results, showing the error gap between four canonical strategies and the DP-optimal solution, are presented in the first row of~\Cref{fig:phase_transition}. As predicted by~\Cref{thm:vanilla_distillation_optimality,thm:sequential_boot_optimality}, when \(\lambda \leq 1\), sequential BOOT achieves optimality (with zero error gap). As \(\lambda\) increases beyond \(2\), vanilla trajectory distillation becomes the optimal strategy. In the transitional regime between these extremes, the optimal merge plan exhibits a hybrid structure. In particular, parts of the merge path may align with sequential consistency (especially when \(\lambda\) is close to \(1\)), but the overall pattern deviates from any previously known strategy. We visualize two representative cases at \(\lambda=1.08\) and \(\lambda=2\) in the second row of~\Cref{fig:phase_transition}, where each arc denotes a merge operation and the arc color encodes the merge order. Lighter colors indicate earlier merges, while darker colors correspond to later ones. For ablation studies under varying \(T\), \(s\), and covariance matrices \(\bm{\Lambda}\), please refer to~\ref{app:additional_experimental_results_on_pareto_dynamic_programming}.

\subsection{Experiments on synthetic datasets}
\label{subsec:experiments_on_synthetic_datasets}

\paragraph{Gaussian dataset}
To validate our theoretical findings in~\Cref{subsec:phase_transitions_in_the_optimal_strategy}, we conduct controlled simulations on a \(1\)-dimensional Gaussian dataset \(\mathcal{N}(0, \lambda)\). We implement the student model as a time-dependent diagonal affine operator, defined as \(\bm{f}^{\mathrm{st}}(\bm{z}_t, t) = \mathrm{Diag}(\bm{a}_t)\bm{z}_t\). To capture the temporal dynamics, the diagonal coefficients \(\bm{a}_t\) are predicted from a \(64\)-dimensional sinusoidal time embedding \(\mathrm{emb}(t/T)\) via a learnable linear projection. This setup mimics the time-conditioning mechanism of practical diffusion models while remaining amenable to analysis.

We train the student models using Stochastic Gradient Descent (SGD) with a learning rate of \(10^{-3}\) and a batch size of \(2{,}560\) for \(100\) epochs. To ensure a fair comparison across strategies, we initialize the student weights using a closed-form least-squares projection of the corresponding teacher operator. Performance is evaluated using the mean signed error between the distilled student operator and the surrogate composite teacher \(\widetilde{\mathcal{T}}_T\), averaged over \(10\) independent trials.

The results for \(T=32\) are presented in~\Cref{tab:synthetic_T_32}. Consistent with the phase transition predicted by~\Cref{thm:vanilla_distillation_optimality,thm:sequential_boot_optimality}, we observe that sequential BOOT achieves the lowest error in strictly low-variance regimes (\(\lambda < 1\)). At the boundary condition (\(\lambda=1\)), the theoretical gap closes and the methods perform comparably within optimization noise, while vanilla distillation dominates in high-variance regimes (\(\lambda > 1\)). Extended results for different \(T\) are provided in~\ref{app:additional_results_on_synthetic_datasets}.

\begin{table*}[htb]
\centering
\caption{Simulation results of four canonical trajectory distillation methods on synthetic datasets with varying covariance parameter \(\lambda\). Each entry reports the mean \emph{signed} error between the student operator and the surrogate composite operator, averaged over \(10\) independent trials. Error bars represent standard deviation across trials. The best result for each setting is underlined. Results are consistent with our theoretical predictions.}
\label{tab:synthetic_T_32}
\setlength{\tabcolsep}{4pt}
\smallskip
\scriptsize
\begin{tabular}{lcccc}
\toprule
\small \(\lambda\) & \small Vanilla & \small Progressive & \small BOOT & \small Consistency \\ 
\midrule
0.20 & \(7.3 \times 10^{-4} \pm 1.3 \times 10^{-4}\) & \(3.3 \times 10^{-2} \pm 5.2 \times 10^{-4}\) & \(\underline{4.7 \times 10^{-4}} \pm 7.1 \times 10^{-5}\) & \(1.3 \times 10^{-1} \pm 7.8 \times 10^{-3}\) \\ 
0.50 & \(4.5 \times 10^{-4} \pm 9.4 \times 10^{-5}\) & \(4.8 \times 10^{-3} \pm 1.1 \times 10^{-4}\) & \(\underline{3.9 \times 10^{-4}} \pm 6.1 \times 10^{-5}\) & \(2.6 \times 10^{-2} \pm 2.3 \times 10^{-3}\) \\ 
1.00 & \(\underline{4.7 \times 10^{-5}} \pm 6.4 \times 10^{-6}\) & \(1.3 \times 10^{-4} \pm 9.1 \times 10^{-6}\) & \(5.3 \times 10^{-5} \pm 7.0 \times 10^{-6}\) & \(-1.2 \times 10^{-4} \pm 1.1 \times 10^{-4}\) \\ 
1.02 & \(-2.8 \times 10^{-2} \pm 5.3 \times 10^{-6}\) & \(\underline{-2.8 \times 10^{-2}} \pm 5.5 \times 10^{-6}\) & \(-2.8 \times 10^{-2} \pm 3.8 \times 10^{-6}\) & \(-2.9 \times 10^{-2} \pm 1.1 \times 10^{-4}\) \\ 
2.00 & \(\underline{-2.1 \times 10^{-3}} \pm 6.2 \times 10^{-5}\) & \(-2.4 \times 10^{-3} \pm 7.1 \times 10^{-5}\) & \(-2.2 \times 10^{-3} \pm 1.1 \times 10^{-4}\) & \(-7.1 \times 10^{-3} \pm 6.9 \times 10^{-4}\) \\ 
5.00 & \(\underline{-4.1 \times 10^{-3}} \pm 3.7 \times 10^{-4}\) & \(-4.7 \times 10^{-3} \pm 1.7 \times 10^{-4}\) & \(-4.9 \times 10^{-3} \pm 4.8 \times 10^{-4}\) & \(-1.1 \times 10^{-2} \pm 8.3 \times 10^{-4}\) \\ 
\bottomrule
\end{tabular}
\end{table*}

\paragraph{Gaussian mixture dataset}
To validate our theoretical findings in~\Cref{sec:approximation_error_and_error_propagation_in_the_gaussian_mixture_regime}, we conduct controlled simulations on a \(2\)-dimensional Gaussian mixture dataset. The real distribution \(p_0\) is an isotropic mixture of \(K=8\) Gaussian modes with means \(\{\bm{\mu}_k\}_{k=1}^K\) uniformly spaced on a circle of radius \(R=5.0\) and covariance \(\bm{\Sigma}_k = 0.3^2 \bm{I}\). We implement the teacher model analytically using the closed-form optimal denoising estimator derived in~\Cref{prop:optimal_mixture_denoising_estimator}. And we implement the student model as a time-conditioned affine MoE, where the weighting function and the affine parameters are predicted from the state and sinusoidal time embedding. We train the student models using Adam~\citep{kingma2015adam} with a learning rate of \(10^{-3}\) and a batch size of \(2{,}560\). To ensure a fair comparison across strategies, we initialize the student by training it to match the single-step teacher operator for \(10{,}000\) epochs.

We first empirically evaluate the approximation error by fixing the student model to \(K=8\) experts and varying the number of denoising steps \(T \in \{2^0, \dotsc, 2^9\}\). In this experiment, we use vanilla distillation to merge the entire teacher trajectory into a single student step. As \(T\) increases, the number of components in the composite teacher operator \(\mathcal{T}_T(\bm{z}_T)\) grows according to the expansion logic in~\Cref{lem:two_step_moe_expansion}, making the distillation target increasingly complex. We visualize the generated samples for \(T \in \{8, 32, 128, 512\}\) in~\Cref{fig:gmm_vary_T} and report the mean distillation loss with standard deviations across 10 independent trials in~\Cref{fig:gmm_error_plot}. Consistent with the predictions of~\Cref{thm:two_step_merging_error}, the approximation error becomes non-zero for any \(T > 1\) and grows monotonically with the trajectory length. Qualitatively, the generated samples cannot fully match the real distribution and interpolate between two adjacent modes.

\begin{figure}[htb]
\centering
\includegraphics[width=1\linewidth]{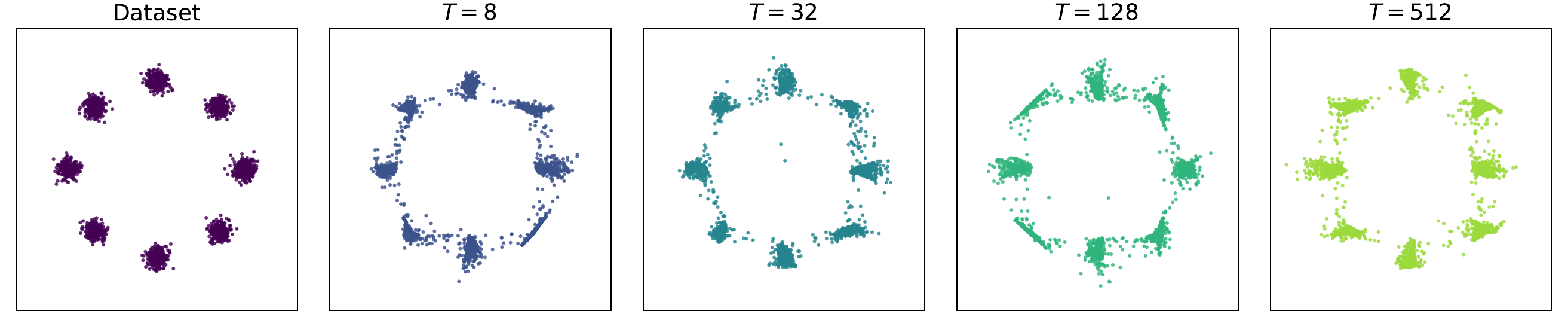}
\caption{Generated samples under different \(T\). We fix the student to \(K=8\) experts and distill a teacher trajectory of length \(T\) into a single student step. The leftmost panel shows real dataset samples. The remaining panels show student samples for \(T \in \{8,32,128,512\}\). Samples interpolate between adjacent modes, indicating a nonzero approximation error predicted by~\Cref{thm:two_step_merging_error}.}
\label{fig:gmm_vary_T}
\end{figure}

\begin{figure}[htb]
\centering
\includegraphics[width=0.8\linewidth]{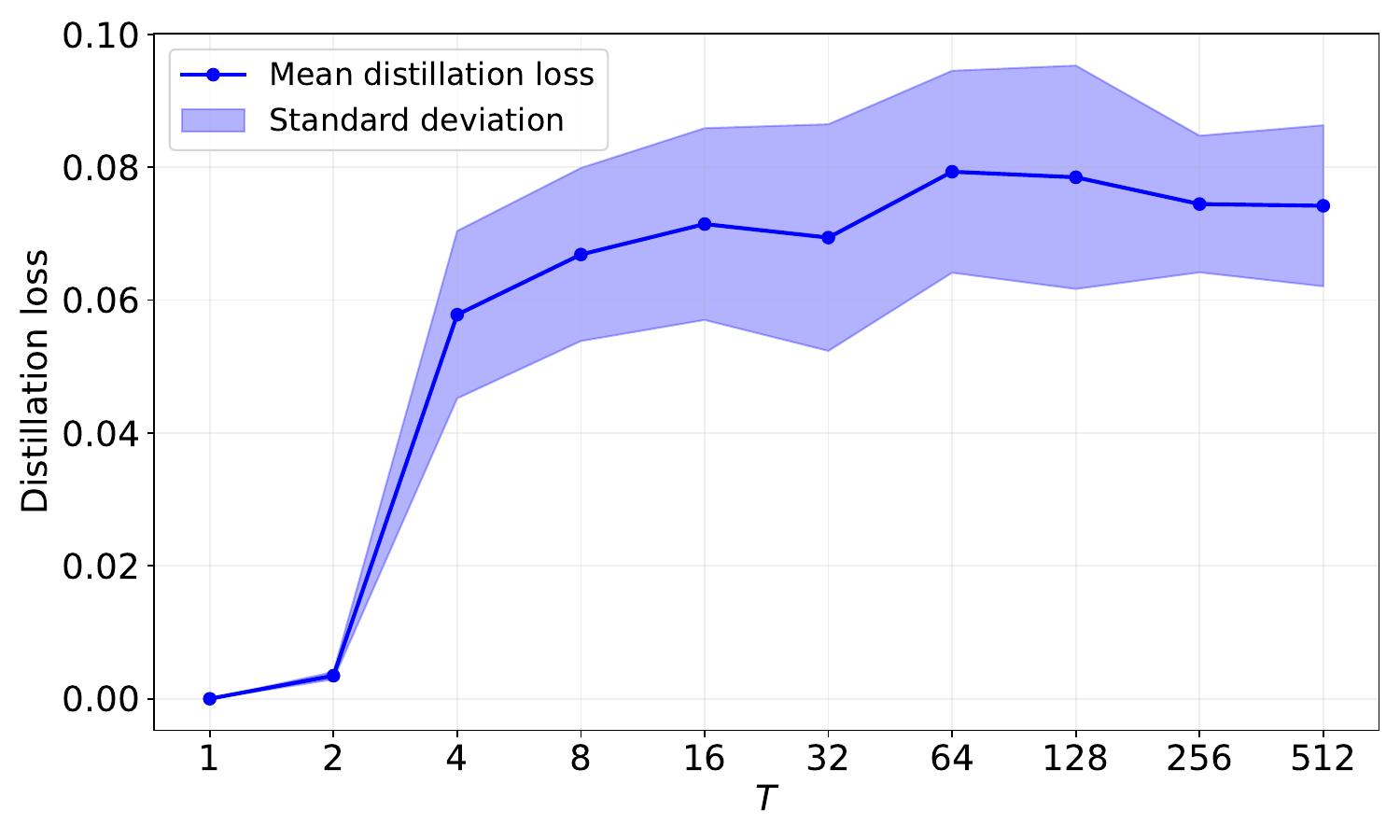}
\caption{Distillation loss versus \(T\). Mean vanilla-distillation loss \(\pm\) one standard deviation over \(10\) trials when merging a teacher trajectory of length \(T \in \{2^0,\dotsc,2^9\}\) into one student step with \(K=8\) experts. The loss becomes non-zero for \(T>1\) and increases monotonically with \(T\).}
\label{fig:gmm_error_plot}
\end{figure}

We then validate the propagation of approximation error by implementing canonical distillation strategies while fixing \(T=32\) and \(K=8\). As shown in~\Cref{fig:gmm_distillation}, the teacher model closely matches the data distribution, whereas all distilled students exhibit varying degrees of interpolation between adjacent modes. This qualitative degradation is consistent across strategies, indicating that approximation errors introduced by intermediate merges can accumulate and propagate to the final student operator. These observations support the mechanism formalized in~\Cref{thm:error_propagation_two_stage_merge}. Further details of the experimental settings are provided in~\ref{app:additional_results_on_synthetic_datasets}.

\begin{figure}[htb]
\centering
\includegraphics[width=1\linewidth]{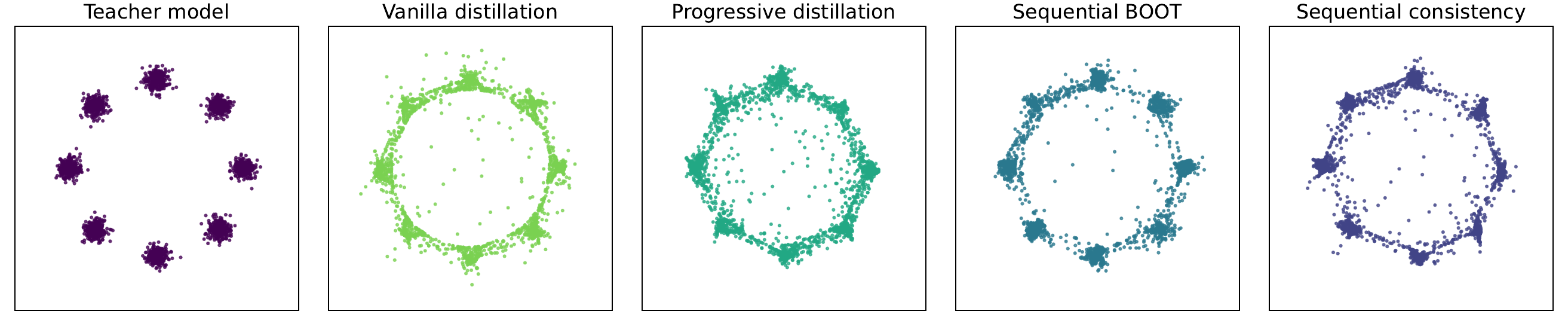}
\caption{Generated samples from the teacher and distilled students under canonical strategies. We fix \(T=32\) and \(K=8\). The teacher matches the dataset modes, whereas all strategies produce samples that interpolate between adjacent modes with varying strength, consistent with error propagation across merges (\Cref{thm:error_propagation_two_stage_merge}).}
\label{fig:gmm_distillation}
\end{figure}

\subsection{Experiments on real dataset}
\label{subsec:experiments_on_real_dataset}

\paragraph{\rev{Approximate Gaussian structure in the latent space of CelebA}} To validate our findings in a realistic setting, we perform trajectory distillation in the latent space of a Multiscale Structural Similarity VAE (MSSIMVAE)~\citep{snell2017learning} pretrained on CelebA~\citep{liu2015deep}. We extract latent codes from the encoder and compute their empirical covariance. \Cref{fig:latent_cov_matrix} shows that the covariance is close to diagonal and that most per-dimension variances satisfy \(\lambda_i < 1\). This matches the low-variance regime studied in~\Cref{thm:sequential_boot_optimality}, in which sequential BOOT is optimal. See~\ref{app:additional_results_on_real_dataset} for the full experimental setup and ablations across different numbers of training epochs.

\begin{figure}[htb]
\centering
\includegraphics[width=0.48\linewidth]{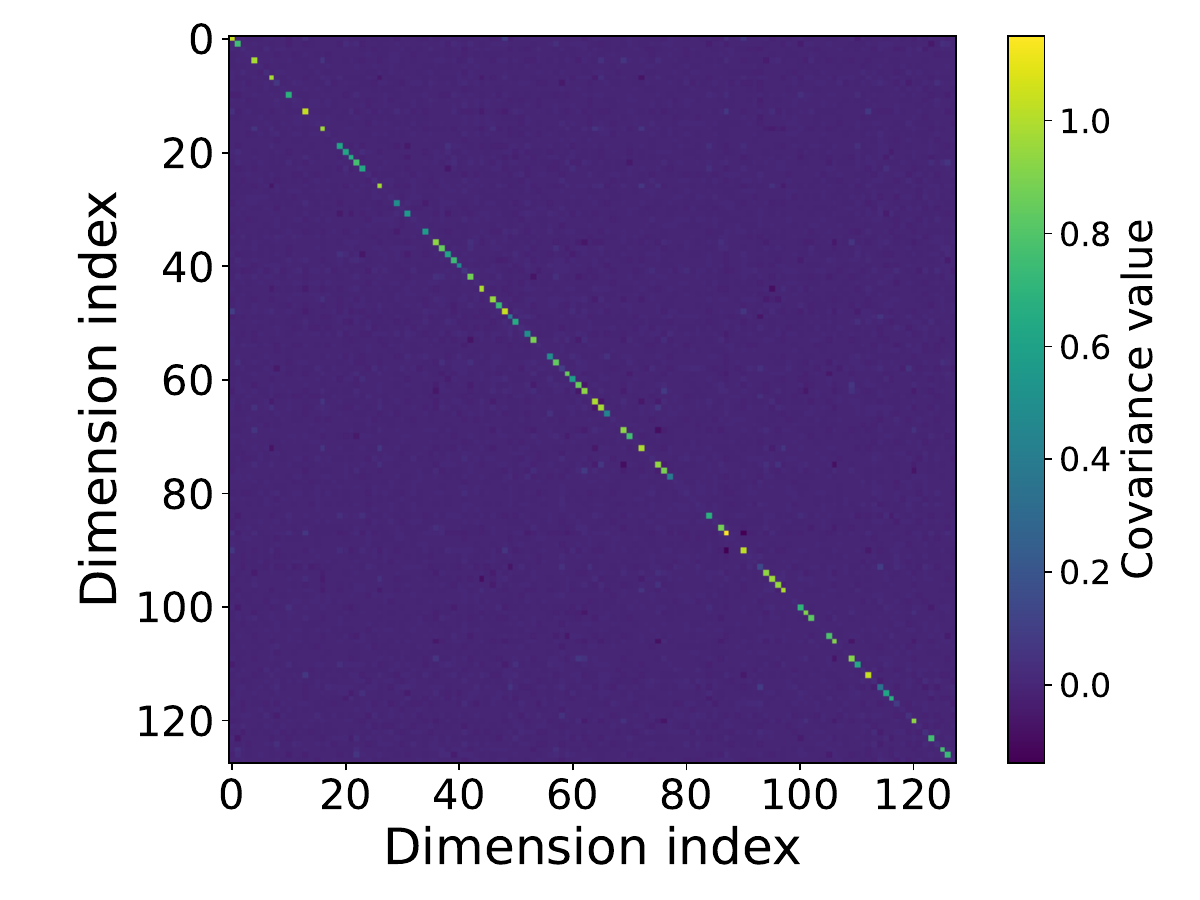}
\includegraphics[width=0.48\linewidth]{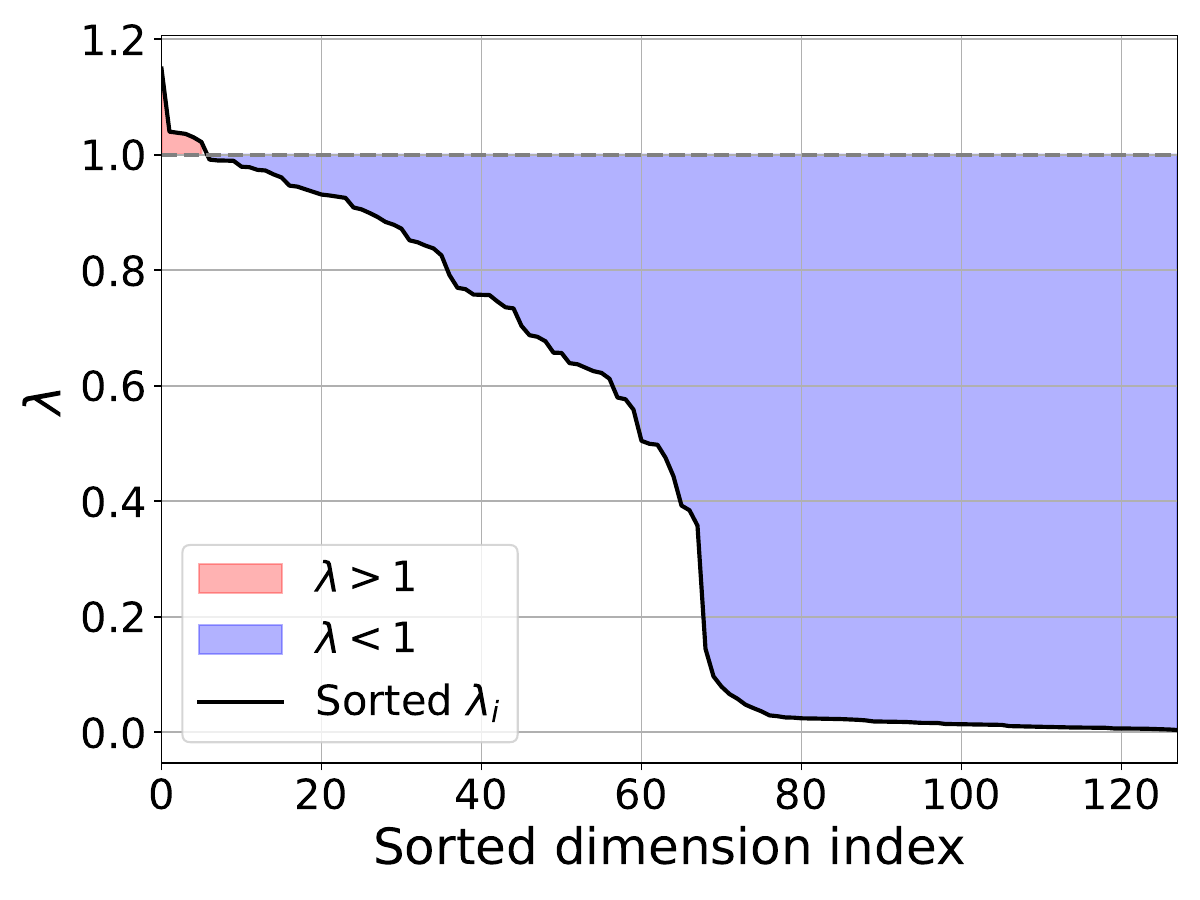}
\caption{Covariance structure of CelebA latent codes obtained using a pretrained MSSIMVAE. \textbf{Left}: Heatmap of the full empirical covariance matrix. \textbf{Right}: Sorted diagonal entries (variances) of the covariance matrix. While the covariance matrix is approximately diagonal, the variances are not all equal to \(1\), indicating that the latent distribution deviates from the standard Gaussian prior. Most diagonal entries are below \(1\), and Pareto dynamic programming analysis confirms that the theoretically optimal merging strategy in this setting is sequential BOOT.}
\label{fig:latent_cov_matrix}
\end{figure}

We compare four distillation strategies by sampling latent codes from each distilled student and decoding them into image space. We measure student quality using the pixel-wise \(L_2\) distance to the output of the surrogate teacher for the same input noise. As shown in~\Cref{fig:real_decoded_diff_epoch10000}, sequential BOOT achieves the smallest reconstruction error and most closely matches the teacher across samples. \rev{As further evidence beyond pixel-wise \(L_2\), \Cref{tab:celeba_fid_strategies} shows that sequential BOOT also achieves the lowest Fr\'echet Inception Distance (FID)~\citep{heusel2017gans} among the four strategies, indicating that its advantage is reflected in perceptual image quality as well.}

\begin{figure}[htb]
\centering
\includegraphics[width=0.9\linewidth]{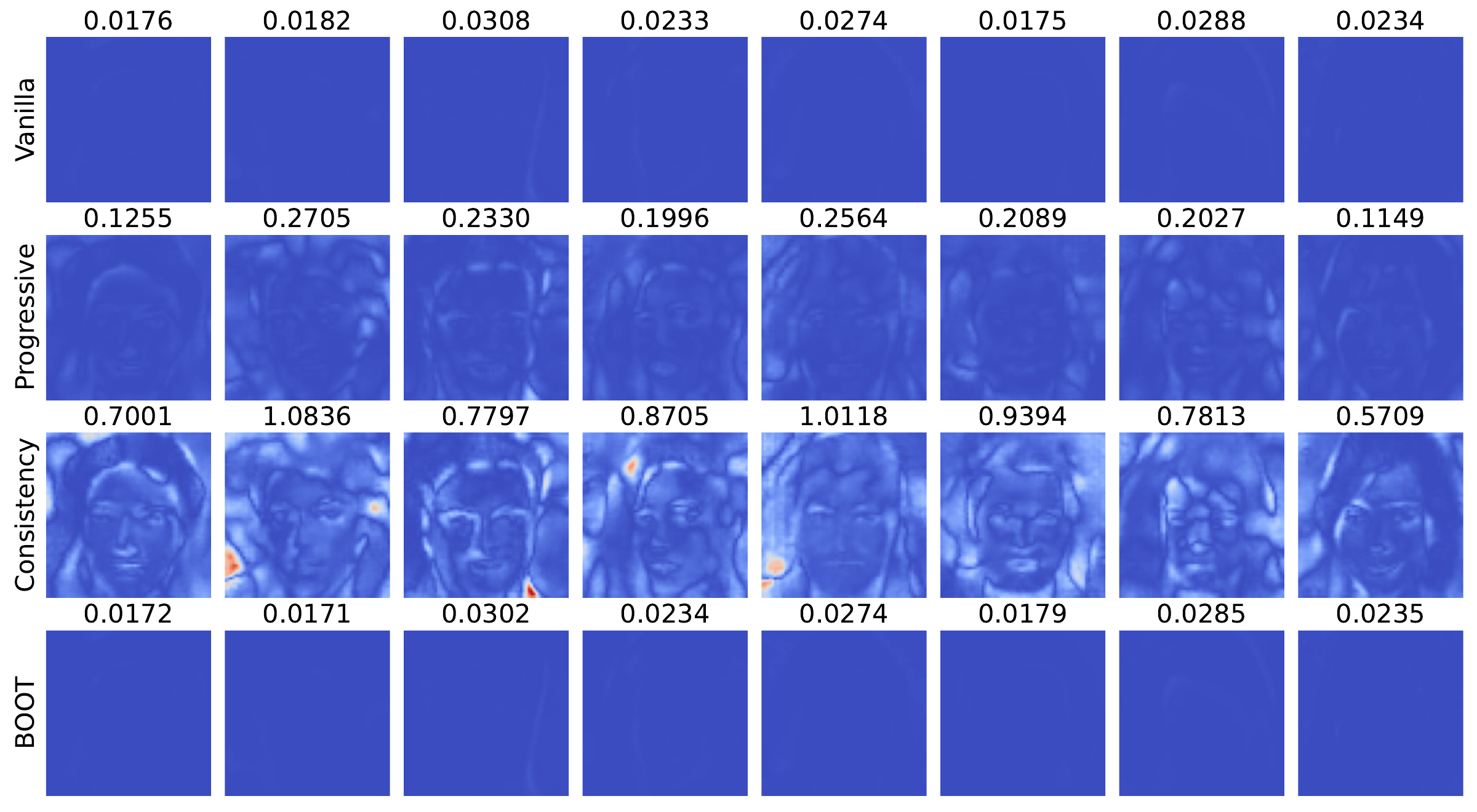}
\caption{Decoded error maps for different distillation strategies at epoch \(10{,}000\). Each row corresponds to a method. For each sample, we show the heatmap of the absolute pixel-wise difference to the surrogate teacher output. Larger values indicate greater deviation. The number above each sample reports the pixel-wise \(L_2\) distance. Sequential BOOT produces the lowest errors and the closest match to the teacher model.}
\label{fig:real_decoded_diff_epoch10000}
\end{figure}

\begin{table}[htb]
\centering
\small
\caption{\rev{Fr\'echet Inception Distance (FID)~\citep{heusel2017gans} on decoded images for the CelebA experiment, computed against the decoded outputs of the surrogate teacher. Lower is better. BOOT achieves the best FID, which supports that the theoretically favorable strategy also leads to improved perceptual quality beyond pixel-wise \(L_2\) matching.}}
\label{tab:celeba_fid_strategies}
\smallskip
\rev{\begin{tabular}{lcccc}
\toprule
Method & Vanilla & Progressive & Consistency & BOOT \\
\midrule
FID \(\downarrow\) & \(0.6382\) & \(1.2248\) & \(1.3419\) & \(\textbf{0.5776}\) \\
\bottomrule
\end{tabular}}
\end{table}

\paragraph{\rev{Approximate Gaussian mixture structure in CIFAR-10}}
\rev{We then perform trajectory distillation in the pixel space of a DDIM~\citep{song2021denoising} model pretrained on CIFAR-10~\citep{krizhevsky2009learning}. Unlike the approximately Gaussian latent distribution observed for CelebA, CIFAR-10 in pixel space exhibits a strongly multimodal structure, making it a natural testbed for the Gaussian-mixture regime studied in~\Cref{sec:approximation_error_and_error_propagation_in_the_gaussian_mixture_regime}.}

\rev{We instantiate all four canonical trajectory-distillation strategies using the same DDIM-wrapped \(x_0\)-predicting U-Net student architecture, always initialized from the pretrained teacher for a fair comparison. Vanilla distillation trains a one-shot map from \( \bm{x}_T \) to \( \bm{x}_0 \), progressive distillation recursively merges adjacent pairs of teacher steps, sequential BOOT keeps the input fixed at \( \bm{x}_T \) and gradually extends the target toward cleaner states, and sequential consistency progressively extends the direct map from \( \bm{x}_t \) to \( \bm{x}_0 \). In all cases, supervision is provided by deterministic teacher compositions, and each stage or merge is optimized with a mean squared error loss for \(10{,}000\) updates using AdamW~\citep{loshchilov2017decoupled}. We compare the distilled students by visual inspection and also report Fr\'echet Inception Distance (FID)~\citep{heusel2017gans} computed both against teacher-generated outputs and against the CIFAR-10 dataset. See~\ref{app:additional_results_on_real_dataset} for the detailed experimental setup.}

\begin{figure}[tb]
\centering
\includegraphics[width=0.8\linewidth]{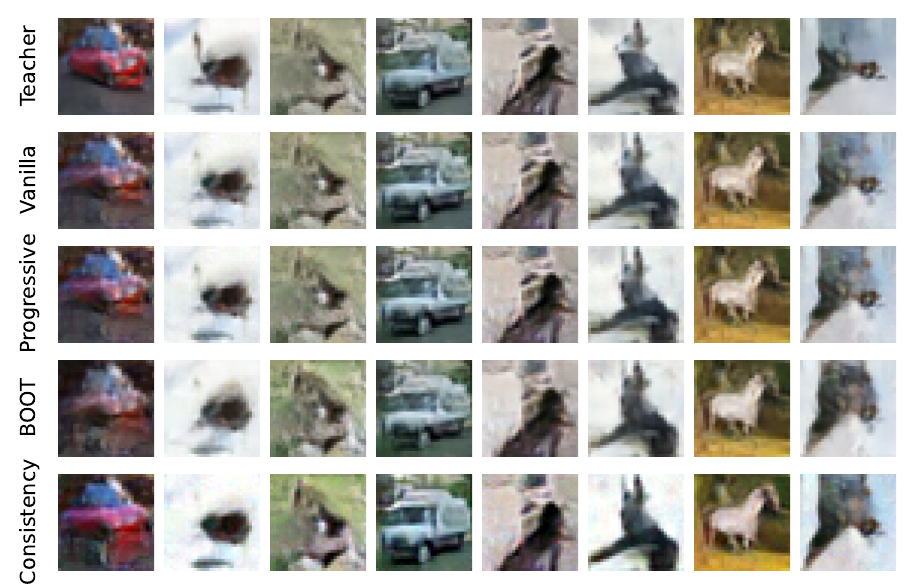}
\caption{\rev{Generated CIFAR-10 samples from the pretrained teacher and the four distilled students. All methods use the same DDIM-wrapped \(x_0\)-predicting U-Net architecture and are initialized from the teacher, but the distilled students exhibit visible deviations from the teacher outputs. This qualitative gap is consistent with the approximation error and error propagation mechanisms analyzed in~\Cref{thm:two_step_merging_error,thm:error_propagation_two_stage_merge}. Among the distilled methods, progressive distillation produces samples that are visually closest to the teacher.}}
\label{fig:comparison_cifar10}
\end{figure}

\begin{table}[b]
\centering
\small
\caption{\rev{Fr\'echet Inception Distance (FID)~\citep{heusel2017gans} for the CIFAR-10 experiment. The first row reports FID computed against teacher-generated samples, measuring how closely each distilled model matches the teacher distribution. The second row reports FID computed against the CIFAR-10 dataset, measuring realism with respect to the data distribution. Lower is better in both cases. Progressive distillation achieves the best FID among the distilled students, while the teacher itself attains the best FID against the dataset.}}
\label{tab:cifar10_fid_strategies}
\smallskip
\rev{\begin{tabular}{ccccc}
\toprule
Teacher & Vanilla & Progressive & BOOT & Consistency \\
\midrule
-- & \(20.7735\) & \(\textbf{17.6398}\) & \(24.6456\) & \(44.1261\) \\
\(\textbf{26.0862}\) & \(44.8568\) & \(39.7271\) & \(63.0115\) & \(55.5787\) \\
\bottomrule
\end{tabular}}
\end{table}

\rev{The results are consistent with the Gaussian-mixture analysis in~\Cref{sec:approximation_error_and_error_propagation_in_the_gaussian_mixture_regime}. In particular, \Cref{thm:two_step_merging_error} shows that when a multi-step composite teacher is compressed into a student of fixed capacity, a nonzero approximation error can arise from the mismatch between the complexity of the composite operator and that of the student. Moreover, \Cref{thm:error_propagation_two_stage_merge} shows that such local errors may further accumulate across successive merges. Consequently, unlike the single-Gaussian setting, trajectory distillation in the strongly multimodal pixel space of CIFAR-10 can incur irreducible distortion even when each merge is trained adequately. This behavior is visible in~\Cref{fig:comparison_cifar10}, where all distilled students differ qualitatively from the teacher-generated samples, and is also reflected quantitatively in~\Cref{tab:cifar10_fid_strategies}. Among the four distilled students, progressive distillation achieves the best FID both against the teacher outputs (\(17.6398\)) and against the CIFAR-10 dataset (\(39.7271\)), while the teacher itself attains the best dataset FID overall (\(26.0862\)). This empirical ranking does not contradict our theory. Our Gaussian-mixture analysis establishes the presence of approximation error and error propagation, but it does not claim that any one of the canonical strategies must be globally optimal for general multimodal data. In this experiment, progressive distillation appears to provide the most favorable trade-off, likely because pairwise merging limits the complexity growth of each target operator and mitigates the accumulation of errors across long horizons.}

\section{Conclusion}

In this work, we studied diffusion trajectory distillation through the lens of operator merging. In the linear Gaussian regime, we showed that merging can induce signal shrinkage under finite training time and proved a variance-driven phase transition in optimal merge plans, favoring sequential BOOT in low-variance regimes and vanilla trajectory distillation in high-variance regimes. Building on this characterization, we introduced a Pareto dynamic programming algorithm that computes globally optimal merge plans in mixed-variance, high-dimensional settings. We further analyzed the Gaussian-mixture setting by modeling the teacher as an affine mixture-of-experts operator, showing that long-step compositions increase distillation-target complexity, leading to intrinsic approximation error for a fixed-capacity student and enabling error propagation across merges. Experiments on synthetic datasets and on CelebA latent diffusion using a pretrained MSSIMVAE corroborate these theoretical findings.

Several directions follow naturally from our analysis. On the theoretical side, the Gaussian-mixture setting warrants a full characterization. Because the nonlinearity of mixtures precludes the exact closed-form analysis possible in the linear regime, future work should focus on characterizing the student's optimization error and establishing optimal merge plans for these complex distributions. On the practical side, designing adaptive merge plans that exploit estimated variance structure and distillation-target complexity could improve robustness when the data distribution departs from our idealized assumptions.

\section*{CRediT authorship contribution statement}

\textbf{Weiguo Gao:} Funding acquisition, Project administration, Supervision, Resources, Writing -- review \& editing; \textbf{Ming Li:} Conceptualization, Formal analysis, Investigation, Methodology, Validation, Visualization, Writing - original draft, Writing - review \& editing.

\section*{Declaration of competing interest}

The authors declare that they have no known competing financial interests or personal relationships that could have appeared to influence the work reported in this paper.

\section*{Acknowledgements}

The research of this work was partially supported by National Key R\&D Program of China under Grant No.~2021YFA1003305.

\bibliography{references}
\bibliographystyle{abbrvnat}

\appendix


\section{Diffusion trajectory distillation methods}
\label{app:diffusion_trajectory_distillation_methods}

Trajectory distillation accelerates sampling by training a student model to approximate a composite operator \(\mathcal{T}_k(\bm{z}_t)\) as defined in~\Cref{def:composite_teacher_operator}. We describe four canonical methods, including two widely adopted baselines and two sequential variants we design.

\paragraph{Vanilla distillation} 
Originally proposed by~\citet{luhman2021knowledge}, vanilla distillation trains a student model \(\bm{A}^{\mathrm{st}}\) to match the entire teacher trajectory in a single forward pass:
\begin{equation}
\bm{A}^{\mathrm{st}}(\bm{z}_T, T) \approx \mathcal{T}_T(\bm{z}_T) = \bm{z}_0.
\end{equation}
The goal is to learn a single-step mapping that directly transforms the noisy input \(\bm{z}_T\) into a clean sample \(\bm{z}_0\). See~\Cref{fig:vanilla_conceptual_illustration} for a conceptual illustration. While efficient during inference, this approach requires the student to capture a highly complex transformation, which can be challenging in practice.

\begin{figure}[htb]
\centering
\includegraphics[width=0.9\linewidth]{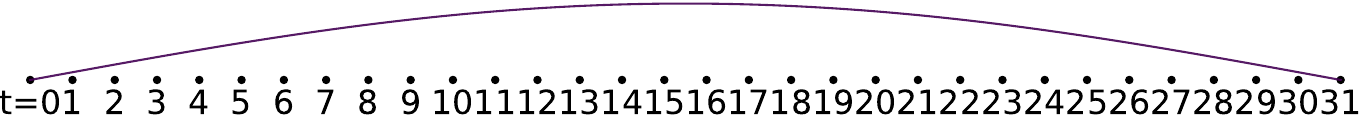}
\caption{Conceptual illustration of \emph{vanilla distillation}. The entire teacher trajectory is being merged into a single step.}
\label{fig:vanilla_conceptual_illustration}
\end{figure}

\paragraph{Progressive distillation} 
Progressive distillation~\citep{lin2024sdxl,meng2023distillation,salimans2022progressive} adopts a curriculum of merging two teacher steps at a time. In each round, the student model \(\bm{A}^{\mathrm{st}}\) is trained to approximate
\begin{equation}
\bm{A}^{\mathrm{st}}(\bm{z}_t, t) \approx \mathcal{T}_2(\bm{z}_t) = \bm{z}_{t-2}
\end{equation}
for even-indexed timesteps \(t\). See~\Cref{fig:progressive_conceptual_illustration} for a conceptual illustration. Once trained, the student replaces the teacher, and the number of sampling steps is halved. This procedure is repeated until only one step remains. The progressive reduction of the trajectory length eases optimization by distributing the learning burden over multiple stages.

\begin{figure}[htb]
\centering
\includegraphics[width=0.9\linewidth]{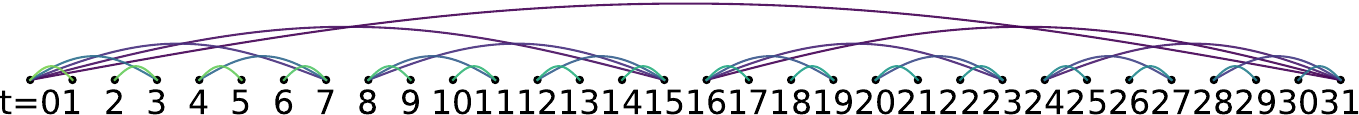}
\caption{Conceptual illustration of \emph{progressive distillation}. The student model is trained with a curriculum that progressively merges pairs of teacher steps into a single student update, thereby halving the number of sampling steps required at each stage.}
\label{fig:progressive_conceptual_illustration}
\end{figure}

\paragraph{Sequential consistency}
The original form of consistency distillation~\citep{li2024reward,song2023consistency} trains a student \(\bm{A}^{\mathrm{st}}\) to satisfy the self-consistency condition
\begin{equation}
\bm{A}^{\mathrm{st}}(\bm{z}_t, t) \approx \bm{A}^{\mathrm{st}}(\bm{z}_{t-1}, t-1),
\end{equation}
enforcing that all intermediate predictions follow a consistent trajectory toward the data distribution endpoint. We propose a sequential variant that adopts a curriculum strategy: the student model is explicitly trained to map each \(\bm{z}_t\) directly to the clean sample \(\bm{x}_0=\bm{z}_0\), using increasingly long trajectories. Formally, we train
\begin{equation}
\bm{A}^{\mathrm{st}}(\bm{z}_t, t) \approx \mathcal{T}_t(\bm{z}_t) = \bm{z}_0,
\end{equation}
for \(t = 1, 2, \dotsc, T\). See~\Cref{fig:consistency_conceptual_illustration} for a conceptual illustration. The student thus progressively learns to compress longer denoising trajectories into a single step, with early training focusing on short and easier sub-trajectories.

\begin{figure}[htb]
\centering
\includegraphics[width=0.9\linewidth]{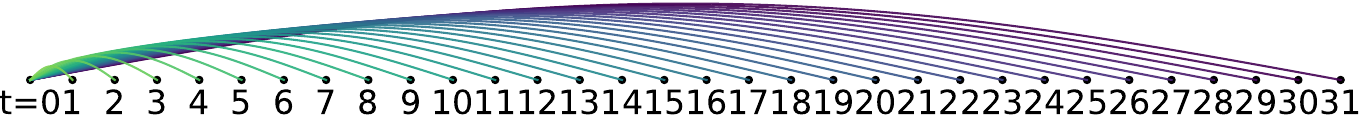}
\caption{Conceptual illustration of \emph{sequential consistency distillation}. The student model is trained to map each \(\bm{z}_t\) directly to the clean sample \(\bm{z}_0\), progressively learning to compress longer teacher trajectories into a single step.}
\label{fig:consistency_conceptual_illustration}
\end{figure}

\paragraph{Sequential BOOT}
BOOT~\citep{gu2023boot} is a bootstrapping-based method that predicts the output of a pretrained diffusion model teacher given any time-step, i.e.,
\begin{equation}
\bm{A}^{\mathrm{st}}(\bm{z}_T, t) \approx \bm{z}_t, \quad \bm{z}_T\sim \mathcal{N}(\bm{0}, \bm{I})
\end{equation}
Our sequential variant modifies this idea by keeping the input fixed at the noisiest level \(\bm{z}_T\), while gradually shifting the target toward cleaner samples. Specifically, the student is trained to match
\begin{equation}
\bm{A}^{\mathrm{st}}(\bm{z}_T, T) \approx \mathcal{T}_k(\bm{z}_T) = \bm{z}_{T-k},
\end{equation}
with \(k = 1, 2, \dotsc, T\). See~\Cref{fig:boot_conceptual_illustration} for a conceptual illustration. This formulation constructs a monotonic training trajectory from high to low noise levels, offering a smooth learning process.

\begin{figure}[htb]
\centering
\includegraphics[width=0.9\linewidth]{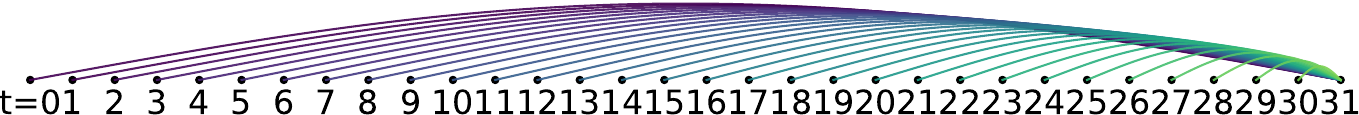}
\caption{Conceptual illustration of \emph{sequential BOOT distillation}. The student model is trained with input fixed at the noisiest level \(\bm{z}_T\), while the target is gradually shifted toward cleaner samples.}
\label{fig:boot_conceptual_illustration}
\end{figure}

\section{Additional experimental results on Pareto dynamic programming}
\label{app:additional_experimental_results_on_pareto_dynamic_programming}

\rev{We first explain why the proposed Pareto-based dynamic programming algorithm also applies to a general centered Gaussian distribution with non-diagonal covariance. Let \(\mathcal N(0,\bm\Sigma)\) be such a distribution, and let \(\bm\Sigma = \bm U \bm\Lambda \bm U^\top\) be the eigendecomposition of \(\bm \Sigma\), where \(\bm U\) is orthogonal and \(\bm\Lambda\) is diagonal. Define the transformed coordinates \(\widetilde{\bm z}_t = \bm U^\top \bm z_t\). In this basis, the covariance becomes diagonal, and the optimal denoiser as well as the one-step DDIM operators become diagonal accordingly. In particular,
\begin{multline}
\bm A_t^\star = \bm U \widetilde{\bm A}_t^\star \bm U^\top, \\
\text{where }\widetilde{\bm A}_t^\star
=
(\alpha_{t-1}\alpha_t \bm\Lambda + \sigma_{t-1}\sigma_t \bm I)
(\alpha_t^2 \bm\Lambda + \sigma_t^2 \bm I)^{-1}.
\end{multline}
The same holds for the shrinkage matrices. We have \(\bm\Gamma_t = \bm U \widetilde{\bm\Gamma}_t \bm U^\top\), where \(\widetilde{\bm\Gamma}_t\) is diagonal in the same basis. Hence the recursive merge map is equivariant under the same orthogonal conjugation. Indeed,
\begin{equation}
\bm U^\top\bigl((\bm I-\bm\Gamma_{t_2})\bm A_L\bm A_R+\bm\Gamma_{t_2}\bm A_R\bigr)\bm U
=
(\bm I-\widetilde{\bm\Gamma}_{t_2})\widetilde{\bm A}_L\widetilde{\bm A}_R
+
\widetilde{\bm\Gamma}_{t_2}\widetilde{\bm A}_R,
\end{equation}
where \(\widetilde{\bm A}_L = \bm U^\top \bm A_L \bm U\) and \(\widetilde{\bm A}_R = \bm U^\top \bm A_R \bm U\). Therefore, every merge plan in the original coordinates corresponds bijectively to the same merge plan in the diagonal basis. Moreover, the surrogate objective is preserved under this orthogonal change of basis. If \(\widetilde{\mathcal T}_T\) denotes the surrogate target and \(\bm A\) is any candidate merged operator, then we have
\begin{equation}
\|\widetilde{\mathcal T}_T-\bm A\|_{\mathrm{F}}^2
=
\|\bm U^\top \widetilde{\mathcal T}_T \bm U-\bm U^\top \bm A \bm U\|_{\mathrm{F}}^2.
\end{equation}
Thus, a merge plan is optimal in the original coordinates if and only if its transformed counterpart is optimal in the diagonal basis. In this sense, the Pareto dynamic programming algorithm and its optimality result extend to general centered Gaussian distributions up to an orthogonal change of coordinates.}

We now present extended results on dynamic programming, complementing~\Cref{subsec:pareto_dynamic_programming_results}.

\paragraph{Dynamic programming results under different \(T\)}
We provide experimental results of the dynamic programming algorithm under varying values of \(T\), as shown in~\Cref{fig:dp_more_results_T_64_s_64,fig:dp_more_results_T_128_s_64,fig:dp_more_results_T_256_s_64,fig:dp_more_results_T_512_s_64}, complementing the base case \(T = 32\) discussed in~\Cref{subsec:pareto_dynamic_programming_results}. Each figure contains two panels: the top panel reports the squared \(2\)-Wasserstein distance between the student operator and the surrogate teacher under four canonical strategies, along with the DP-optimal solution across varying \(\lambda\); the bottom panel shows the corresponding error gap between each strategy and the DP baseline. Across all tested settings, we consistently observe that when \(\lambda \leq 1\), sequential BOOT incurs zero error and exactly recovers the optimal merge plan. In contrast, for sufficiently large \(\lambda\), vanilla trajectory distillation exactly matches the performance of the DP-optimal strategy. These trends validate the theoretical predictions of~\Cref{thm:vanilla_distillation_optimality,thm:sequential_boot_optimality}.

\begin{figure}[htb]
\centering
\includegraphics[width=1\linewidth]{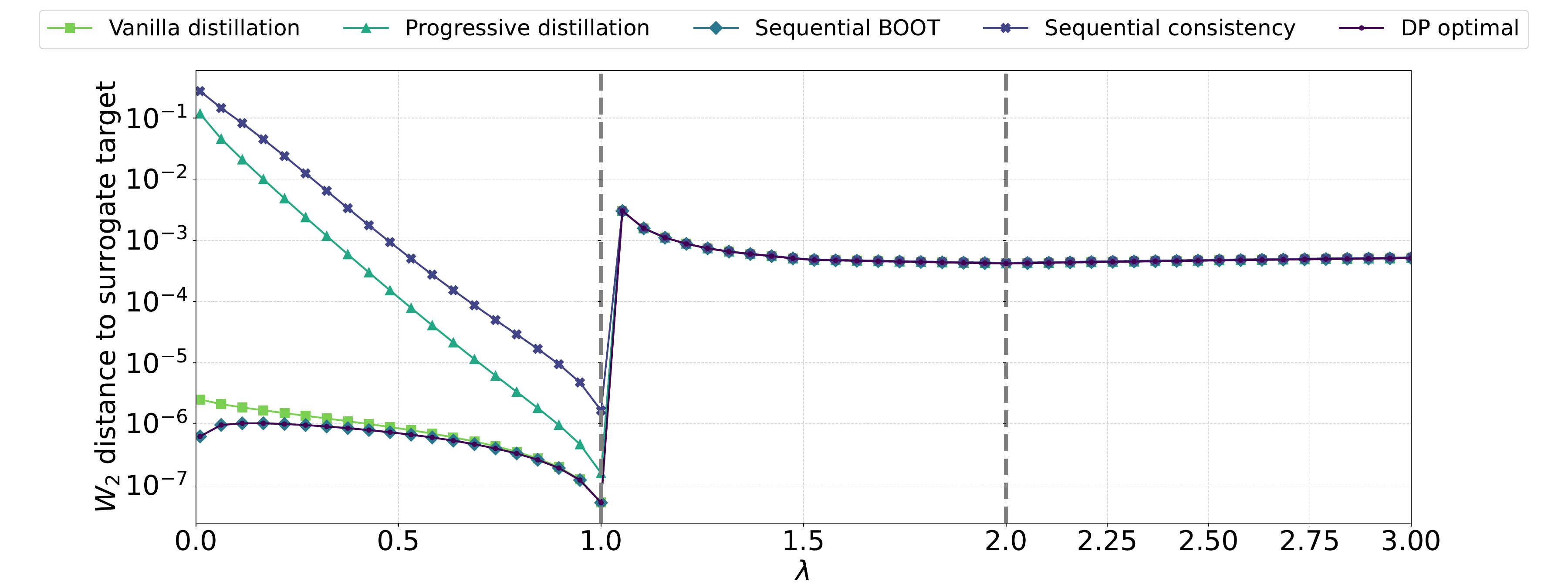}\\
\includegraphics[width=1\linewidth]{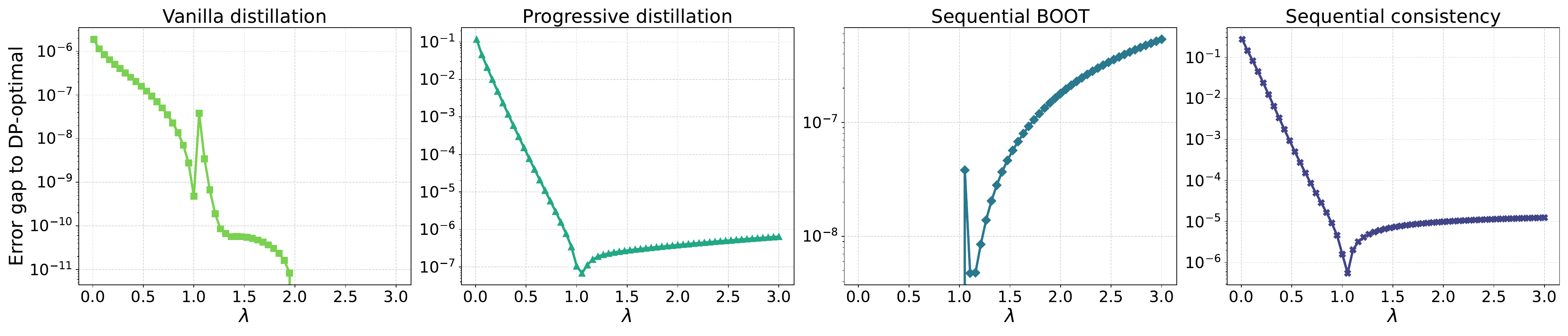}\\
\caption{Evaluation of the merging strategies for \(T = 64\) and \(s = 6.4\). \textbf{Top}: squared \(2\)-Wasserstein distance between the student operator and the surrogate teacher operator under four canonical strategies (vanilla, progressive, sequential BOOT, and sequential consistency), together with the DP-optimal solution, across varying values of \(\lambda\). \textbf{Bottom}: Error gap between each strategy and the DP-optimal baseline.}
\label{fig:dp_more_results_T_64_s_64}
\end{figure}

\begin{figure}[H]
\centering
\includegraphics[width=1\linewidth]{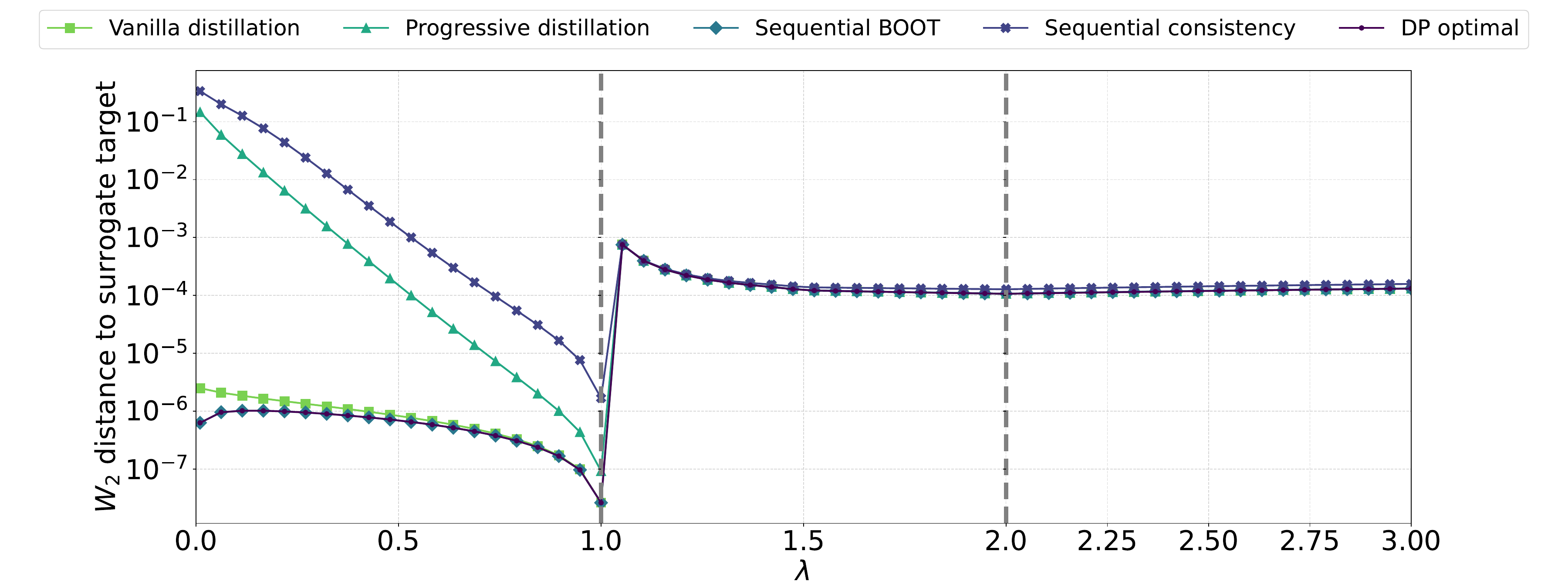}\\
\includegraphics[width=1\linewidth]{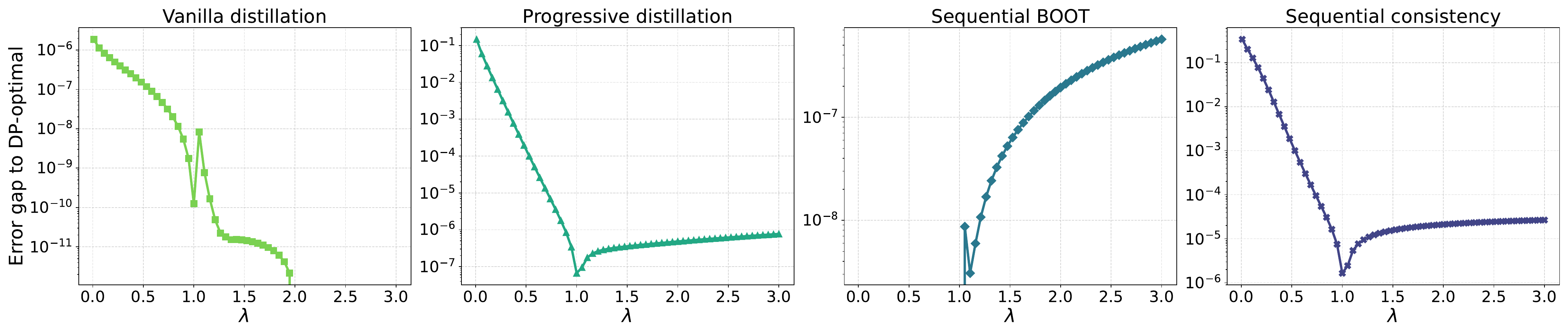}\\
\caption{Evaluation of the merging strategies for \(T = 128\) and \(s = 6.4\). This figure follows the same format as~\Cref{fig:dp_more_results_T_64_s_64}.}
\label{fig:dp_more_results_T_128_s_64}
\end{figure}

\begin{figure}[htb]
\centering
\includegraphics[width=1\linewidth]{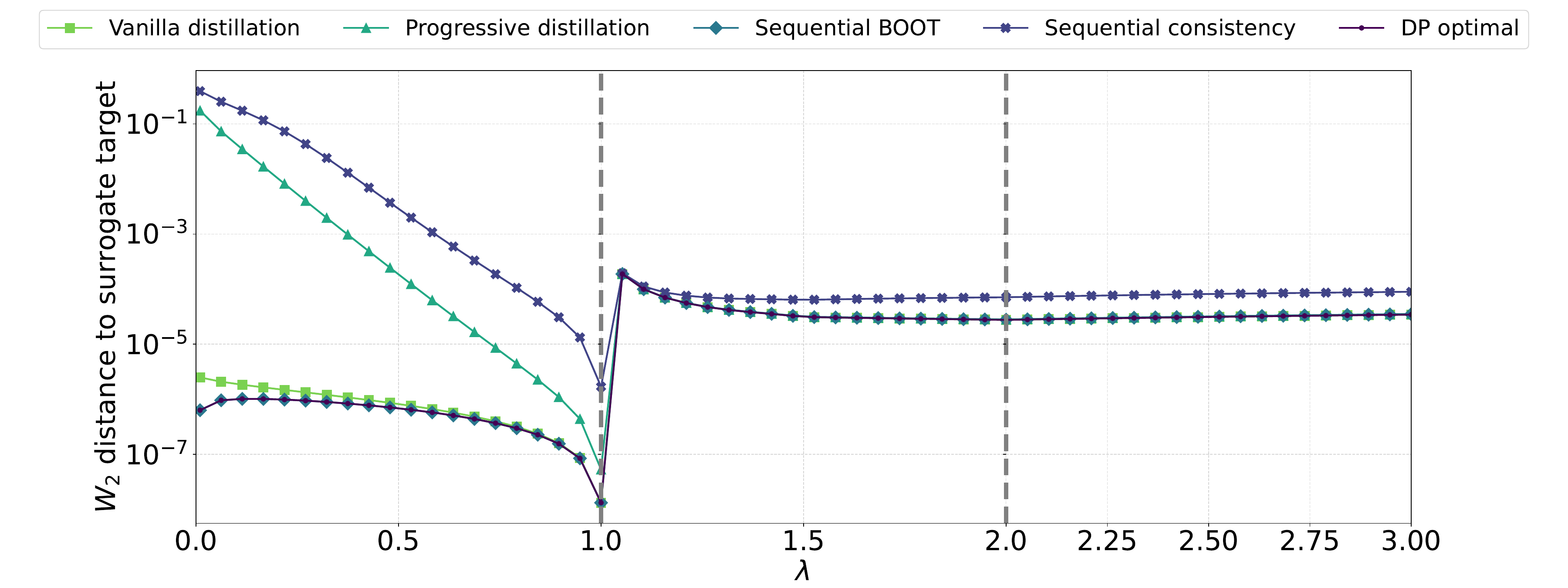}\\
\includegraphics[width=1\linewidth]{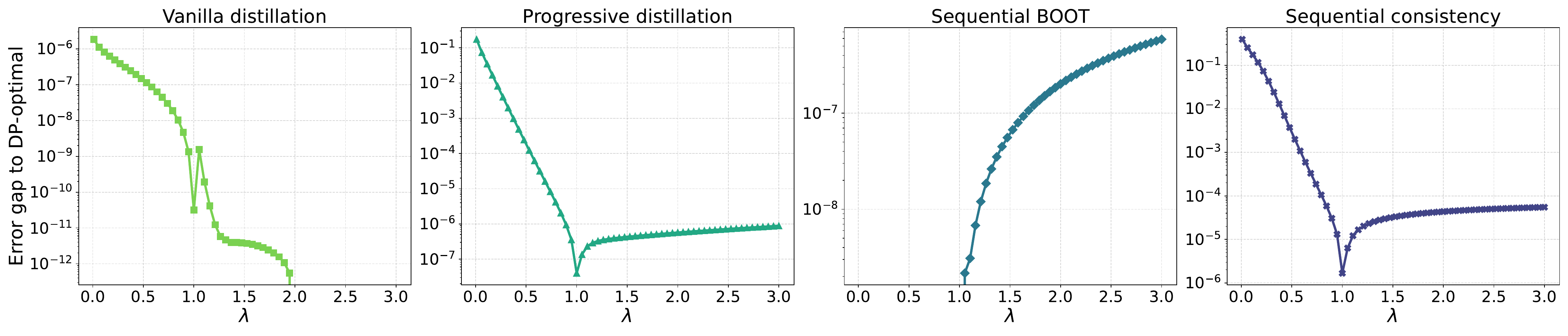}\\
\caption{Evaluation of the merging strategies for \(T = 256\) and \(s = 6.4\). This figure follows the same format as~\Cref{fig:dp_more_results_T_64_s_64}.}
\label{fig:dp_more_results_T_256_s_64}
\end{figure}

\begin{figure}[htb]
\centering
\includegraphics[width=1\linewidth]{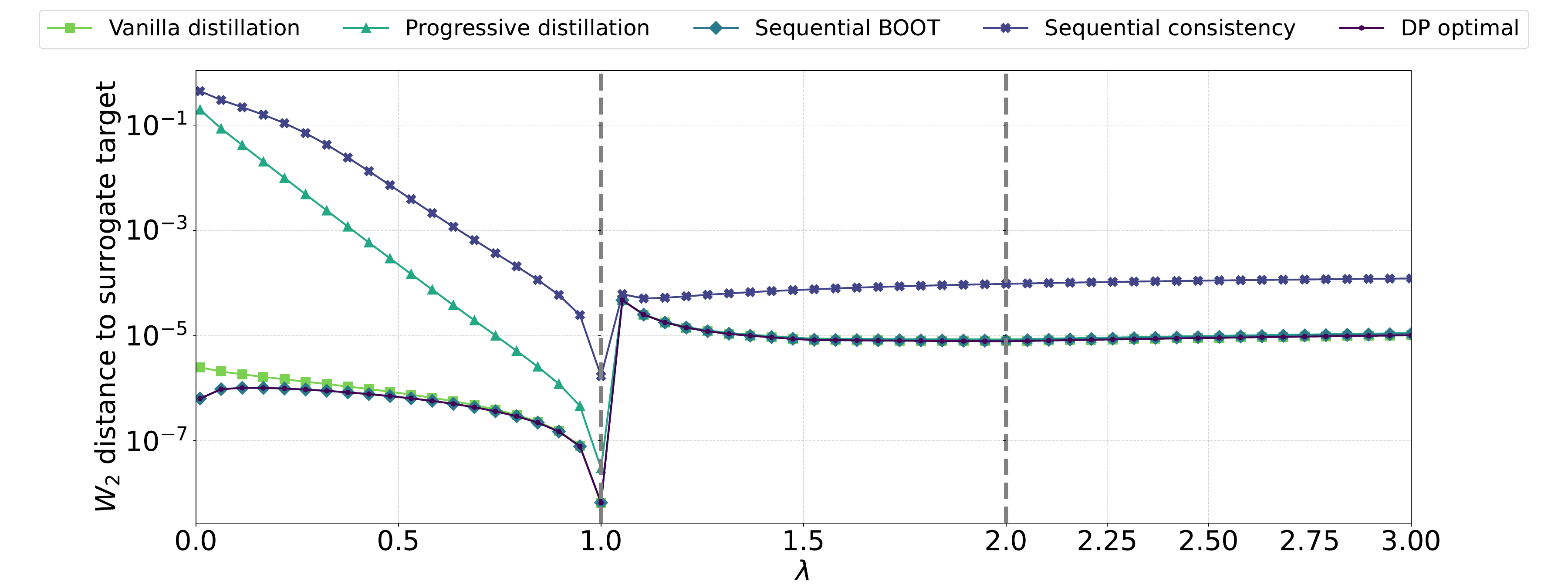}\\
\includegraphics[width=1\linewidth]{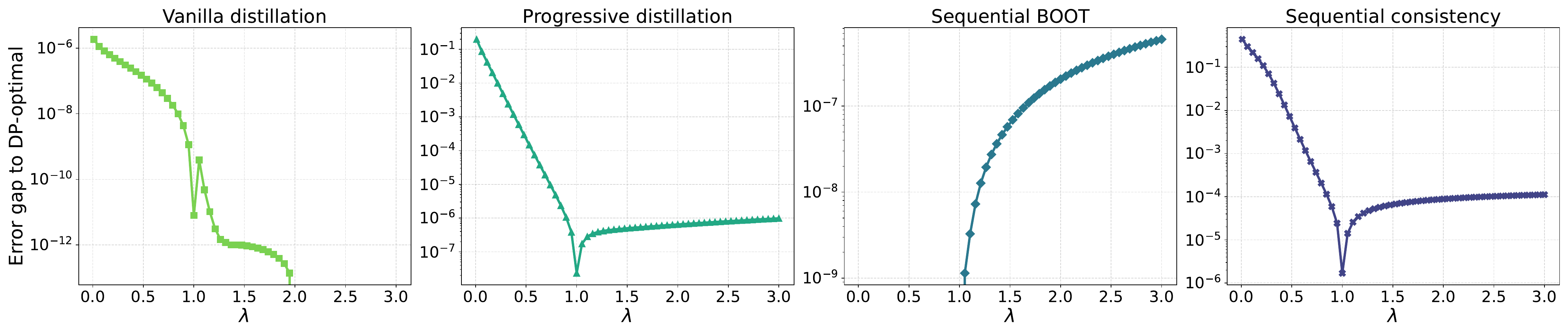}\\
\caption{Evaluation of the merging strategies for \(T = 512\) and \(s = 6.4\). This figure follows the same format as~\Cref{fig:dp_more_results_T_64_s_64}.}
\label{fig:dp_more_results_T_512_s_64}
\end{figure}

\paragraph{Dynamic programming results under different \(s\)}
We present experimental results of the dynamic programming algorithm under different values of \(s\), as shown in~\Cref{fig:dp_more_results_T_32_s_16,fig:dp_more_results_T_32_s_32}, complementing the special case \(s = 6.4\) discussed in~\Cref{subsec:pareto_dynamic_programming_results}. Each figure contains two panels: the top panel shows the squared \(2\)-Wasserstein distance between the student operator and the surrogate teacher under four canonical strategies and the DP-optimal solution across varying \(\lambda\); the bottom panel reports the corresponding error gap between each strategy and the DP baseline. Across all tested values of \(s\), we observe consistent qualitative trends as in the base setting. Notably, as \(s\) increases, the overall error systematically decreases across all strategies, indicating that extended training time per merge step improves the student's ability to approximate the teacher operator.

\begin{figure}[htb]
\centering
\includegraphics[width=1\linewidth]{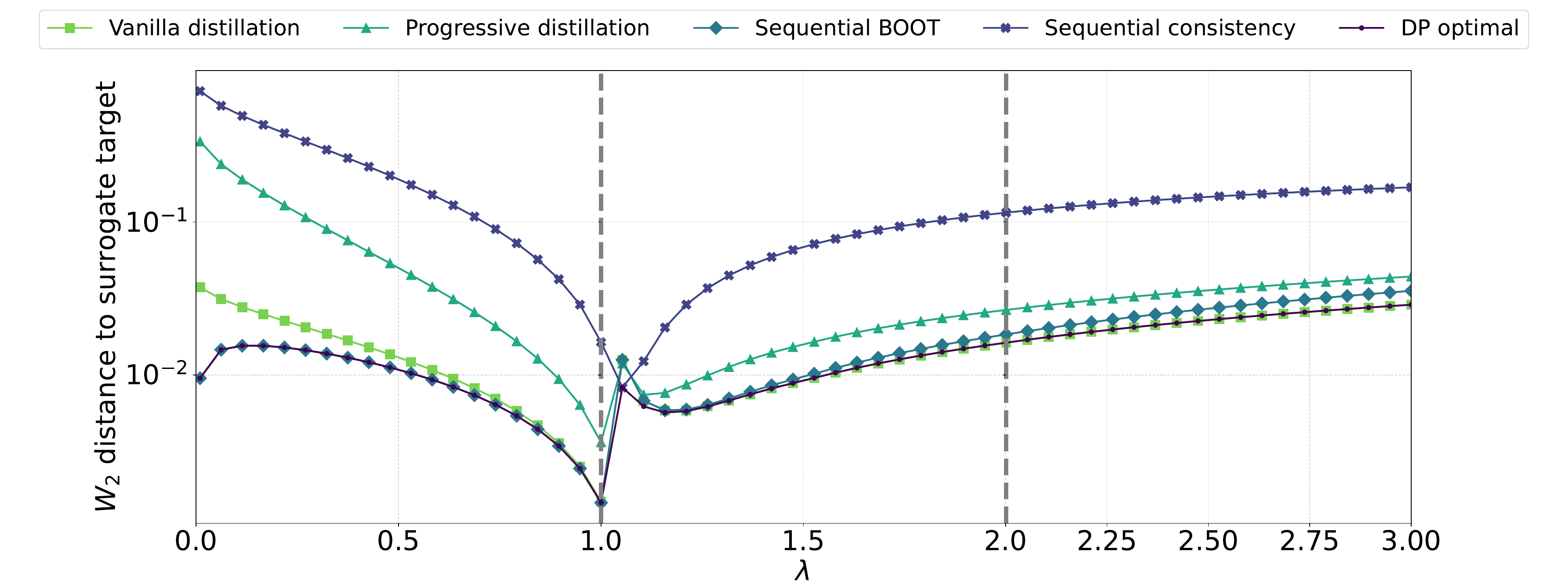}\\
\includegraphics[width=1\linewidth]{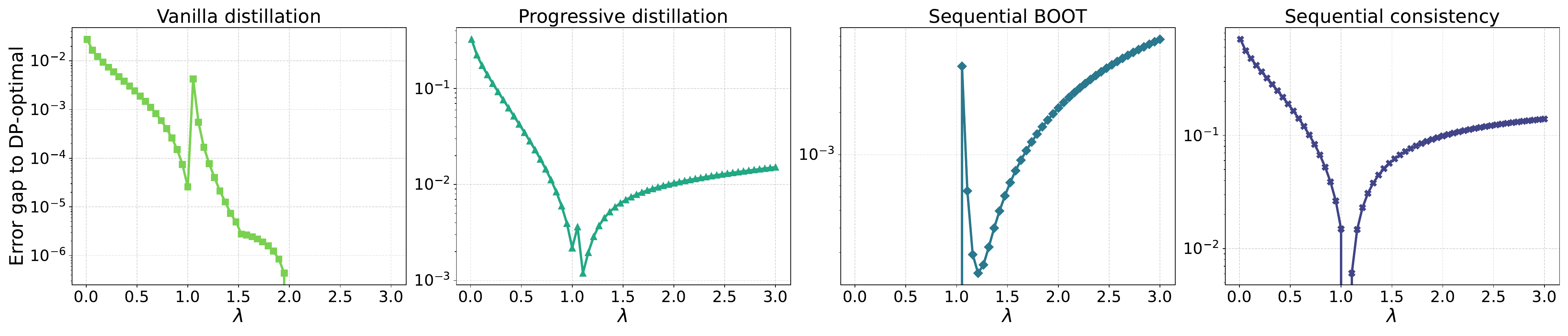}\\
\caption{Evaluation of the merging strategies for \(T = 32\) and \(s = 1.6\). This figure follows the same format as~\Cref{fig:dp_more_results_T_64_s_64}.}
\label{fig:dp_more_results_T_32_s_16}
\end{figure}

\begin{figure}[htb]
\centering
\includegraphics[width=1\linewidth]{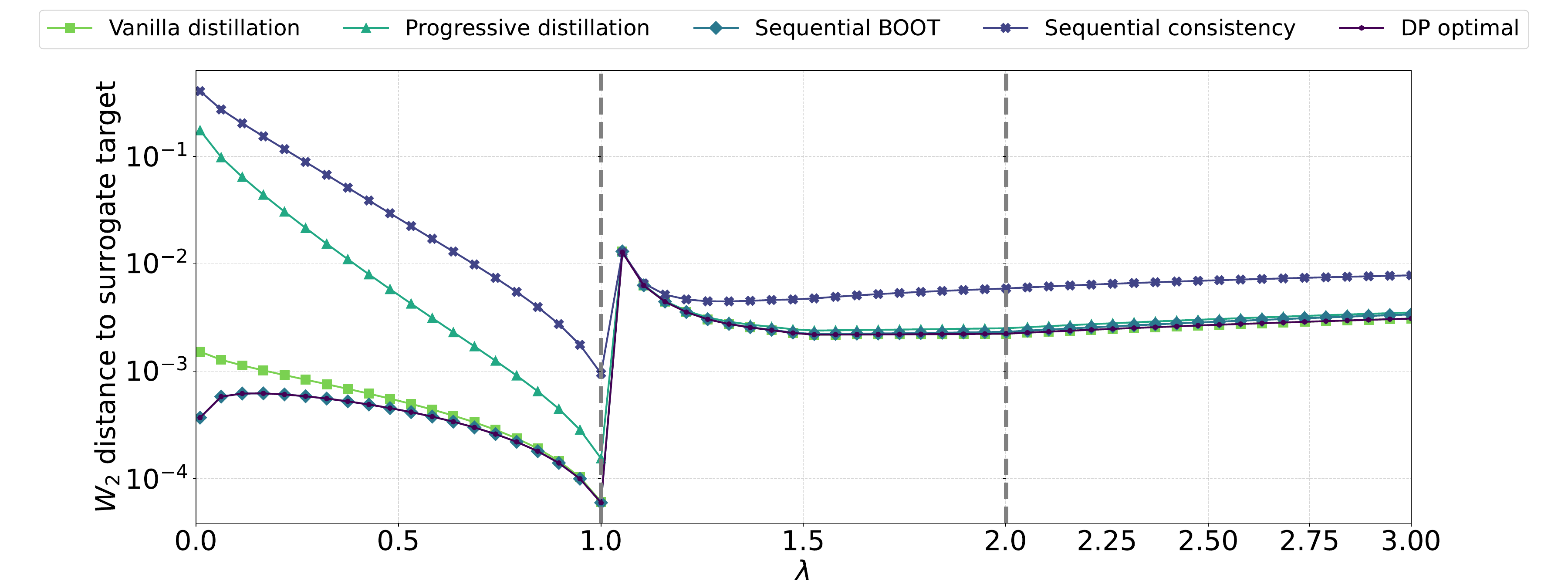}\\
\includegraphics[width=1\linewidth]{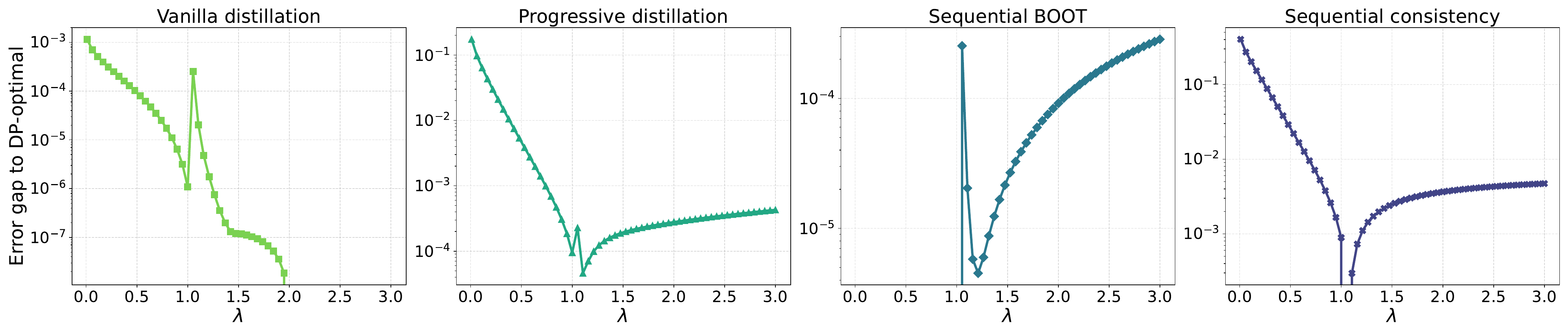}\\
\caption{Evaluation of the merging strategies for \(T = 32\) and \(s = 3.2\). This figure follows the same format as~\Cref{fig:dp_more_results_T_64_s_64}.}
\label{fig:dp_more_results_T_32_s_32}
\end{figure}

\paragraph{Dynamic programming results under different \(\bm{\Lambda}\)}
We further present experimental results of the dynamic programming algorithm under a range of non-scalar covariance matrices \(\bm{\Lambda} = \mathrm{Diag}(\lambda_1, \lambda_2)\), as illustrated in~\Cref{fig:dp_more_results_on_Lambda}. The specific choices of \((\lambda_1, \lambda_2)\) used in these experiments are summarized in~\Cref{tab:lambda_critical_choices}. These results complement the scalar case discussed in~\Cref{subsec:pareto_dynamic_programming_results}. In the examples shown, we focus on ``transitional'' regimes, where the diagonal entries of \(\bm{\Lambda}\) are neither all smaller than or equal to \(1\) nor all significantly larger. These boundary regimes are already well-understood: the optimal strategy reduces to either sequential BOOT distillation or vanilla distillation, as established in the scalar setting. In contrast, as observed in~\Cref{fig:dp_more_results_on_Lambda}, the transitional cases often give rise to highly intricate merging strategies that deviate from any canonical pattern.

\begin{table}[htb]
\centering
\caption{Transitional-regime covariance choices used in our experiments. Each pair satisfies: (\romannumeral1) \(\lambda_1 < \lambda_2\), (\romannumeral2) at least one entry is near \(1\), and (\romannumeral3) both are not simultaneously \(\leq 1\). The table is sorted by \(\lambda_1\).}
\label{tab:lambda_critical_choices}
\smallskip
\scriptsize
\begin{tabular}{c@{\hspace{2mm}}|@{\hspace{2mm}}c@{\hspace{2mm}}c@{\hspace{2mm}}c@{\hspace{2mm}}c@{\hspace{2mm}}c@{\hspace{2mm}}c@{\hspace{2mm}}c@{\hspace{2mm}}c@{\hspace{2mm}}c@{\hspace{2mm}}c}
\toprule
Index & 1 & 2 & 3 & 4 & 5 & 6 & 7 & 8 & 9 & 10 \\
\midrule
\(\lambda_1\) & \(0.95\) & \(0.95\) & \(0.97\) & \(0.98\) & \(0.99\) & \(1.00\) & \(1.02\) & \(1.02\) & \(1.05\) & \(1.08\) \\
\(\lambda_2\) & \(1.05\) & \(1.25\) & \(1.10\) & \(1.30\) & \(1.40\) & \(1.20\) & \(1.35\) & \(1.80\) & \(1.50\) & \(1.60\) \\
\bottomrule
\end{tabular}
\end{table}

\begin{figure}[htb]
\centering
\includegraphics[width=0.9\linewidth]{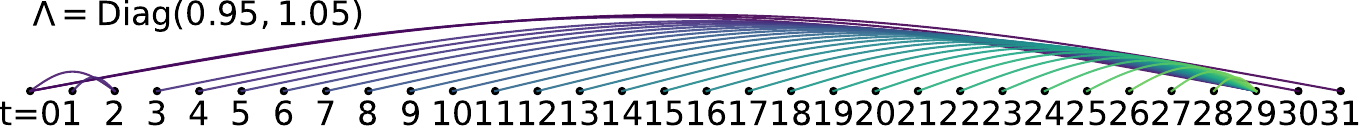}\\[1em]
\includegraphics[width=0.9\linewidth]{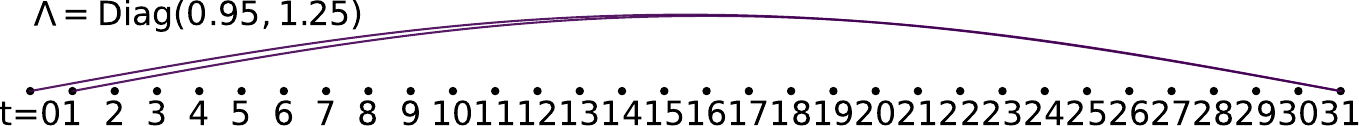}\\[1em]
\includegraphics[width=0.9\linewidth]{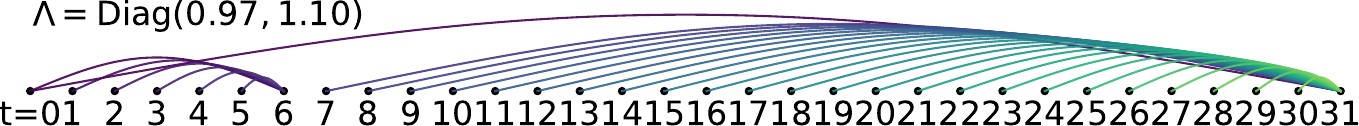}\\[1em]
\includegraphics[width=0.9\linewidth]{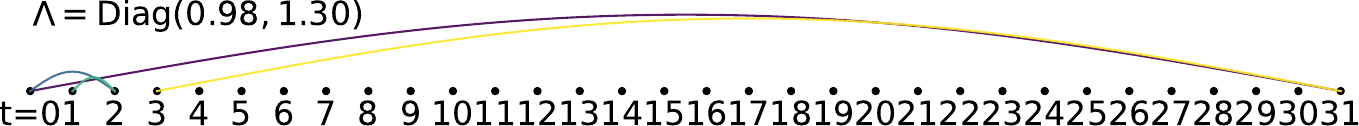}\\[1em]
\includegraphics[width=0.9\linewidth]{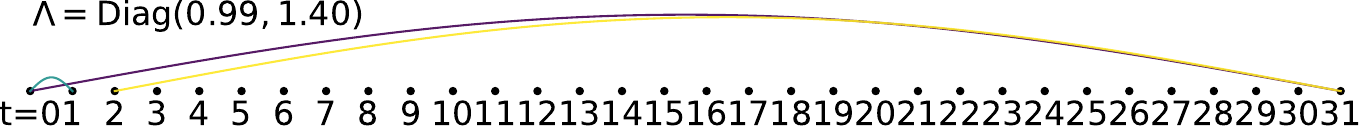}\\[1em]
\includegraphics[width=0.9\linewidth]{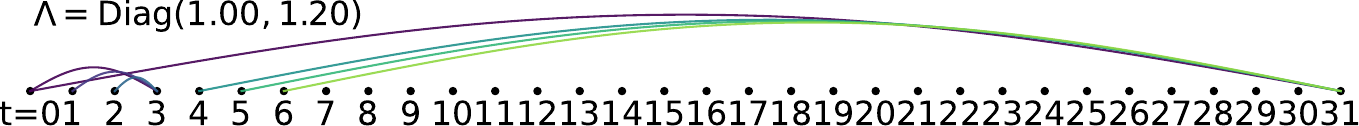}\\[1em]
\includegraphics[width=0.9\linewidth]{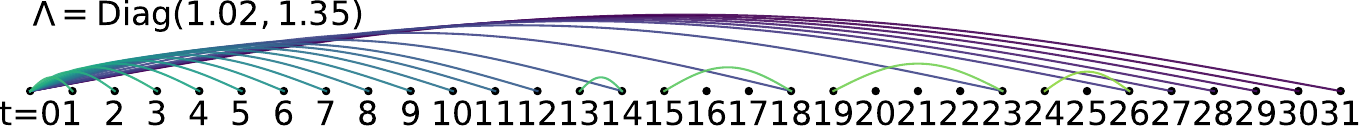}\\[1em]
\includegraphics[width=0.9\linewidth]{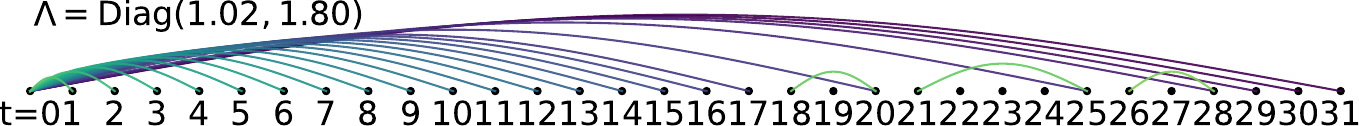}\\[1em]
\includegraphics[width=0.9\linewidth]{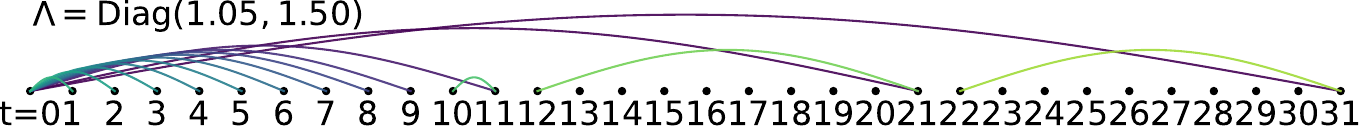}\\[1em]
\includegraphics[width=0.9\linewidth]{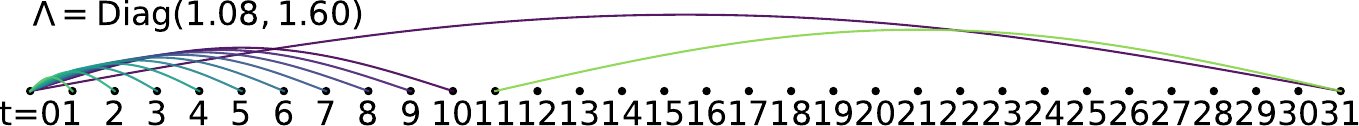}
\caption{Pareto dynamic programming merge plans for a collection of transitional-regime covariance settings. Each subplot corresponds to a distinct \(\bm{\Lambda} = \mathrm{Diag}(\lambda_1, \lambda_2)\) configuration, with the specific values labeled at the top-left corner of each plot. All choices of \((\lambda_1, \lambda_2)\) are summarized in~\Cref{tab:lambda_critical_choices}, and are selected to probe transitional regimes where optimal strategies deviate from canonical forms such as vanilla or sequential BOOT. The arcs indicate the merge operations selected by the dynamic programming algorithm, where color encodes the merge order. Lighter arcs correspond to earlier merges, and darker arcs to later ones. These results demonstrate that optimal merge structures in the transitional regime are highly nontrivial and exhibit intrinsic patterns that cannot be captured by the canonical strategies.}
\label{fig:dp_more_results_on_Lambda}
\end{figure}

\section{Additional results on synthetic datasets}
\label{app:additional_results_on_synthetic_datasets}

In this section, we provide further experiments on synthetic datasets, complementing~\Cref{subsec:experiments_on_synthetic_datasets}.

\subsection{Gaussian dataset}
\label{subsec:gaussian_dataset}

\begin{table}[tb]
\centering
\caption{Simulation results of four canonical trajectory distillation methods on synthetic datasets with varying total time steps \(T\) and covariance parameter \(\lambda\). Each entry reports the mean signed error (no absolute value taken) between the student operator and the surrogate composite operator \(\widetilde{\mathcal{T}}_T\), averaged over \(10\) independent trials. The best result for each setting is underlined. Results are generally consistent with our theoretical predictions in the linear regime.}
\label{tab:synthetic_all_T}
\smallskip
\tiny
\setlength{\tabcolsep}{2pt}
\begin{tabular}{lcccc}
\toprule
\multicolumn{5}{c}{\scriptsize \(T=64\)} \\
\scriptsize \(\lambda\) & \scriptsize Vanilla & \scriptsize Progressive & \scriptsize BOOT & \scriptsize Consistency \\
\midrule
0.20 & \(8.3 \times 10^{-4} \pm 1.1 \times 10^{-4}\) & \(4.5 \times 10^{-2} \pm 5.6 \times 10^{-4}\) & \(\underline{4.9 \times 10^{-4}} \pm 4.8 \times 10^{-5}\) & \(2.0 \times 10^{-1} \pm 1.1 \times 10^{-2}\) \\ 
0.50 & \(4.4 \times 10^{-4} \pm 9.7 \times 10^{-5}\) & \(5.9 \times 10^{-3} \pm 6.7 \times 10^{-5}\) & \(\underline{3.2 \times 10^{-4}} \pm 3.7 \times 10^{-5}\) & \(4.8 \times 10^{-2} \pm 4.5 \times 10^{-3}\) \\ 
1.00 & \(\underline{2.6 \times 10^{-5}} \pm 3.0 \times 10^{-6}\) & \(7.8 \times 10^{-5} \pm 4.1 \times 10^{-6}\) & \(2.8 \times 10^{-5} \pm 4.8 \times 10^{-6}\) & \(6.0 \times 10^{-4} \pm 2.0 \times 10^{-4}\) \\ 
1.02 & \(-9.3 \times 10^{-3} \pm 2.6 \times 10^{-6}\) & \(-9.3 \times 10^{-3} \pm 1.7 \times 10^{-6}\) & \(-9.3 \times 10^{-3} \pm 2.5 \times 10^{-6}\) & \(\underline{-9.2 \times 10^{-3}} \pm 1.0 \times 10^{-4}\) \\ 
2.00 & \(\underline{-8.9 \times 10^{-4}} \pm 6.8 \times 10^{-5}\) & \(-1.3 \times 10^{-3} \pm 6.8 \times 10^{-5}\) & \(-1.0 \times 10^{-3} \pm 1.2 \times 10^{-4}\) & \(-8.4 \times 10^{-3} \pm 1.8 \times 10^{-3}\) \\ 
5.00 & \(\underline{-2.3 \times 10^{-3}} \pm 2.0 \times 10^{-4}\) & \(-3.0 \times 10^{-3} \pm 2.8 \times 10^{-4}\) & \(-3.0 \times 10^{-3} \pm 3.7 \times 10^{-4}\) & \(-1.3 \times 10^{-2} \pm 2.5 \times 10^{-3}\) \\ 
\midrule
\multicolumn{5}{c}{\scriptsize \(T=128\)} \\
\scriptsize \(\lambda\) & \scriptsize Vanilla & \scriptsize Progressive & \scriptsize BOOT & \scriptsize Consistency \\
\midrule
0.20 & \(7.7 \times 10^{-4} \pm 1.8 \times 10^{-4}\) & \(5.6 \times 10^{-2} \pm 6.0 \times 10^{-4}\) & \(\underline{5.5 \times 10^{-4}} \pm 8.0 \times 10^{-5}\) & \(2.7 \times 10^{-1} \pm 1.3 \times 10^{-2}\) \\ 
0.50 & \(3.7 \times 10^{-4} \pm 5.6 \times 10^{-5}\) & \(7.3 \times 10^{-3} \pm 1.3 \times 10^{-4}\) & \(\underline{3.3 \times 10^{-4}} \pm 4.6 \times 10^{-5}\) & \(7.4 \times 10^{-2} \pm 7.7 \times 10^{-3}\) \\ 
1.00 & \(\underline{1.3 \times 10^{-5}} \pm 3.1 \times 10^{-6}\) & \(4.5 \times 10^{-5} \pm 1.8 \times 10^{-6}\) & \(1.4 \times 10^{-5} \pm 2.1 \times 10^{-6}\) & \(7.7 \times 10^{-4} \pm 9.2 \times 10^{-5}\) \\ 
1.02 & \(\underline{-2.0 \times 10^{-3}} \pm 8.5 \times 10^{-8}\) & \(-2.0 \times 10^{-3} \pm 4.5 \times 10^{-7}\) & \(-2.0 \times 10^{-3} \pm 6.2 \times 10^{-8}\) & \(-2.1 \times 10^{-3} \pm 3.3 \times 10^{-6}\) \\ 
2.00 & \(\underline{-6.2 \times 10^{-4}} \pm 9.2 \times 10^{-5}\) & \(-1.0 \times 10^{-3} \pm 7.9 \times 10^{-5}\) & \(-7.6 \times 10^{-4} \pm 9.0 \times 10^{-5}\) & \(-1.5 \times 10^{-2} \pm 2.7 \times 10^{-3}\) \\ 
5.00 & \(\underline{-1.8 \times 10^{-3}} \pm 3.5 \times 10^{-4}\) & \(-2.8 \times 10^{-3} \pm 2.3 \times 10^{-4}\) & \(-2.3 \times 10^{-3} \pm 4.3 \times 10^{-4}\) & \(-2.3 \times 10^{-2} \pm 4.4 \times 10^{-3}\) \\ 
\midrule
\multicolumn{5}{c}{\scriptsize \(T=256\)} \\
\scriptsize \(\lambda\) & \scriptsize Vanilla & \scriptsize Progressive & \scriptsize BOOT & \scriptsize Consistency \\
\midrule
0.20 & \(7.3 \times 10^{-4} \pm 1.1 \times 10^{-4}\) & \(6.7 \times 10^{-2} \pm 7.6 \times 10^{-4}\) & \(\underline{4.8 \times 10^{-4}} \pm 6.8 \times 10^{-5}\) & \(3.3 \times 10^{-1} \pm 1.1 \times 10^{-2}\) \\ 
0.50 & \(4.3 \times 10^{-4} \pm 7.5 \times 10^{-5}\) & \(8.6 \times 10^{-3} \pm 1.1 \times 10^{-4}\) & \(\underline{3.2 \times 10^{-4}} \pm 5.6 \times 10^{-5}\) & \(1.2 \times 10^{-1} \pm 9.2 \times 10^{-3}\) \\ 
1.00 & \(\underline{6.6 \times 10^{-6}} \pm 1.2 \times 10^{-6}\) & \(2.6 \times 10^{-5} \pm 1.2 \times 10^{-6}\) & \(7.0 \times 10^{-6} \pm 1.3 \times 10^{-6}\) & \(7.6 \times 10^{-4} \pm 1.0 \times 10^{-4}\) \\ 
1.02 & \(-4.9 \times 10^{-4} \pm 1.6 \times 10^{-6}\) & \(-5.1 \times 10^{-4} \pm 1.1 \times 10^{-6}\) & \(\underline{-4.9 \times 10^{-4}} \pm 9.5 \times 10^{-7}\) & \(-1.3 \times 10^{-3} \pm 1.3 \times 10^{-4}\) \\ 
2.00 & \(\underline{-5.4 \times 10^{-4}} \pm 1.1 \times 10^{-4}\) & \(-1.0 \times 10^{-3} \pm 7.1 \times 10^{-5}\) & \(-6.5 \times 10^{-4} \pm 1.0 \times 10^{-4}\) & \(-2.7 \times 10^{-2} \pm 3.9 \times 10^{-3}\) \\ 
5.00 & \(\underline{-1.6 \times 10^{-3}} \pm 3.4 \times 10^{-4}\) & \(-2.8 \times 10^{-3} \pm 1.6 \times 10^{-4}\) & \(-2.7 \times 10^{-3} \pm 5.2 \times 10^{-4}\) & \(-4.3 \times 10^{-2} \pm 7.6 \times 10^{-3}\) \\ 
\midrule
\multicolumn{5}{c}{\scriptsize \(T=512\)} \\
\scriptsize \(\lambda\) & \scriptsize Vanilla & \scriptsize Progressive & \scriptsize BOOT & \scriptsize Consistency \\
\midrule
0.20 & \(7.0 \times 10^{-4} \pm 1.6 \times 10^{-4}\) & \(7.9 \times 10^{-2} \pm 4.9 \times 10^{-4}\) & \(\underline{5.0 \times 10^{-4}} \pm 1.0 \times 10^{-4}\) & \(4.0 \times 10^{-1} \pm 1.3 \times 10^{-2}\) \\ 
0.50 & \(4.3 \times 10^{-4} \pm 8.7 \times 10^{-5}\) & \(1.0 \times 10^{-2} \pm 9.9 \times 10^{-5}\) & \(\underline{3.4 \times 10^{-4}} \pm 6.0 \times 10^{-5}\) & \(1.7 \times 10^{-1} \pm 1.4 \times 10^{-2}\) \\ 
1.00 & \(\underline{3.0 \times 10^{-6}} \pm 4.4 \times 10^{-7}\) & \(1.5 \times 10^{-5} \pm 3.2 \times 10^{-7}\) & \(3.4 \times 10^{-6} \pm 6.2 \times 10^{-7}\) & \(7.1 \times 10^{-4} \pm 1.1 \times 10^{-4}\) \\ 
1.02 & \(-1.3 \times 10^{-4} \pm 2.4 \times 10^{-6}\) & \(-1.6 \times 10^{-4} \pm 1.2 \times 10^{-6}\) & \(\underline{-1.3 \times 10^{-4}} \pm 1.7 \times 10^{-6}\) & \(-2.2 \times 10^{-3} \pm 2.9 \times 10^{-4}\) \\ 
2.00 & \(\underline{-5.9 \times 10^{-4}} \pm 8.5 \times 10^{-5}\) & \(-1.1 \times 10^{-3} \pm 7.1 \times 10^{-5}\) & \(-6.7 \times 10^{-4} \pm 1.1 \times 10^{-4}\) & \(-5.2 \times 10^{-2} \pm 6.2 \times 10^{-3}\) \\ 
5.00 & \(\underline{-1.8 \times 10^{-3}} \pm 2.8 \times 10^{-4}\) & \(-2.9 \times 10^{-3} \pm 3.7 \times 10^{-4}\) & \(-2.4 \times 10^{-3} \pm 6.4 \times 10^{-4}\) & \(-9.2 \times 10^{-2} \pm 1.3 \times 10^{-2}\) \\ 
\bottomrule
\end{tabular}
\end{table}

\paragraph{Simulation results under different \(T\)}
We evaluate the performance of four canonical trajectory distillation strategies across a range of covariance parameters \(\lambda\) and trajectory lengths \(T\). All student models are implemented as sinusoidal embedding-based diagonal affine operators. For each time step \(t\), the student computes a time-dependent diagonal scaling operator
\begin{equation}
\mathrm{Diag}\bigl(\bm{\theta}_t^\top \mathrm{emb}(t/T)\bigr),
\end{equation}
where \(\bm{\theta}_t \in \mathbb{R}^{d_{\text{emb}} \times d}\) is a learnable parameter tensor specific to time step \(t\), and \(\mathrm{emb}(t/T) \in \mathbb{R}^{d_{\text{emb}}}\) is the sinusoidal time embedding applied to the normalized timestep. The embedding is defined by
\begin{multline}
\mathrm{emb}(t/T) = \bigl(\sin(\omega_1 t/T), \cos(\omega_1 t/T), \dotsc, \\\sin(\omega_{d_{\text{emb}}/2} t/T), \cos(\omega_{d_{\text{emb}}/2} t/T)\bigr),
\end{multline}
with frequencies \(\omega_i = \frac{1}{10000^{2i/d_{\text{emb}}}}\) for \(i = 1, \dotsc, d_{\text{emb}}/2\). We set the embedding dimension to \(d_{\text{emb}} = 64\). This design enables each time step to have a distinct, smoothly varying weight. All student weights are initialized using a closed-form least-squares projection of the corresponding teacher operator. Training is performed for \(100\) epochs with a batch size of \(2560\), using SGD with learning rate \(10^{-3}\). Each experiment is repeated independently for each \(\lambda\), and the results are averaged over \(10\) trials to report the mean and standard deviation. The complete set of results across different values of \(T\) and \(\lambda\) is summarized in~\Cref{tab:synthetic_all_T}.

\subsection{Gaussian mixtures dataset}
\label{subsec:gaussian_mixtures_dataset}

To validate our analysis of operator merging in multimodal regimes (\Cref{sec:approximation_error_and_error_propagation_in_the_gaussian_mixture_regime}), we conduct experiments on a 2D Gaussian mixture model. The ground truth distribution \(p_0\) is an isotropic mixture of \(K=8\) Gaussian modes with means \(\{\bm{\mu}_k\}_{k=1}^K\) uniformly spaced on a circle of radius \(R=5.0\) and covariance \(\bm{\Sigma}_k = 0.3^2 \bm{I}\). We use a cosine noise schedule~\citep{nichol2021improved} with \(T=32\) steps.

To isolate distillation errors, we use an analytical teacher based on the closed-form optimal denoising estimator derived in~\Cref{prop:optimal_mixture_denoising_estimator}. The student model comprises two sub-networks: a weighting network and an expert hypernetwork, both implemented as MLPs with a single hidden layer of \(64\) units and ReLU activations. The weighting network determines the mixing weights by processing the concatenation of the \(2\)-dimensional noisy state and the \(64\)-dimensional sinusoidal time embedding. It maps this \(66\)-dimensional combined input to the hidden layer, and subsequently to \(8\) output units which are normalized via a Softmax function to produce the expert probabilities. The expert hypernetwork predicts the parameters of the affine transformations conditioned solely on the \(64\)-dimensional time embedding, ensuring that the local linearity of each expert evolves only with time rather than the state. It maps the time embedding to the hidden layer and then to a \(48\)-dimensional output vector. This output is reshaped to form \(8\) separate \(2\times2\) weight matrices and \(8\) length-\(2\) bias vectors, which are applied as residual updates to fixed parameters initialized as identity matrices and zero vectors, respectively.

All student models are initialized by training to match the single-step teacher operator. Optimization uses Adam with a learning rate of \(10^{-3}\) and batch size \(2{,}560\). Each distillation stage is trained for \(10{,}000\) steps.

\section{Additional results on real dataset}
\label{app:additional_results_on_real_dataset}

This section provides further experiments on the real dataset, complementing~\Cref{subsec:experiments_on_real_dataset}.

\subsection{Additional results on CelebA}
\label{subsec:additioanl_results_on_celeba}

\paragraph{Experimental setup}
We first pretrain a \emph{Multiscale Structural Similarity Variational Autoencoder} (MSSIMVAE) model~\citep{snell2017learning} following standard settings from publicly available implementations. MSSIMVAE is a variant of the standard Variational Autoencoder (VAE) that replaces the typical pixel-wise reconstruction loss with a multiscale structural similarity (MS-SSIM) objective, encouraging perceptually meaningful reconstructions and improving latent structure.

We use the CelebA dataset~\citep{liu2015deep}, which contains \(202{,}599\) face images, as the training data for the MSSIMVAE. After training, we extract the latent codes of the entire CelebA dataset, resulting in \(202{,}599\) latent vectors, each of dimension \(128\). We then compute the empirical mean and empirical covariance matrix of these latent codes.

Interestingly, we observe that the sample covariance matrix is approximately diagonal, as shown in~\Cref{fig:latent_cov_matrix}, indicating that the learned latent space exhibits near-independence across dimensions. However, the diagonal entries are not uniformly equal to \(1\), implying that the latent distribution is not exactly standard Gaussian \(\mathcal{N}(\bm{0}, \bm{I})\). Instead, the variances differ across dimensions. We extract the diagonal entries of the empirical covariance matrix and use them as target variances for our subsequent trajectory distillation experiments. Specifically, we consider the target latent distribution to be \(\mathcal{N}\bigl(\bm{0}, \mathrm{Diag}(\lambda_1, \lambda_2, \dotsc, \lambda_{128})\bigr)\), where \(\lambda_i\) denotes the extracted variance for the \(i\)th latent dimension. This empirical distribution serves as a data-driven alternative to the conventional isotropic prior.

We apply different diffusion trajectory distillation strategies to generate new latent codes. Working in the latent space not only reduces the computational cost but also aligns with recent advances in generative modeling, where diffusion models are increasingly applied in learned latent spaces for efficiency and improved sample quality~\citep{rombach2022high}. Our setup mirrors this practice by distilling trajectories directly in the latent space of the pretrained MSSIMVAE.

\paragraph{Results}
For each trajectory distillation strategy, we compare the generated latent codes to those produced by applying the surrogate composite teacher operator under the same initial noise. All generated latent codes are decoded into images through the pretrained MSSIMVAE decoder. For reference, the decoded outputs of the surrogate teacher operator are shown in~\Cref{fig:surrogate_decoded}. These serve as the ground-truth targets for evaluating student performance. We then visualize the decoded outputs produced by the student models trained with four different strategies, i.e., vanilla, progressive, consistency, and BOOT, at three training stages: early (epoch \(10\)), intermediate (epochs \(100\) and \(1{,}000\)), and converged (epoch \(10{,}000\)), shown in~\Cref{fig:real_decoded_diff_epoch10,fig:real_decoded_diff_epoch100,fig:real_decoded_diff_epoch1000,fig:real_decoded_diff_epoch10000}, respectively. In each figure, the left column shows the decoded outputs of the student under different strategies, while the right column presents heatmaps of the absolute difference between each student output and the corresponding teacher output. Red indicates higher deviation, while blue indicates lower deviation. The overlaid numbers report the pixel-wise \(L_2\) distance for each case. Since all strategies operate in a shared latent space and aim to approximate the same surrogate teacher operator, it is expected that their decoded outputs appear visually similar. Nonetheless, subtle differences in operator fidelity are clearly reflected in the quantitative metrics. Across all epochs, the sequential BOOT strategy consistently achieves the lowest \(L_2\) error, validating our theoretical prediction that it is optimal when the target covariance structure satisfies \(\lambda_i \leq 1\) in most dimensions.

\begin{figure}[htb]
\centering
\includegraphics[width=0.9\linewidth]{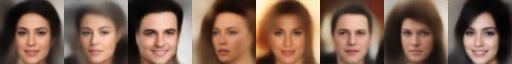}
\caption{Reference outputs generated by the surrogate teacher operator, decoded through the pretrained MSSIMVAE. These images serve as ground-truth targets for evaluating the outputs of different distillation strategies.}
\label{fig:surrogate_decoded}
\end{figure}

\begin{figure}[htb]
\centering
\begin{minipage}[t]{0.47\linewidth}
\vspace{5pt}
\includegraphics[width=\linewidth]{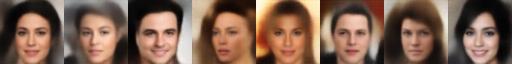}\\[0.2em]
\includegraphics[width=\linewidth]{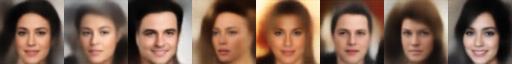}\\[0.2em]
\includegraphics[width=\linewidth]{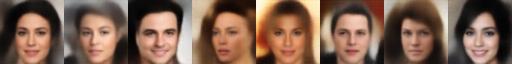}\\[0.2em]
\includegraphics[width=\linewidth]{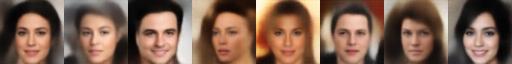}
\end{minipage}
\hfill
\begin{minipage}[t]{0.5\linewidth}
\vspace{0pt}
\includegraphics[width=\linewidth]{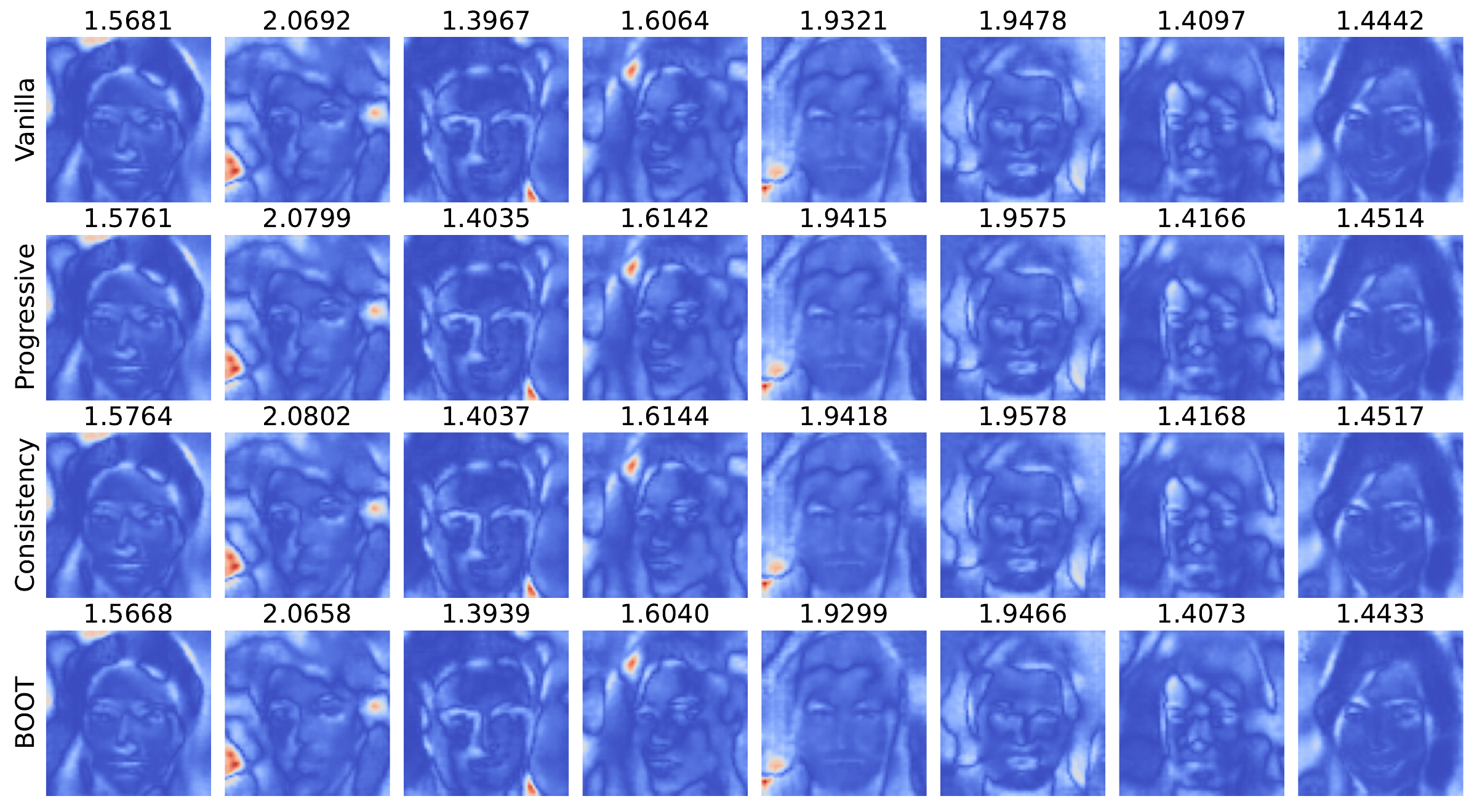}
\end{minipage}
\caption{Comparison of decoded outputs from different distillation strategies at training epoch \(10\). The visualization format is identical to~\Cref{fig:real_decoded_diff_epoch10000}.}
\label{fig:real_decoded_diff_epoch10}
\end{figure}

\begin{figure}[htb]
\centering
\begin{minipage}[t]{0.47\linewidth}
\vspace{5pt}
\includegraphics[width=\linewidth]{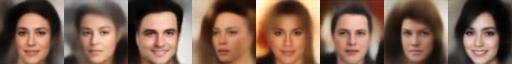}\\[0.2em]
\includegraphics[width=\linewidth]{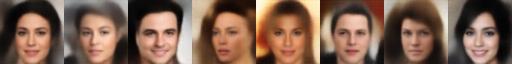}\\[0.2em]
\includegraphics[width=\linewidth]{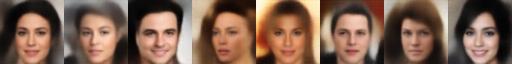}\\[0.2em]
\includegraphics[width=\linewidth]{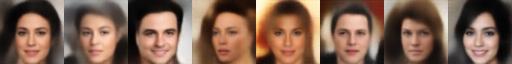}
\end{minipage}
\hfill
\begin{minipage}[t]{0.5\linewidth}
\vspace{0pt}
\includegraphics[width=\linewidth]{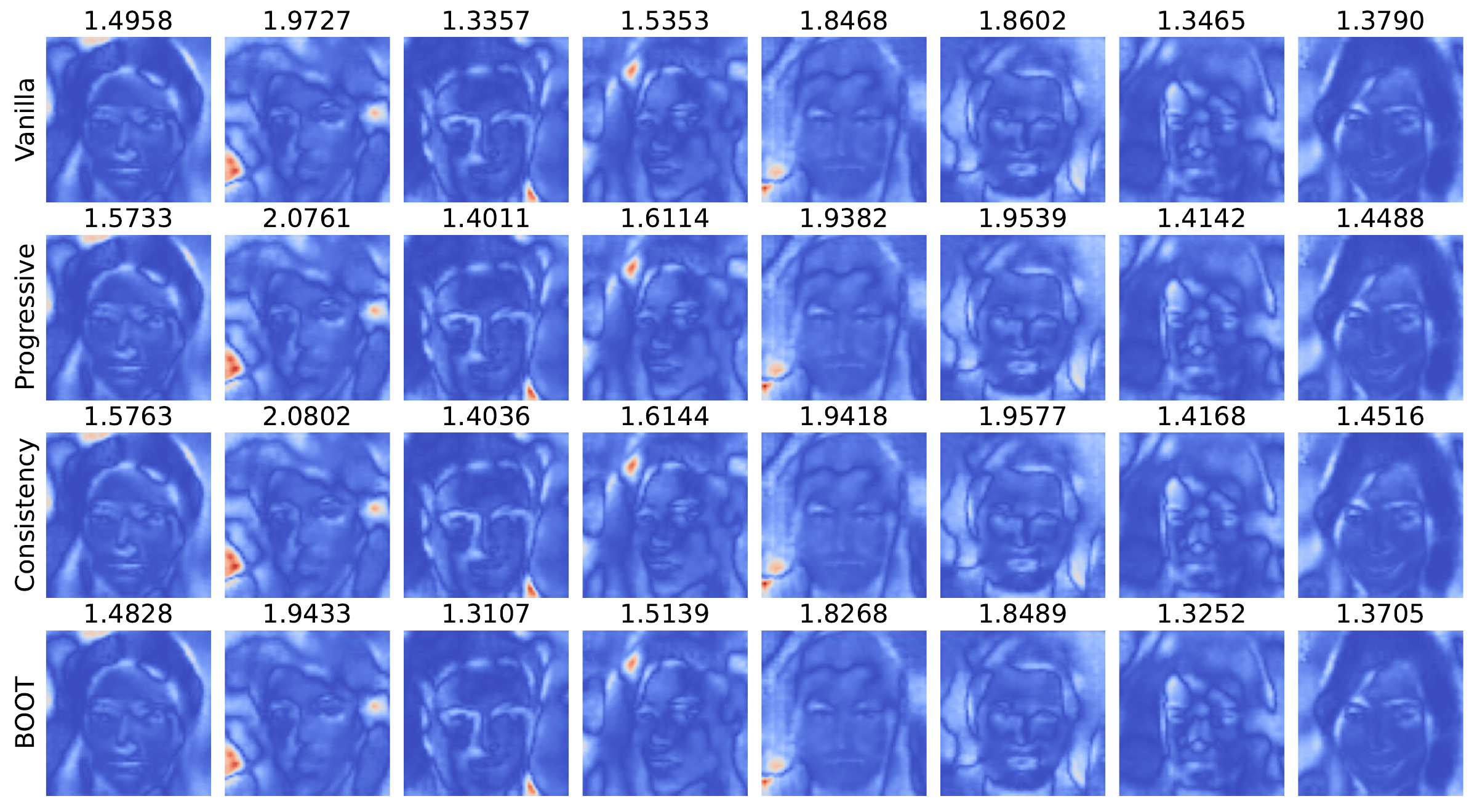}
\end{minipage}
\caption{Comparison of decoded outputs from different distillation strategies at training epoch \(100\). The visualization format is identical to~\Cref{fig:real_decoded_diff_epoch10000}.}
\label{fig:real_decoded_diff_epoch100}
\end{figure}

\begin{figure}[H]
\centering
\begin{minipage}[t]{0.47\linewidth}
\vspace{5pt}
\includegraphics[width=\linewidth]{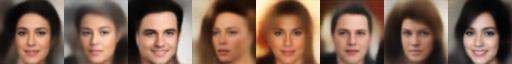}\\[0.2em]
\includegraphics[width=\linewidth]{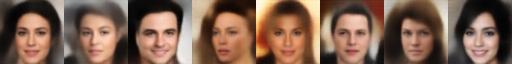}\\[0.2em]
\includegraphics[width=\linewidth]{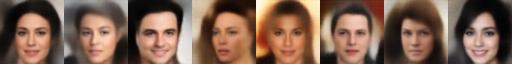}\\[0.2em]
\includegraphics[width=\linewidth]{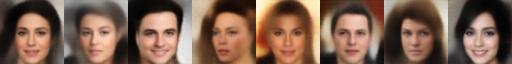}
\end{minipage}
\hfill
\begin{minipage}[t]{0.5\linewidth}
\vspace{0pt}
\includegraphics[width=\linewidth]{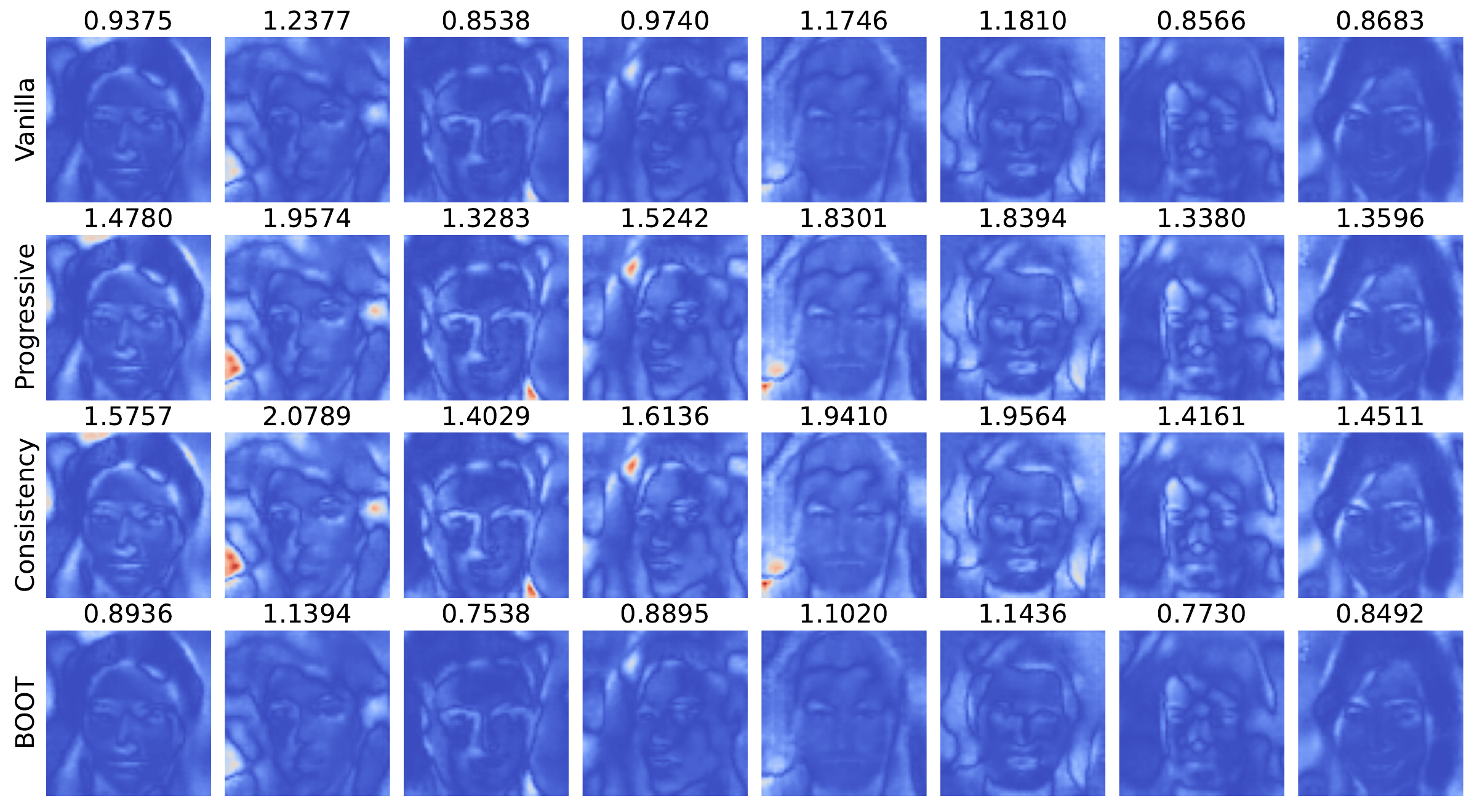}
\end{minipage}
\caption{Comparison of decoded outputs from different distillation strategies at training epoch \(1{,}000\). The visualization format is identical to~\Cref{fig:real_decoded_diff_epoch10000}.}
\label{fig:real_decoded_diff_epoch1000}
\end{figure}

\subsection{\rev{Additional results on CIFAR-10}}
\label{subsec:additional_results_on_cifar10}

\paragraph{\rev{Experimental setup}}
\rev{We perform experiments directly in the pixel space of CIFAR-10 using a pretrained DDIM teacher. CIFAR-10 contains \(32\times 32\) color images from \(10\) semantic classes and exhibits a highly multimodal data structure in pixel space, making it a natural benchmark for the Gaussian-mixture perspective developed in~\Cref{sec:approximation_error_and_error_propagation_in_the_gaussian_mixture_regime}.}

\rev{The teacher model is an \(x_0\)-predicting U-Net~\citep{ronneberger2015u} trained on CIFAR-10 with the standard denoising objective. The network consists of residual blocks with sinusoidal time embeddings, together with downsampling and upsampling layers in a compact U-Net architecture. We use a cosine noise schedule with \(T=64\) diffusion steps, parameterized by \(\bm{x}_t = \alpha_t \bm{x}_0 + \sigma_t \bm{\varepsilon}\), and wrap the trained \(x_0\)-prediction network into a deterministic DDIM sampler, so that the one-step map is given by \( \bm{f}_{\bm{\eta}}(\bm{x}_t,t)=\bm{x}_{t-1} \). Images are normalized to \([-1,1]\), and the teacher is trained with AdamW~\citep{loshchilov2017decoupled} using learning rate \(2\times 10^{-4}\), batch size \(256\), gradient clipping \(1.0\), and exponential moving average with decay \(0.999\). The default training length is \(100{,}000\) parameter updates.}

\rev{For vanilla distillation, we initialize the student from the pretrained teacher checkpoint and keep the same DDIM-wrapped \(x_0\)-predicting U-Net architecture. At each update, we sample \( \bm{z}_T \sim \mathcal{N}(\bm{0},\bm{I}) \), compute the teacher target \( \mathcal{T}_T(\bm{z}_T) \) by a full deterministic DDIM rollout from time \(T\) to \(0\), and train the student output \( \bm{f}^{\mathrm{st}}(\bm{z}_T,T) \) to match this target with a mean squared error loss. Optimization uses AdamW for \(10{,}000\) updates with learning rate \(2\times 10^{-4}\), batch size \(64\), gradient clipping \(1.0\), and exponential moving average with decay \(0.999\).}

\rev{For progressive distillation, we recursively halve the teacher trajectory until only one student step is reached. At each level, every merged step is trained as an independent student initialized from the pretrained teacher checkpoint. Given a current sequence of step operators, we form the target for each adjacent pair by composing the corresponding two frozen operators, using the original teacher at the first level and the previously distilled step models at later levels. If a merged step starts from diffusion index \(t\), we sample clean images \(\bm{x}_0\) from CIFAR-10, generate \(\bm{x}_t=\alpha_t\bm{x}_0+\sigma_t\bm{\varepsilon}\), evaluate the frozen two-step composition, and train \( \bm{f}^{\mathrm{st}}(\bm{x}_t,t) \) against this target with a mean squared error loss. Each merged step is trained for \(10{,}000\) updates with AdamW under the same optimizer settings as above.}

\rev{For sequential BOOT, we fix the input at the terminal Gaussian noise \( \bm{x}_T \sim \mathcal{N}(\bm{0},\bm{I}) \) and train stagewise students that progressively predict cleaner states. Training proceeds backward in diffusion time. At stage \(t\), the student is trained to approximate the teacher state one level cleaner than \(t\) from the same terminal input \( \bm{x}_T \). For the first stage, the target is obtained directly from the teacher by rolling out from \( \bm{x}_T \). For later stages, the target is bootstrapped by first applying the previous BOOT stage model to \( \bm{x}_T \) and then applying one additional frozen teacher step. The student is then trained against this stage target with a mean squared error loss for \(10{,}000\) updates using the same optimizer settings.}

\rev{For sequential consistency, we train stagewise students that progressively extend the direct mapping from an intermediate noisy state to the clean sample. Training proceeds in increasing diffusion time. At stage \(s\), corresponding to model index \(t=s-1\), we sample clean images \(\bm{x}_0\) from CIFAR-10, generate \(\bm{x}_t=\alpha_t\bm{x}_0+\sigma_t\bm{\varepsilon}\), and train \( \bm{f}^{\mathrm{st}}(\bm{x}_t,t) \) against a recursively defined target. For the first nontrivial stage \(s=2\), the target is the full teacher rollout from \(\bm{x}_t\) to \(\bm{x}_0\). For later stages, the target is obtained by applying one frozen teacher step to \(\bm{x}_t\) and then passing the result to the previous consistency-stage model, thereby extending the horizon by one step at each stage. Again, optimization uses a mean squared error loss for \(10{,}000\) updates with the same AdamW configuration.}

\rev{For quantitative evaluation, we compute Fr\'echet Inception Distance (FID)~\citep{heusel2017gans} from \(50{,}000\) generated samples for each method. We extract \(2048\)-dimensional Inception-V3 features, estimate their empirical mean and covariance, and compute the corresponding Fr\'echet distance.}
\end{document}